\documentclass{article}[11pt]
 
\usepackage[utf8]{inputenc}
\usepackage{amsthm,amsmath,amsfonts}
\usepackage{fullpage}
\usepackage{amsmath,amsfonts,amsthm,amssymb,mathrsfs,dsfont,bbm} 
\usepackage{mathtools}
\usepackage{bbm}
\usepackage{hyperref}
\usepackage{subcaption}
\usepackage{verbatim}

 \setcounter{MaxMatrixCols}{20}

\newcommand{\Reff}{R_{\text{eff}}}


\newcommand{\bE}{\mathbb{E}}
\newcommand{\R}{\mathbb{R}}


\newcommand{\Cov}{\mathrm{Cov}}
\newcommand{\Var}{\mathrm{Var}}
\newcommand{\E}{\bE}      

\DeclareMathOperator{\diag}{diag}
\DeclareMathOperator{\Tr}{Tr}

\DeclareMathOperator{\supp}{supp}

\DeclareMathOperator{\Proj}{Proj}

\usepackage{xcolor}
\newcommand{\fnote}[1]{\textcolor{red}{Fred: #1}}

\newcommand{\rnote}[1]{\textcolor{magenta}{Raghu: #1}}

\newcommand{\ignore}[1]{}

\newtheorem{theorem}{Theorem}

\newtheorem{lemma}{Lemma}
\newtheorem{fact}{Fact}
\theoremstyle{definition}
\newtheorem{defn}{Definition}
\newtheorem{remark}{Remark}
\newtheorem{example}{Example}

\title{Learning Some Popular Gaussian Graphical Models \\ without Condition Number Bounds}
\author{Jonathan Kelner\thanks{Department of Mathematics, Massachusetts Institute of Technology. Email: {\tt kelner@mit.edu}. This work was partially supported by NSF Award CCF-1565235} \\ MIT \and Frederic Koehler\thanks{Department of Mathematics, Massachusetts Institute of Technology. Email: {\tt fkoehler@mit.edu}. This work was supported in part by Ankur Moitra's ONR Young Investigator Award.} \\ MIT \and Raghu Meka\thanks{Department of Computer Science, UCLA. Email: {\tt raghum@cs.ucla.edu}. This work was supported by NSF CAREER Award CCF-1553605.} \\ UCLA \and Ankur Moitra\thanks{Department of Mathematics, Massachusetts Institute of Technology. Email: {\tt moitra@mit.edu}. This work was supported in part by NSF CAREER Award CCF-1453261, NSF Large CCF-1565235, a David and Lucile Packard Fellowship and an ONR Young Investigator Award.} \\ MIT}
\date{}

\begin{document}

\maketitle

\begin{abstract}
\normalsize
Gaussian Graphical Models (GGMs) have wide-ranging applications in machine learning and the natural and social sciences. In most of the settings in which they are applied, the number of observed samples is much smaller than the dimension and they are assumed to be sparse. While there are a variety of algorithms (e.g. Graphical Lasso, CLIME) that provably recover the graph structure with a logarithmic number of samples, they assume various conditions that require the precision matrix to be in some sense well-conditioned. 

Here we give the first fixed polynomial-time algorithms for learning attractive GGMs and walk-summable GGMs with a logarithmic number of samples without any such assumptions. In particular, our algorithms can tolerate strong dependencies among the variables. Our result for structure recovery in walk-summable GGMs is derived from a more general result for efficient sparse linear regression in walk-summable models without any norm dependencies.
We complement our results with experiments showing that many existing algorithms fail even in some simple settings where there are long dependency chains. Our algorithms do not.

\end{abstract}

\thispagestyle{empty}

\newpage

\setcounter{page}{1}

\section{Introduction}
\subsection{Background}

A Gaussian Graphical Model (GGM) in $n$ dimensions is a probability distribution with density
\[ p(X = x) = \frac{1}{\sqrt{(2\pi)^n \det \Sigma}} \exp\left(-(x - \mu)^T \Sigma^{-1} (x - \mu)/2\right)\]
where $\mu$ is the mean and $\Sigma$ is the covariance matrix. In other words, it is just a multivariate Gaussian. 
The important point is that their conditional independence structure is encoded by $\Theta = \Sigma^{-1}$, which is called the precision matrix and which we explain next. We can associate a graph to $\Theta$ which connects two nodes $i,j$ when $\Theta_{ij} \ne 0$. Now each node $i$ only interacts directly with its neighbors in the sense that $X_i$ is conditionally independent of every other node in the graphical model given its neighbors $(X_j)_{i \sim j}$. This is known as the Markov property, and is what led Dempster \cite{dempster1972covariance} to initiate the study of learning GGMs in the 1970s. An important measure of complexity for a GGM is its sparsity $d$, which measures the largest number of non-zero off-diagonal entries in $\Theta$ in any row.


GGMs have wide-ranging applications in machine learning and the natural and social sciences where they are one of the most popular ways to model  statistical relationships between observed variables. For example, they are used to infer the structure of gene regulatory networks (see e.g. \cite{wille2004sparse,menendez2010gene,schafer2005learning,basso2005reverse}) and to learn functional brain connectivity networks  \cite{huang2010learning,varoquaux2010brain}. In most of the settings in which they are applied, the number of observed samples is much smaller than the dimension. This means it is only possible to learn the GGM in a meaningful sense under some sort of sparsity assumption.

From a theoretical standpoint, there is vast literature on learning sparse GGMs under various assumptions. Many approaches focus on {\em sparsistency} \--- where the goal is to learn the sparsity pattern of $\Theta$ assuming some sort of lower bound on the strength of non-zero interactions. This is a natural objective because once the sparsity pattern is known, estimating the entries of $\Theta$ is straightforward (e.g. one can use ordinary least squares).
A popular approach is the Graphical Lasso\footnote{We note that \cite{friedman2008sparse} did not introduce this objective (see discussion there), but rather an \emph{optimization procedure} used to maximize it, and Graphical Lasso technically refers to this specific optimization procedure.} \cite{friedman2008sparse} which solves the following convex program:
$$ \max_{\Theta \succ 0} \log \det (\Theta) - \langle \widehat{\Sigma}, \Theta \rangle - \lambda \|\Theta\|_1$$
where $\widehat{\Sigma}$ is the empirical covariance matrix and $\|\Theta\|_1$ is the $\ell_1$ norm of the matrix as a vector. Since we are interested in settings where the number of samples is much smaller than the dimension, $\widehat{\Sigma}$ is a somewhat crude approximation to the true covariance. However, it is a good estimate when restricted to sparse directions. 
It is known that if $\Theta$ satisfies various conditions, which typically include an assumption similar to or stronger than the restricted eigenvalue (RE) condition (a condition which, in particular, lower bounds the smallest eigenvalue of any $2k \times 2k$ principal submatrix) then Graphical Lasso and related $\ell_1$ methods succeed in recovering the graph structure (see e.g. \cite{meinshausen2006high,zhou2009adaptive}). 
For the Graphical Lasso itself, under some incoherence assumptions on the precision matrix (beyond RE), it has been shown \cite{ravikumar2011high} that the sparsity pattern of the precision matrix can be accurately recovered from $O((1/\alpha^2)d^2\log(n))$ samples where $\alpha$ is an incoherence parameter  (omitting the dependence on some additional terms, and assuming the non-zero entries are bounded away from 0 and the variances are $O(1)$).
Yet another popular approach is the CLIME estimator which solves the following linear program
$$\min_{\Theta} \|\Theta\|_1 \mbox{ s.t. } \| \widehat{\Sigma} \Theta - I\|_\infty \leq \lambda$$
The analysis of CLIME assumes a bound $M$ on the maximum $\ell_1$-norm of any row of the inverse covariance (given that the $X_i$'s are standardized to unit variance). This is also a type of condition number assumption, although of a different nature than RE. It succeeds at structure recovery when given
$$m \gtrsim  C M^4 \log n$$
samples, again assuming the $\Theta_{ij}$ are either 0 or bounded away from 0.

While these works show that sparse GGMs can be estimated when the number of samples is logarithmic in the dimension, there is an important caveat in their guarantees. They need to assume that $\Theta$ is in some sense well-conditioned. However in the high-dimensional setting, this is a strong assumption which is violated by simple and natural models (e.g. a graphical model on a path), where these bounds turn out to be polynomial in the dimension. Furthermore, it is a fragile assumption that behaves poorly even under benign operations like rescaling the variables.
In this paper, we study some popular models of GGMs and show how to learn them efficiently in the low-sample regime, even when they are ill-conditioned. We complement our results with examples that
break both previous algorithms and our own algorithms for learning general sparse GGMs. This leaves open the question of whether some sparse GGMs may be computationally hard to learn with so few samples.
Finally, we show experimentally that popular approaches, like the Graphical Lasso and CLIME, do in fact need a polynomial in $n$ number of samples even for simple cases like discrete Gaussian Free Fields (GFFs). This appears to be the case whenever the corresponding graphs have large effective resistances. 

Our work was motivated by a recent paper of Misra, Vuffray and Lokhov \cite{misra18} which studied the question of how many samples are needed {\em information-theoretically} to learn sparse GGMs in the ill-conditioned case. They required only the following natural non-degeneracy condition: that for every non-zero entry $\Theta_{i,j}$ we have
$$ \kappa \leq \frac{|\Theta_{i,j}|}{\sqrt{\Theta_{ii} \Theta_{jj}}}$$
Intuitively, this condition requires that any non-zero interaction between $X_i$ and $X_j$ must be non-negligibly large compared to the geometric mean of their conditional variances, when we condition on all the other variables. Crucially, this does \emph{not} imply any sort of condition number bound, because it allows for the random variables to be strongly correlated (see e.g. the simple example (5) in \cite{misra18}).
They showed that it is possible to estimate the graph structure with
$$m \ge C \frac{d}{\kappa^2} \log n$$
samples. Thus, being well-conditioned is in fact not a prerequisite for being learnable with a logarithmic number of samples. On the other hand, the result of \cite{wang2010information} gives an information-theoretic lower bound\footnote{A subtle point arises when interpreting this bound, because $d$ and $\kappa$ are closely related quantities (see e.g. Lemma~\ref{lem:d-bounded-by-kappa} below) since the matrix $\Theta$ must be PSD. In the lower bound constructions of \cite{wang2010information} they have $d = O(1/\kappa)$ and the term dominating their bound depends only on $\kappa$.}
of $\Omega((1/\kappa^2)\log n)$  on the sample complexity for structure recovery.
So the upper bound of \cite{misra18} is not far from the lower bound of \cite{wang2010information} (it is unknown which of the lower bound or upper bound is loose).  

However, their algorithm runs in time $n^{O(d)}$, making it difficult to run except for small instances (see Section \ref{sec:simulations}). This is because their algorithm is based on a reduction to a sequence of sparse linear regression problems that can all be ill-conditioned. It is believed that such problems exhibit wide gaps between what is possible information theoretically and what is possible efficiently. For instance, it is known that the general \emph{sparse linear regression} problem (under fixed design) is $\mathbf{NP}$-hard in the proper learning setting\footnote{Where the algorithm is required to output a $d$-sparse estimator.} (see \cite{natarajan1995sparse, zhang2014lower}). 
Misra et al. solve the sparse linear regression problems using exhaustive search over $d$-size neighborhoods (hence the $n^{O(d)}$ time). This leads to the main question we study: 

Can we get efficient and practical algorithms for learning GGMs (run-time $\ll n^{o(d)}$) in some natural, but still ill-conditioned, cases? 

\subsection{Our Results}

We show that for some popular and widely-used classes of GGMs, it is possible to achieve both logarithmic sample complexity (the truly high-dimensional setting) and computational efficiency, even when $\Theta$ is ill-conditioned.  

\subsubsection*{Attractive GGMs}

First we study the class of attractive GGMs, in which the off-diagonal entries of $\Theta$ are non-positive. In terms of the correlation structure, this means that the variables are positively associated. A well-studied special case is the discrete Gaussian Free Field (GFF) where $\Theta$ is a principal submatrix of a graph Laplacian (i.e. we set some non-empty set of reference variables to zero as their boundary condition). This is a natural model because the Laplacian encourages ``smoothness'' with respect to the graph structure \---- if we think of the samples as random functions on the graph, then by integration by parts we see the log-likelihood of drawing a function is proportional to the $L^2$ norm of its discrete gradient \cite{sheffield2007gaussian}.
The GFF has a number of applications in active and semi-supervised learning (see \cite{zhu2003combining,zhu2003semi,mgs2013}, and more generally in the literature on Gaussian processes in machine learning \cite{rasmussen2003gaussian}). GFFs also have  important connections to random walks (for example, through Dynkin's second isomorphism theorem --- see \cite{ding2011cover}), and in the lattice case its scaling limit is an important generalization of Brownian motion that plays a key role in statistical physics and random surface theory \cite{ friedli2017statistical,sheffield2007gaussian}.
In the  GFF setting, $\Theta$ will be ill-conditioned whenever some pair of vertices have large \emph{effective resistance} between them (e.g., paths, rectangular grids, etc.,). This for example happens whenever there are nested sparse cuts which when collapsed lead to a long path resulting in variables having large (polynomial in $n$) variance.

We show experimentally (in Section~\ref{sec:simulations}) that simple examples like the union of a long path and some small cliques do indeed foil the Graphical Lasso and other popular methods. The fundamental issue is that none of the theoretical guarantees for Graphical Lasso and similar algorithms make sense for a long path. Intuitively, this is because GFFs on a path exhibit long-range correlations that violate the assumptions used in current works. 
This analysis reveals a blind spot of the Graphical Lasso: It performs poorly in the presence of long dependency chains, which can easily lead to missing some important statistical relationships in applications. 

We show that for attractive GGMs the conditional variance of some variable $X_i$ when we condition on a set $X_S$ is a monotonically decreasing and supermodular function of $S$. This fact was previously observed in the GFF setting (independently in \cite{mgs2013,mahalanabis2012subset}). We give a new, short proof of this fact using just basic linear algebra. 
We remark that Bresler et al.\@ \cite{learning-rbm} also used supermodularity, but of the influence function, to learn ferromagnetic Ising models with latent variables such as ferromagnetic RBMs; also, the use of submodularity for subset selection in linear regression appeared in \cite{das2011submodular}.
The supermodularity result allows us to give a simple greedy algorithm (with pruning) for learning the graph structure in the attractive case. In the literature, this is called a \emph{forward-backward method} \cite{lauritzen1996graphical}.

\begin{theorem}[Informal version of Theorem~\ref{thm:greedy-ferromagnetic}]\label{thm:greedy-ferromagnetic-informal}
Fix a $\kappa$-nondegenerate attractive GGM.
The \textsc{GreedyPrune} algorithm runs in polynomial time and returns the true neighborhood of every node $i$ with high probability with $m \geq C(d/\kappa^2)\log(1/\kappa)\log(n)$ samples, where $C$ is a universal constant. 
\end{theorem}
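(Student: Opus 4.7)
The plan is to analyze \textsc{GreedyPrune} in two phases, for one fixed node $i$ at a time, combining the supermodularity of the conditional variance (already proved for attractive GGMs in the paper) with $\kappa$-nondegeneracy. Write $v_S := \Var(X_i \mid X_S)$, and let $v^* := 1/\Theta_{ii} = v_{N(i)}$ be the minimum value, attained on the true neighborhood by the Markov property. Since $-v_S$ is monotone nondecreasing and submodular, the classical greedy analysis gives, after $t$ rounds of picking the index maximizing the marginal variance decrease,
$$ v_{S_t} - v^* \;\le\; (1 - 1/d)^t\,(v_\emptyset - v^*). $$
So running for $T = O((d/\kappa^2)\log(1/\kappa))$ rounds drives the gap below any preset threshold $\tau \asymp \kappa^2 v^*$.

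The next step is to show that once $v_{S_T} - v^* < \tau$, necessarily $N(i) \subseteq S_T$. For a Gaussian, $v_S - v_{S\cup\{j\}} = \rho_{ij\mid S}^2\, v_S$, where $\rho_{ij\mid S}$ is the partial correlation of $X_i,X_j$ given $X_S$. The key lemma I would prove is that in an attractive GGM, if $j \in N(i)\setminus S$, then $|\rho_{ij\mid S}| \gtrsim \kappa$ \emph{uniformly in} $S$. The intuition is that the conditional precision matrix on the complement of $S$ is obtained by a Schur-complement operation which preserves attractiveness and preserves (up to diagonal rescaling) the entries corresponding to the nonzero Markov edges; combined with $\kappa$-nondegeneracy this pins down $|\rho_{ij\mid S}|$. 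Given such a bound, $v_S - v_{S\cup\{j\}} \gtrsim \kappa^2 v^*$ whenever a true neighbor $j$ is missing, so the greedy gap bound forces $N(i) \subseteq S_T$.

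The pruning phase then removes the spurious non-neighbors from $S_T$. For each $k \in S_T$ we compute $\widehat v_{S_T\setminus\{k\}} - \widehat v_{S_T}$ and discard $k$ if the value is below $\tau/2$. If $k\in N(i)$ the same partial-correlation bound applied with conditioning set $S_T\setminus\{k\}$ gives a population gap of at least $\tau$, so $k$ is retained; if $k\notin N(i)$, then $N(i)\subseteq S_T\setminus\{k\}$ and Markov gives population gap $0$, so $k$ is pruned. On the sample-complexity side, every $v_S$ we encounter has $|S|\le T$, and the OLS residual variance estimates $v_S$ with multiplicative error $\eta$ from $m \gtrsim (|S| + \log(1/\delta))/\eta^2$ samples by standard Gaussian/chi-squared concentration. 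Critically, I would union-bound only over the $O(nT)$ subsets actually visited by the greedy (not all $\binom{n}{T}$), and then over the $n$ choices of the root node $i$; setting $\eta = c\kappa^2$ yields the claimed $m \gtrsim (d/\kappa^2)\log(1/\kappa)\log n$.

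The main obstacle I expect is the uniform partial-correlation lower bound $|\rho_{ij\mid S}|\gtrsim \kappa$ for $j\in N(i)$. The inequality must hold for \emph{arbitrary} $S$, not just $S = [n]\setminus\{i,j\}$ where $\kappa$-nondegeneracy applies directly, and naive arguments are not scale-invariant: in the ill-conditioned regime $v_S$ and $v^*$ can both be enormous or tiny, and only the ratio admits clean control. Handling this is where attractiveness is essential---for general GGMs conditioning can annihilate a partial correlation between true neighbors, which is precisely why the same algorithm fails in the unrestricted case. A secondary concern is making the sample-complexity union bound truly $\log n$ rather than $d\log n$, which requires that only the polynomially many subsets visited along the greedy trajectory appear in the bound; this is the reason the greedy analysis rather than an exhaustive search over size-$d$ neighborhoods is important not only for runtime but also for sample complexity.
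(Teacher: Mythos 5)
Your overall architecture (supermodularity drives greedy progress; $\kappa$-nondegeneracy certifies completeness; prune the spurious additions) matches the paper, but there are two concrete gaps that break the claimed bounds. First, your round count is wrong: the classical greedy recursion gives $v_{S_t}-v^* \le (1-1/d)^t\,(v_\emptyset - v^*)$, and in ill-conditioned attractive models $v_\emptyset - v^*$ can be $\mathrm{poly}(n)\cdot v^*$ (e.g.\ the Gaussian random walk, where $\Var(X_i)=\Theta(n)$ while $1/\Theta_{ii}=\Theta(1)$), so driving the gap below $\kappa^2 v^*$ requires $T=\Omega(d\log n)$ rounds, not $O(d\log(1/\kappa))$ --- which inflates the sample complexity to $d^2\log(1/\kappa)\log^2 n/\kappa^2$ or worse. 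The paper closes exactly this hole with a separate structural result (Lemma~\ref{lem:bound-after-conditioning}, proved via the SDD-to-Laplacian reduction and effective resistances): some neighbor $j$ already satisfies $\Var(X_i\mid X_j)\le 4d/\Theta_{ii}$, so after the \emph{first} greedy step the gap is $O(d\,v^*)$ and only $O(d\log(d/\kappa^2))$ further rounds are needed. Your proposal has no substitute for this, and your candidate key lemma ($|\rho_{ij\mid S}|\gtrsim\kappa$ uniformly in $S$) is both unproven and stronger than necessary; what supermodularity actually yields (and what the paper uses) is the absolute decrement bound $v_S - v_{S\cup\{j\}}\ge \kappa^2/\Theta_{ii}$ for a missing neighbor $j$, i.e.\ a lower bound on $\rho_{ij\mid S}^2\cdot v_S$ rather than on $\rho_{ij\mid S}^2$ itself.

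Second, your statistical plan does not deliver the stated $1/\kappa^2$. Estimating each $v_S$ to multiplicative accuracy $\eta=c\kappa^2$ and differencing costs $m\gtrsim 1/\eta^2 = 1/\kappa^4$; the paper instead estimates the \emph{normalized decrement} directly through the OLS coefficient (Lemmas~\ref{lem:t-equals-f}, \ref{lem:estimating-variance-decrement} and \ref{lem:estimating-variance-decrement-v}, essentially a non-central $F$-statistic), which concentrates at the fast $1/m$ rate and is what saves the factor $1/\kappa^2$. Relatedly, union-bounding ``only over the subsets actually visited'' is circular, since the visited sets are data-dependent; the paper union-bounds over all $n^T$ candidate sets, which is affordable because the target bound already carries the factor $T\log n = d\log(1/\kappa)\log n$. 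Your pruning phase and the use of Lemma~\ref{lem:kappa-variance} to certify completeness once $v_{S_T}-v^*<\kappa^2 v^*$ are fine and agree with the paper.
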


\noindent 
Our algorithm matches the sample complexity of the previous best (inefficient) algorithm for this setting \cite{misra18}
and obtains the optimal dependence on $\kappa$ for fixed $d$ (up to $\log(1/\kappa)$ factor; see discussion after Theorem~\ref{thm:search-and-validate-informal}).
To achieve this efficient sample complexity, we carefully analyze the alignment between the true decrement of conditional variance in one step, $\Var(X_i | X_S) - \Var(X_i | X_{S \cup \{j\}})$ and the noisy empirical decrement $\widehat{\Var}(X_i | X_S) - \widehat{\Var}(X_i | X_{S \cup \{j\}})$. A hurdle is that we need to control the differences $\widehat{\Var}(X_i | X_S) - \widehat{\Var}(X_i | X_{S \cup \{j\}})$ without assuming too much accuracy on the estimates $\widehat{\Var}(X_i | X_S)$ themselves. We do so we use matrix concentration and tools for analyzing the OLS estimator related to classical regression tests \cite{keener2011theoretical}. To complete the analysis, we need a new structural result for attractive GGMs which bounds the conditional variance after the first step of greedy, so that only a bounded number of iterations of greedy are required to learn a superset of the neighborhood. We prove this by reducing to the setting of discrete GFFs and using an electrical argument based on effective resistances.  

Prior work on learning attractive GGMs has focused on the Maximum Likelihood Estimator (MLE). This was shown to exist and be unique using connections to total positivity in \cite{slawski2015estimation,lauritzen2017maximum}. But we are not aware of
any sample complexity guarantees in the context of structure learning. It also is likely broken by the same examples (see Section~\ref{sec:simulations}) as the graphical lasso (since the constrained MLE is the same as the Graphical Lasso with zero regularization and a non-negativity constraint). 

\paragraph{Information-theoretic bounds.}
The previous literature leaves open the question of the information-theoretically optimal sample complexity for learning attractive GGMs. We resolve this question by demonstrating that a simple estimator based on $\ell_0$-constrained least squares, which we refer to as \textsc{SearchAndValidate}, achieves sample complexity matching the information-theoretic lower bounds of \cite{wang2010information} up to constants:
\begin{theorem}[Informal version of Theorem~\ref{thm:search-and-validate}]\label{thm:search-and-validate-informal}
  In a $\kappa$-nondegenerate attractive GGM, as long as $m = \Omega((1/\kappa^2)\log(n))$, with high probability Algorithm~\textsc{SearchAndValidate} returns the true neighborhood of every node $i$. This algorithm runs in time $O(n^{d + 1})$.
\end{theorem}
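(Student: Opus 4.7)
}
The natural algorithm to analyze is: for each node $i$, enumerate all subsets $S\subseteq V\setminus\{i\}$ with $|S|\leq d$, compute the OLS residual variance $\widehat{\Var}(X_i\mid X_S)$, and return the subset of minimum cardinality achieving the minimum. The running time is $n\cdot\binom{n-1}{\leq d}=O(n^{d+1})$, matching the claim. The proof will establish a large population-level gap between the true neighborhood and any other candidate, and then use sharp concentration to lift this gap to the empirical residuals with only $m=\Omega((1/\kappa^2)\log n)$ samples.

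\paragraph{Step 1: Population gap.}
By the Markov property, $\Var(X_i\mid X_S)=1/\Theta_{ii}$ whenever $S\supseteq N(i)$, and this is the global minimum. Suppose $|S|\leq d$ and $S\not\supseteq N(i)$, and pick any missing neighbor $j\in N(i)\setminus S$. The supermodularity of $S\mapsto\Var(X_i\mid X_S)$ in attractive GGMs (cited from the greedy analysis) says the marginal decrement from adding $j$ is \emph{smallest} on the largest conditioning set, so
\[
\Var(X_i\mid X_S)-\Var(X_i\mid X_{S\cup\{j\}})\;\geq\;\Var(X_i\mid X_{V\setminus\{i,j\}})-\Var(X_i\mid X_{V\setminus\{i\}})\;=\;\frac{\Theta_{ij}^2}{\Theta_{ii}\bigl(\Theta_{ii}\Theta_{jj}-\Theta_{ij}^2\bigr)}.
\]
Applying $\kappa$-nondegeneracy, the right-hand side is at least $\kappa^2/\bigl((1-\kappa^2)\Theta_{ii}\bigr)$. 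Since $\Var(X_i\mid X_{S\cup\{j\}})\geq 1/\Theta_{ii}$, this yields the multiplicative gap
\[
\Var(X_i\mid X_S)\;\geq\;(1+\Omega(\kappa^2))\cdot\Var(X_i\mid X_{N(i)}).
\]

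\paragraph{Step 2: Empirical gap via residual sum-of-squares decomposition.}
Let $S^{*}=N(i)$ and, for any candidate $S$, let $R_S=m\cdot\widehat{\Var}(X_i\mid X_S)$ be the residual sum of squares. The key identity is
\[
R_S-R_{S^{*}}\;=\;(R_S-R_{S\cup S^{*}})\;-\;(R_{S^{*}}-R_{S\cup S^{*}}),
\]
where both terms are non-negative. Conditional on the design matrix, the first term is a non-central $\chi^2$ with $|S^{*}\setminus S|$ degrees of freedom and noncentrality $\lambda=\|(P_{S\cup S^{*}}-P_S)X\beta^{*}\|^2/\sigma_{*}^2$, while the second is a \emph{central} $\chi^2$ with $|S\setminus S^{*}|$ degrees of freedom (because enlarging $S^{*}$ by non-neighbors leaves the true conditional variance unchanged). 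A computation shows $\E[\lambda\mid\text{design}]\gtrsim m\cdot(\Var(X_i\mid X_S)-\Var(X_i\mid X_{S\cup S^{*}}))/\sigma_{*}^2\gtrsim m\kappa^2\cdot|S^{*}\setminus S|$, where the last inequality uses the per-missing-neighbor gap from Step~1 together with supermodularity. Standard Laurent–Massart tails for (non-)central $\chi^2$ then give $\Pr[R_S\leq R_{S^{*}}]\leq\exp(-c\,m\kappa^2\,|S^{*}\setminus S|)$ once $m\kappa^2$ dominates the degrees of freedom.

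\paragraph{Step 3: Union bound and main obstacle.}
To avoid a spurious factor of $d$, I would bucket the union bound by $k=|S^{*}\setminus S|$: the number of candidate subsets missing exactly $k$ neighbors is at most $\binom{d}{k}n^{d-k}\leq n^{d-k}d^{k}$, while the failure probability for each is $\exp(-\Omega(m\kappa^2 k))$; combining across $k\in\{1,\dots,d\}$ and $i\in[n]$ gives overall failure probability $o(1)$ once $m=\Omega((1/\kappa^2)\log n)$. The main obstacle is carrying out Step~2 rigorously when the design is random: one must show the non-central $\chi^2$ concentration holds simultaneously over all exponentially many $S$, which requires either conditioning on the design and applying matrix-concentration to control the conditional law (similar to the OLS tools invoked for the greedy analysis) or a direct Gaussian comparison bypassing the design conditioning. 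Carefully removing the extra factor of $d$ from the union bound using the noncentrality's linear growth in $k$ is the step that makes the bound match the information-theoretic lower bound of \cite{wang2010information}.
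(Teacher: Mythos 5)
Your core argument is the paper's own proof of Lemma~\ref{lem:ferromagnetic-l0-guarantee}: the same Pythagorean decomposition of residual sums of squares through the union $S\cup S^*$, the same identification of the two terms as a noncentral and a central $\chi^2$ via the regression $F$-statistic (Theorem~\ref{thm:noncentral-f-test}), the same linear-in-$k$ lower bound on the noncentrality from attractivity (you get it by iterating supermodularity; the paper gets the equivalent bound $\Var(X_i\mid X_{S_0})\ge(1+k\kappa^2)/\Theta_{ii}$ from Griffith's inequality in Lemma~\ref{lem:kappa-variance-ferromagnetic}), and the same union bound bucketed by the number of missing neighbors. The ``main obstacle'' you flag in Step~2 is resolved in the paper exactly the way you suggest: condition on Wishart concentration (Lemma~\ref{lem:wishart-concentration}) holding simultaneously for all subsets of size at most $2d$, which costs only $O(d\log n+\log(1/\delta))$ samples and is absorbed since $d\le 1/\kappa^2$; conditional on the design, the $F$-statistic distribution is exact, so no Gaussian comparison is needed.

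The one genuine gap is your selection rule. With probability one the empirical residual variance strictly decreases whenever a variable is added, so the minimizer of $\widehat{\Var}(X_i\mid X_S)$ over $|S|\le d$ has cardinality exactly $d$, and ``minimum cardinality achieving the minimum'' does not save you: when the true degree $d_i$ is strictly less than $d$, your algorithm returns a size-$d$ set containing $d-d_i$ spurious non-neighbors, whereas the theorem claims exact neighborhood recovery. Your own analysis only controls candidates that \emph{miss} a true neighbor ($k\ge1$); it says nothing about supersets of $S^*$. The paper handles this by running the $\ell_0$-constrained regression separately for each candidate degree $d'\in\{0,\dots,d\}$ and then identifying the correct $d'$ with a validation step on held-out samples (testing, via Lemmas~\ref{lem:estimating-variance-decrement} and~\ref{lem:kappa-variance}, whether any variable in $\supp(w_{d''})\setminus\supp(w_{d'})$ still yields a $\kappa^2$-sized drop in conditional variance). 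Some such pruning or validation step must be added to your algorithm for the statement as claimed.
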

Here the corresponding sample complexity lower bound of $\Omega((1/\kappa^2)\log(n))$ follows from \cite{wang2010information} by flipping the signs of the parameter $a$ in one of their constructions. Equivalently, this comes down to the number of samples needed to distinguish the empty graph from a graph with a single $\kappa$-nondegenerate edge in an unknown location. 
We note that this bound does not depend on $d$, which may appear surprising. But it is actually not so strange, because $\kappa$-nondegeneracy implies an upper bound of $d \le 1/\kappa^2$ in $\kappa$-nondegenerate attractive GGMs --- see Lemma~\ref{lem:d-bounded-by-kappa}. We also give a version of the above result for general models with sample complexity $O(d\log(n)/\kappa^2)$ and run time $O(n^{d+1})$, giving a faster alternative to \cite{misra18} with the same sample complexity guarantee.

Theorem~\ref{thm:search-and-validate-informal} is proved by a careful analysis of the signal-vs-entropy tradeoff (in $\ell_0$-constrained regression) between choosing the correct support (which is best in expectation) and an incorrect support with $k$ disagreements for each $k$. As $k$ grows, the difference become worse in expectation, but there are roughly $n^k$ many sets which enables fitting the noise more effectively.  Precisely analyzing the differences in empirical risk again builds upon some classical ideas in regression testing \cite{keener2011theoretical}. We note that this result is also identifies an important barrier to improving the information theoretic lower bound of \cite{wang2010information} using similar lower bound instances. If this bound is not tight for general GGMs, it seems significantly new ideas will be needed to separate the sample complexity of learning attractive and non-attractive GGMs, as they must rely upon the ability of negative correlations to create nontrivial cancellations.

\subsubsection*{Walk-Summable GGMs}

While attractive GGMs are natural in some contexts, in others they are not. For example, in Genome Wide Association Schemes (GWASs), genes typically have inhibitory effects too. This leads us to another popular and well-studied class of GGMs, which includes as a special case all attractive GGMs: the \emph{walk-summable} models. These were introduced by Maliutov, Johnson and Willsky \cite{malioutov2006walk} to study the convergence properties of Gaussian Belief Propagation, generalizing previous work of Weiss and Fredman \cite{weiss2000correctness} for GGMs with SDD precision matrices. 
Walk-summable models also subsume other important classes of GGMs like {\em pairwise normalizable} and {\em non-frustrated} models \cite{malioutov2006walk}. A number of equivalent definitions are known for walk-summability --- perhaps the easiest to work with is that making all off-diagonal entries of $\Theta$ negative preserves the fact that $\Theta$ is positive definite. Perhaps less well known, it was shown in \cite{ruozzi2009graph} that walk-summable models are exactly those GGMs with SDD precision matrices under a rescaling of the coordinates.

The analysis of learning walk-summable models is considerably different from the attractive case, because supermodularity (and even weak submodularity \cite{das2011submodular}) of the conditional variance fail to hold -- see Section~\ref{sec:sdd-examples}. Regardless, we are still able to prove that \textsc{GreedyAndPrune} can learn all walk-summable models with sample complexity that scales logarithmically with $n$. To show this, we first reduce SDD models to generalized Laplacians. We then directly show that the greedy method makes significant progress in each step using further electrical arguments. Our analysis surprisingly shows that after a single step of greedy, the unknown sparse regression vector has small $\ell_1$-norm (independent of $n$ and scaling correctly with the noise level). 
This applies even for ill-conditioned models. The $\ell_1$-norm bound not only implies that greedy works, but also that appropriate invovations of $\ell_1$-based methods (like the Lasso) can now obtain good guarantees. We emphasize that such bounds do not hold without the first step of greedy. 


Concretely, we propose an algorithm called \textsc{HybridMB} 
based on this idea and show that it learns walk-summable GGMs without any condition number dependence.
The analysis of \textsc{HybridMB} uses the aforementioned structural results for walk-summable models and a statistical analysis for the regression problem arising after the greedy step. The regression analysis is similar in spirit to the usual generalization bounds for $\ell_1$-constrained regression (with care taken to handle the unbounded regressors and noise) but is more subtle. In particular, we must carefully take into account the interaction between fitting the unbounded coefficient on the greedily-selected variable (which by itself would be OLS) and the $\ell_1$-bounded coefficients on the other variables (which by itself would be the Lasso) from data. 
The key insight here is that if we fit all of these coefficients together at once, then this interaction can be eliminated under an unknown (to the algorithm) reparameterization of our function class of predictors; since empirical risk minimization does not depend on the parameterization of our class, we are then able to apply the fixed point machinery of \cite{mendelson2014learning} and obtain sharp bounds on the generalization error.

\begin{theorem}[Informal version of Theorem~\ref{thm:structure-learning-via-hybrid}]\label{thm:sdd-hybrid-informal}
Fix a walk-summable, $\kappa$-nondegenerate GGM.
Algorithm \textsc{HybridMB} runs in polynomial time and returns the true neighborhood of every node $i$ with high probability given $m \geq C(d/\kappa^4)\log(n)$ samples, where $C$ is a universal constant. 
\end{theorem}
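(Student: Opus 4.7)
The plan is to learn, for each fixed target node $i$, its neighborhood $N(i)$ in three stages: (a) a greedy step that selects a single ``anchor'' variable $j^\star \in \argmin_j \widehat{\Var}(X_i \mid X_j)$; (b) a single hybrid empirical-risk minimization that fits an unconstrained OLS coefficient on $X_{j^\star}$ together with an $\ell_1$-constrained coefficient vector on the remaining variables; and (c) thresholding the estimated coefficients at a level $\Theta(\kappa)$ to read off $N(i)$. By \cite{ruozzi2009graph}, a diagonal rescaling turns any walk-summable $\Theta$ into an SDD matrix, and for the purpose of proving worst-case $\ell_1$ upper bounds on regression vectors we may further replace each off-diagonal by its absolute value (this only increases $|\text{coefficient}|$s), reducing to a generalized Laplacian where electrical / effective-resistance tools are available.

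The structural heart of the argument is the claim that after the greedy step, the optimal regression vector for predicting $X_i$ from the variables other than $X_{j^\star}$ has $\ell_1$-norm at most $R = O(1/\kappa)$, \emph{independent of $n$}. I would prove this using the walk-sum representation of $\Theta^{-1}$: the coefficient on $X_k$ in the population regression of $X_i$ on the remaining variables (with $j^\star$ ``absorbed'' via OLS) is a signed sum of walks from $i$ to $k$ in the rescaled model, and greedy selection of $j^\star$ picks exactly the single vertex that short-circuits the largest mass of such walks emanating from $i$. Walk-summability furnishes absolute convergence of the walk-sum and, in the dominating Laplacian surrogate, an electrical interpretation that bounds the total remaining $\ell_1$ mass by the effective resistance seen from $i$ after $j^\star$ is glued to $i$. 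This is what replaces the supermodularity argument of the attractive case, which, as the paper notes, fails in general walk-summable models.

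Given the structural bound, stage (b) becomes ERM over the class $\mathcal{H} = \{\alpha x_{j^\star} + \langle \beta, x_{-\{i,j^\star\}}\rangle : \alpha \in \R,\ \|\beta\|_1 \leq R\}$. The statistical subtlety is that $\alpha$ is unbounded while the Gaussian regressors can have arbitrarily large variance, so off-the-shelf Lasso rates do not apply. Following the hint in the excerpt I would reparameterize: let $\tilde{X}_{j^\star}$ be the $L^2$-residual of $X_{j^\star}$ after the population projection onto $\Span(X_{-\{i,j^\star\}})$ and rewrite every predictor in $\mathcal{H}$ as $\alpha' \tilde{X}_{j^\star} + \langle \beta, x_{-\{i,j^\star\}}\rangle$. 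In this basis the two blocks are population-orthogonal, and the small-ball / fixed-point machinery of \cite{mendelson2014learning} applies to each block separately, giving $O(1/m)$ prediction risk from the one-dimensional $\alpha'$-OLS block and $O(R^2 \log(n)/m)$ from the $\ell_1$-constrained $\beta$-block. Because ERM is invariant under this linear change of parameterization (which the algorithm never needs to know), the same risk bound holds for the original optimization, and we get excess prediction risk $\eps^2 = O((1/\kappa^2)\log(n)/m)$.

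To close the loop, I would convert this prediction-risk bound into per-coordinate accuracy on the true regression vector by inverting the Gram matrix restricted to $N(i) \cup \{j^\star\}$; after normalization $\kappa$-nondegeneracy gives a lower bound on the relevant minimum eigenvalue, and the standard conversion from in-sample prediction error to $\ell_\infty$ estimation error costs a factor $d$. Demanding that the per-coordinate error fall below $\kappa/2$ then yields the announced threshold $m \gtrsim (d/\kappa^4)\log n$, after which the correct support is recovered by thresholding; a union bound over $i \in [n]$ completes the proof. The step I expect to be the principal obstacle is the $\ell_1$-structural bound in the walk-summable regime, where signed walks can cancel and conditional variance is neither supermodular nor weakly submodular; the success of the plan rests on extracting the bound from the walk-sum/electrical representation rather than from any monotonicity of the conditional variance. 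A secondary delicate point is verifying the small-ball condition on the residualized design $\tilde{X}_{j^\star}$ without a priori control over $\Var(\tilde{X}_{j^\star})$, which is exactly what the unknown reparameterization is designed to bypass.
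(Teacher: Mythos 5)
Your overall architecture --- a single greedy anchor $j^\star$, a hybrid ERM with an unconstrained coefficient on $X_{j^\star}$ and an $\ell_1$-constrained block on the rest, the unknown reparameterization onto the residual $X_{j^\star} - \E[X_{j^\star}\mid X_{-\{i,j^\star\}}]$ so that Mendelson's fixed-point machinery applies, and a final threshold at scale $\kappa$ --- is exactly the paper's (Algorithm \textsc{WS-Regression} plus \textsc{HybridMB}, Theorems~\ref{thm:hybrid-erm-bound}, \ref{thm:ws-regression}, \ref{thm:structure-learning-via-hybrid}). Two of your steps, however, diverge from the paper in ways that matter. First, the $\ell_1$ structural bound does not require any walk-sum representation of the regression coefficients: the population regression vector is simply $w^*_k = -\Theta_{ik}/\Theta_{ii}$, which is $d$-sparse with explicitly known entries, and the only question is the scale of the regressors. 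The paper bounds $\sum_k |w^*_k|\sqrt{\Var(X_k\mid X_{j^\star})}$ by combining Lemma~\ref{lem:sdd-smooth-variance} (adjacent variances differ by at most $1/|\Theta_{ik}|$, proved by effective-resistance triangle inequality in the lifted Laplacian) with the greedy guarantee $\Var(X_i\mid X_{j^\star})\le 2/|\Theta_{ik}|$ for every neighbor $k$, then Cauchy--Schwarz and the SDD row-sum inequality, obtaining $O(\sqrt{d/\Theta_{ii}})$ rather than your $O(1/\kappa)$; the $\sqrt d$ is what produces the $d/\kappa^4$ in the final sample complexity. Your ``short-circuiting the largest mass of walks'' heuristic is not a proof sketch that can be completed as stated, because signed cancellations make the walk mass of individual coefficients uninformative; the argument has to go through the variances of the neighbors, not through the coefficients.

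The genuine gap is your final conversion from prediction risk to per-coordinate accuracy. You propose to invert the Gram matrix restricted to $N(i)\cup\{j^\star\}$ and assert that $\kappa$-nondegeneracy lower-bounds its minimum eigenvalue. It does not: the entire setting of this paper is models where restricted eigenvalues of $\Sigma$ on sparse supports can be polynomially small in $n$ (e.g.\ the discretized Brownian path), while $\kappa$-nondegeneracy still holds with constant $\kappa$. If your step were valid, the theorem would reduce to the well-conditioned regime and the result would be uninteresting. The correct conversion is the law of total variance: for any estimate $u$,
\[
\E\Bigl[\bigl(\textstyle\sum_k (u_k - w^*_k)X_k\bigr)^2\Bigr] \;\ge\; \Var\bigl((u_k - w^*_k)X_k \mid X_{\sim k}\bigr) \;=\; \frac{(u_k - w^*_k)^2}{\Theta_{kk}},
\]
which converts an excess prediction risk of $\kappa^2/(16\Theta_{ii})$ into $|u_k + \Theta_{ik}/\Theta_{ii}|\le \kappa/(4\sqrt{\Theta_{ii}\Theta_{kk}})$ with no eigenvalue hypothesis and no factor of $d$; this is precisely the computation in the proof of Theorem~\ref{thm:structure-learning-via-hybrid}. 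A smaller omission: since the $\ell_1$ radius depends on the unknown $1/\Theta_{ii}$, the algorithm must search over a logarithmic grid of candidate radii and validate on held-out data (the loop over $\ell$ in \textsc{WS-Regression}); without this your stage (b) is not implementable as described.
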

Prior to our work, Anandkumar, Tan, Huang and Willsky \cite{anandkumar2012high} gave an $n^{O(d)}$ time algorithm for learning walk-summable models which also required some additional assumptions. 
As mentioned, we also give a similar result to the above for the Algorithm~\textsc{GreedyAndPrune}, albeit with slightly worse dependence on $d$ and $\kappa$ --- see Theorem~\ref{thm:greedy-and-prune-sdd}.

The above structure learning result requires $\kappa$-nondegeneracy and sparsity of the entire model. But it is proved using the following general result for sparse linear regression, which requires only a walk-summability assumption:
\begin{theorem}[Informal version of Theorem~\ref{thm:ws-regression}]
  Suppose that $Y = w \cdot X + \xi$ where $w$ is $d$-sparse, $\xi \sim N(0,\sigma^2)$ is independent of multivariate Gaussian r.v. $X \sim N(0,\Sigma)$, and suppose that the joint distribution of $(X_1,\ldots,X_n,Y)$ is walk-summable. Given $m$ samples from this model, \textsc{WS-Regression} runs in polynomial time and returns $\hat{w}$ such that
  \[ \E[(w \cdot X - \hat{w} \cdot X)^2] = O(\sigma^2\sqrt{d\log(n)/m}) \]
  with high probability.
\end{theorem}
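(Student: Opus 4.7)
I would follow the two-stage \textsc{HybridMB} template sketched in the introduction. Stage one is a single greedy step that selects a coordinate $j^{\star}$ maximizing the empirical variance reduction $\widehat{\Var}(Y) - \widehat{\Var}(Y \mid X_{j^{\star}})$ (equivalently, the coordinate of largest squared empirical correlation with $Y$). Stage two solves an empirical risk minimization over the hybrid class
\[
\mathcal{F} = \bigl\{\, x \mapsto \alpha\, x_{j^{\star}} + \langle \beta, x_{-j^{\star}}\rangle \;:\; \alpha \in \R,\ \|\beta\|_1 \le B \,\bigr\},
\]
where $B$ is a carefully chosen radius that will turn out to scale like $\sigma \sqrt{d}$ (up to absolute walk-summable constants). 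Because $\alpha$ is unconstrained while $\beta$ is $\ell_1$-bounded, this is a hybrid of OLS on the greedily-selected coordinate and Lasso on the rest, which is what the algorithm's name indicates; in particular we never use a condition-number-dependent Lasso bound.

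The first pillar of the analysis is a structural result: in any walk-summable joint model, the best linear predictor of $Y$ of the form $\alpha^{\star} X_{j^{\star}} + \langle \beta^{\star}, X_{-j^{\star}}\rangle$ satisfies $\|\beta^{\star}\|_1 = O(\sigma \sqrt{d})$, with no dependence on $n$ or on the condition number of $\Sigma$. I would prove this by rescaling so that the joint precision matrix of $(X,Y)$ becomes SDD (possible by the characterization of walk-summability in \cite{ruozzi2009graph}) and then interpreting the optimal coefficient vector as a vector of normalized voltages in the associated electrical network. The greedy step removes the coordinate carrying the largest share of the ``current,'' and a deletion/effective-resistance style argument bounds the $\ell_1$ norm of the residual voltages on the remaining vertices. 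Combined with the $d$-sparsity of $w$ itself and Cauchy--Schwarz, this gives the $O(\sigma\sqrt{d})$ envelope for $\|\beta^{\star}\|_1$.

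The second pillar is the generalization bound for ERM over $\mathcal{F}$. The difficulty is that $\mathcal{F}$ has infinite Rademacher complexity, so one cannot directly apply a standard Lasso oracle inequality. The resolution, as flagged in the introduction, is a \emph{reparameterization trick}. Let $X_{j^{\star}}^{\perp}$ denote the residual of $X_{j^{\star}}$ after projecting it onto the span of $X_{-j^{\star}}$; writing every $f \in \mathcal{F}$ as $f(x) = \alpha\, x_{j^{\star}}^{\perp} + \langle \tilde{\beta}, x_{-j^{\star}}\rangle$, where $\tilde{\beta}$ is obtained from $\beta$ by a data-independent linear shift, leaves $\mathcal{F}$ invariant and makes the two blocks of predictors population-orthogonal. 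Although the algorithm cannot compute this reparameterization (the required projection is unknown), ERM is invariant under it, so the analysis may be carried out in reparameterized coordinates. In those coordinates the excess-risk decomposition splits into (i) an unbounded one-dimensional Gaussian regression on $X_{j^{\star}}^{\perp}$, controlled by a direct $\chi^2$ concentration and sub-Gaussian noise bound, and (ii) an $\ell_1$-constrained multivariate regression to which the fixed-point / small-ball machinery of \cite{mendelson2014learning} applies (with care taken for unbounded Gaussian regressors and noise). Adding the two contributions yields excess risk $O(B \sigma \sqrt{\log(n)/m}) = O(\sigma^2 \sqrt{d\log(n)/m})$, as claimed.

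The step I expect to be the main obstacle is the condition-number-free $\ell_1$ bound of the first pillar: this is the new structural content of the theorem, and the electrical argument must be executed so as to lose nothing in either the graph size $n$ or the spectral scale of $\Sigma$. The second pillar is delicate but essentially a careful combination of $\chi^2$ tail bounds with Mendelson's fixed-point technique, once the correct (unknown) reparameterization has been identified.
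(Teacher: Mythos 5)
Your plan matches the paper's proof in its two essential pillars: (i) a single greedy step choosing $j^{\star}$ to minimize $\widehat{\Var}(Y\mid X_{j})$, after which the true coefficients, in the coordinates where each remaining regressor is rescaled by $\sqrt{\widehat{\Var}(X_k\mid X_{j^{\star}})}$, have $\ell_1$-norm $O(\sigma\sqrt{d})$ --- exactly the paper's computation combining Lemma~\ref{lem:sdd-smooth-variance} and Lemma~\ref{lem:bound-after-conditioning-ij} with Cauchy--Schwarz and the SDD row-sum bound; and (ii) an ERM analysis of the hybrid (free scalar)$\,\times\,$($\ell_1$-ball) class via the reparameterization trick and the framework of \cite{mendelson2014learning} (Theorem~\ref{thm:hybrid-erm-bound}). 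However, there is one concrete step that fails as written. You residualize the greedily selected coordinate against the others, writing $f=\alpha X_{j^{\star}}^{\perp}+\langle\tilde{\beta},X_{-j^{\star}}\rangle$ with $\tilde{\beta}=\beta+\alpha v$, where $v$ is the vector of population projection coefficients of $X_{j^{\star}}$ onto $X_{-j^{\star}}$. This orthogonalizes the two blocks, but it pushes the shift onto the \emph{constrained} block: the feasible set becomes $\{\,\tilde{\beta}:\|\tilde{\beta}-\alpha v\|_1\le B\,\}$, and $\|v\|_1$ admits no condition-number-free bound (it can be $\mathrm{poly}(n)$ in exactly the ill-conditioned models this theorem targets). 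The localized class is then not an $O(B)$-radius $\ell_1$-ball, and the H\"older--plus--Gaussian-maximal-inequality step that yields the $B\sqrt{\log(n)/m}$ complexity breaks. The paper residualizes in the opposite direction --- each $X_k$ is replaced by $X_k-\E[X_k\mid X_{j^{\star}}]$ --- so that the unknown shift is absorbed entirely by the \emph{unconstrained} scalar coefficient on $X_{j^{\star}}$, leaving the $\ell_1$ constraint on $w$ untouched and the product description of the class intact.

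A second, smaller omission: the radius $B\asymp\sigma\sqrt{d}$ depends on the unknown $\sigma^2=1/\Theta_{ii}$, of which the greedy step only yields a multiplicative $O(d)$-accurate estimate (via $\Var(Y\mid X_{j^{\star}})\le 4d\sigma^2$). Taking $B$ too large inflates the $RW\sigma\sqrt{\log(n)/m}$ term past the claimed rate, and too small excludes the true predictor from the feasible set, so the algorithm must search a logarithmic grid of $O(\log d)$ candidate radii and select one by a held-out validation test, as \textsc{WS-Regression} does in its loop over $s_{\ell}$. Both issues are repairable without changing your overall architecture, but the direction of the residualization is essential rather than cosmetic.
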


Although this result gives a ``slow rate'' of $\sqrt{1/m}$, it is quite different from the usual ``slow rate'' for the Lasso. The latter typically has error of the form $O(\sigma R W \sqrt{\log(d)/m})$ where $R$ is an $\ell_1$ norm bound on $w$ and $W$ is an $\ell_{\infty}$ bound on $X$, see e.g. \cite{rigollet2015high}. Concretely, when $R,W = \Theta(1)$ our Theorem~\ref{thm:ws-regression} can achieve error on the order of the noise level $\sigma^2$ using $O(d \log(n))$ samples whereas standard slow rate results only achieve error on the order of $\sigma$. This difference is crucial for achieving structure recovery from $O(\log n)$ samples as $\sigma$ can be very small (shrinking as $n$ grows) in our applications.
Compared to $\ell_0$-constrained least squares, which requires runtime $O(n^d)$, the above result is computationally efficient and has the correct dependence on $d,\sigma^2$.


\paragraph{General Models.} We note that our methods (\textsc{GreedyPrune}, \textsc{HybridMB}) also essentially recover the sample complexity bounds of \cite{cai2016estimating} under their assumptions (that the entries of the inverse precision matrix are bounded) --- see Remark~\ref{rmk:l1-bounded}. This result does not require the greedy step.

\subsection{Further Discussion}
There are interesting parallels and also significant differences between the situation for learning GGMs and Ising models. For Ising models, Bresler \cite{bresler2015efficiently} gave a simple greedy algorithm that builds a superset of the neighborhood around each node and then prunes to learn the true graph structure. For an $n$ node Ising model with degree $d$ and upper and lower bounds on the interaction strength of any nonzero edge and upper bounds on the external field, the algorithm runs in $f(d) \mbox{poly}(n)$ time and uses $f(d) \log n$ samples. In particular, this greedy algorithm is able to perform structure learning in Ising models even when they exhibit long range correlations, which previous results could not handle. However in our setting, and unlike the previously described situation for Ising models, variables have real values and can have arbitrarily small or large variance. It turns out this changes the problem dramatically, as it means that the inter-node fluctuations in the random field (which contribute to the variance of the field $X_i$ at node $i$) may be orders of magnitude larger than the per-node fluctuations (corresponding to the conditional variance of $X_i$). As a result of this difference, greedy methods actually fail to learn general GGMs from $O(\text{polylog}(n))$ samples (see Appendix~\ref{apdx:hard-examples}); therefore, any analysis of greedy methods must rely on structural results of a subclass of models. The same issue also prevents us from learning the model directly from $\ell_1$-constrained regression results as in \cite{vuffray2016interaction,klivans2017learning} --- in fact, we will see in Section~\ref{sec:simulations} that natural methods based only on $\ell_1$ regularization fail even in some relatively simple attractive GGMs (where greedy works).

As previously mentioned, Das and Kempe \cite{das2011submodular} studied the problem of sparse regression without assuming the restricted eigenvalue condition. While in sparse regression, in order to learn the parameters accurately (in additive error) some bound on the condition number is needed, they studied the problem of selecting a subset of columns that maximizes squared multiple correlation (a.k.a. minimizes mean squared error). 
They then gave approximation guarantees for many popular algorithms, including greedy, under an approximate submodularity condition and assuming access to the true joint covariance matrix (in other words, they studied this as a purely algorithmic problem while disregarding the effect of noise). Our algorithm for attractive models follows the same supermodularity-based strategy, but does not assume access to the true covariance matrix --- instead, we carefully analyze its statistical performance by studying the interaction between the greedy iteration and noise. In the general setting of walk-summable GGMs, we show the conditional variance does not satisfy an approximate supermodularity condition with any constant submodularity ratio. (See discussion in Remark~\ref{rmk:submodularity-ratio}.) 

In the literature on sparse regression, it is well known that the analyses of the Lasso which work well in a compresssed sensing style setting (i.e. with restricted eigenvalues, incoherent columns, etc.) may not be the correct tool to use when the coordinates of $X$ (columns of the design matrix) are highly correlated --- see e.g. \cite{van2013lasso,hebiri2012correlations,candes2014towards}. For example, the work of Koltchinskii and Minsker \cite{koltchinskii2014l_1} discusses this issue in the context of Brownian motion and other situations and develops general new guarantees for $\ell_1$-penalized regression which apply under correlated design (as well as infinite dimensional settings). Their result, for example, gives improved bounds when the response variable $Y$ is a linear combination of measurements of a simple random walk $X_{t_1},X_{t_2},\ldots,X_{t_k}$ when $t_1,\ldots,t_k$ are well-separated. We note that their setup and result is incomparable to ours, as the joint distribution of $(X,Y)$ in their model need not be walk-summable, and for structure recovery of random-walk like GGMs it is crucial to analyze the case where $t_1,t_2,\ldots$ are not well-separated. It would be interesting to see if the ideas used in Algorithms \textsc{HybridMB} and \textsc{GreedyAndPrune} are applicable to other regression setups with correlated design.

\section{Preliminaries}\label{sec:preliminaries}

In this section we set out some notation and basic facts about GGMs which will be used throughout.

\paragraph{Notations.} Given a GGM with precision matrix $\Theta$, $d$ will always denote the maximum degree of the underlying graph. Thus, $\Theta$ has at most $d + 1$ nonzero entries in each row. For a vector $x$ and index $i$, $X_{\sim i} = ((X_j): j \neq i)$. For a square matix $S \in \R^{k \times k}$ and $I \subseteq [k]$, $S_I$ denotes the $I \times I$ principal submatrix of $S$. 

We recall that conditioning on $X_i = x_i$ for any $x_i$ yields a new GGM with the precision matrix having row $i$ and column $i$ deleted. In particular, the conditional precision matrix does not depend on the value of $x_i$ chosen. Similarly, the value of the mean $\mu$ does not affect the covariance structure at all --- so $\mu$ does not play an interesting role in the structure learning problem and the reader may safely assume $\mu = 0$. We summarize the facts that we use the most below. 

\begin{fact}[\cite{lauritzen2011elements}]\label{fact:ggmfacts}
Let $X$ be drawn from a mean $0$ GGM with precision matrix $\Theta$. Then, for any $i$, $X_i | X_{\sim i} = x_{\sim i}$ is distributed as $N(\langle w^{(i)}, x_{\sim i} \rangle, 1/\Theta_{ii})$ where $w^{(i)}$ is the vector with $w^{(i)}_j = - \Theta_{ij}/\Theta_{ii}$. 
\end{fact}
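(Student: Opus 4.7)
The plan is to work directly from the explicit form of the joint density of $X$ and read off the conditional density of $X_i$ given $X_{\sim i}=x_{\sim i}$. Since $X$ has mean zero and precision matrix $\Theta$, the joint density is
\[ p(x) \propto \exp\left(-\tfrac{1}{2} x^T \Theta x\right), \]
and the conditional density $p(x_i \mid x_{\sim i})$ is proportional, as a function of $x_i$, to this same expression with $(x_j)_{j \neq i}$ fixed. So the whole proof reduces to identifying the quadratic-in-$x_i$ exponent on the right and recognizing it as a univariate Gaussian density.

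The key algebraic step is to expand the quadratic form $x^T \Theta x$ by isolating every term that involves $x_i$:
\[ x^T \Theta x = \Theta_{ii} x_i^2 + 2 x_i \sum_{j \neq i} \Theta_{ij} x_j + \sum_{j,k \neq i} \Theta_{jk} x_j x_k. \]
The last double sum does not depend on $x_i$ and is absorbed into the $x_{\sim i}$-dependent normalizing constant. Completing the square in $x_i$ on the remaining terms yields
\[ \Theta_{ii} x_i^2 + 2 x_i \sum_{j \neq i} \Theta_{ij} x_j = \Theta_{ii}\bigl(x_i - \langle w^{(i)}, x_{\sim i}\rangle\bigr)^2 + C(x_{\sim i}), \]
where $w^{(i)}_j = -\Theta_{ij}/\Theta_{ii}$ and $C(x_{\sim i})$ collects terms independent of $x_i$. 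Plugging back into the density shows that $p(x_i \mid x_{\sim i}) \propto \exp\bigl(-\tfrac{\Theta_{ii}}{2}(x_i - \langle w^{(i)}, x_{\sim i}\rangle)^2\bigr)$, which is exactly the density of $N(\langle w^{(i)}, x_{\sim i}\rangle, 1/\Theta_{ii})$.

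There is essentially no obstacle here; the only thing to be careful about is the legitimacy of reading off a Gaussian from an exponential-of-a-quadratic (which requires $\Theta_{ii} > 0$, guaranteed because $\Theta \succ 0$), and noting that the $x_{\sim i}$-dependent constant $C(x_{\sim i})$ combines with the omitted $\det\Theta$ factor only to renormalize the conditional density and does not affect either the conditional mean or variance. As a side remark, the same expansion applied with any subset $S$ instead of $\{i\}$ recovers the fact used just before the statement, namely that conditioning on $X_S = x_S$ yields a new GGM whose precision matrix is the principal submatrix of $\Theta$ with rows and columns in $S$ removed, independent of the particular values $x_S$.
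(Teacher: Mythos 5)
Your proof is correct and matches the paper's own (standard) derivation: the paper cites this as a known fact from \cite{lauritzen2011elements}, and its internal sketch is precisely the same computation of expanding $x^T\Theta x$, isolating the terms in $x_i$, and completing the square to read off the conditional mean $\langle w^{(i)}, x_{\sim i}\rangle$ and variance $1/\Theta_{ii}$. Nothing is missing.
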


\ignore{
The fact follows from expanding the density of 
First we recall that the density of $X_1$ given $X_{\sim 1} = x_{\sim 1}$ is given by
\begin{align}
    p(X_1 = x_1 | X_{\sim 1} = x_{\sim 1}) 
    &\propto \exp(-\Theta_{11} (x_{1} - \mu_1)^2/2 - x_1 \Theta_{1,\sim 1}^T (x_{\sim 1} - \mu_{\sim 1})) \\
    &\propto \exp\left(-\Theta_{11}\left(x_{1} - \mu_1 + \frac{\Theta^T_{1,\sim 1}}{\Theta_{11}} (x_{\sim 1} - \mu_{\sim 1})\right)^2/2\right)\label{eqn:ggm-conditional}
\end{align}
which is a Gaussian with mean $\mu_1 - \frac{\Theta^T_{1,\sim 1}}{\Theta_{11}} (x_{\sim 1} - \mu_{\sim 1})$ and variance $1/\Theta_{11}$.}

Thus, if we fix an index $i$, then samples $X$ from the GGM can be interpreted as a linear regression problem as $(X_{\sim i}, X_i)$ where $X_i = \langle w^{(i)}, X_i \rangle + N(0, 1/\Theta_{ii})$. 
This establishes the basic connection between learning GGMs and linear regression: if we can solve the above regression problem well, perhaps we can recover the non-zero entries of $\Theta$ from the coefficients. But as is well known in the literature, just fitting the coefficients using ordinary least squares is not sufficient (or necessarily possible) as we have very few samples. 

By positive definiteness, we have $\Theta_{i,i} \ge 0$ and $\Theta_{i,i} \Theta_{j,j} - \Theta_{i,j}^2 \ge 0$, or equivalently
$0 \le \frac{|\Theta_{i,j}|}{\sqrt{\Theta_{i,i} \Theta_{j,j}}} \le 1$.
To identify the graph we need the present edges to not be too weak. So it makes sense to assume (following the notation of \cite{anandkumar2012high,misra18}) there is a $\kappa > 0$ such that
\begin{equation}\label{eqn:kappa-assumption}
\kappa \le \frac{|\Theta_{i,j}|}{\sqrt{\Theta_{i,i} \Theta_{j,j}}} \le 1
\end{equation}
\begin{defn}[\cite{anandkumar2012high,misra18}]
We say a GGM is $\kappa$-nondegenerate if it satisfies \eqref{eqn:kappa-assumption} for all $i,j$ such that $\Theta_{ij} \ne 0$.
\end{defn}

\paragraph{Conditional Variance.} Conditional variances of the form $\Var(X_i| X_S)$ play a central role in all our algorithms. We first review the basic definition and some of their properties. 
\begin{defn}[Conditional Variance]
For $X$ an arbitrary real-valued random variable and $Y$ an arbitrary random variable or collection of random variables on the same probability space, let\footnote{In an alternate convention which we do not use, $\Var(X|Y)$ is defined to be the random variable $\E[(X - \E[X | Y])^2 | Y]$ and our definition is the same as $\E \Var(X|Y)$.}
\[ \Var(X|Y) := \E[(X - \E[X | Y])^2]. \]
\end{defn}
By the Pythagorean Theorem, conditional variance obeys the \emph{law of total variance} \cite{blitzstein2014introduction}:
\[ \Var(X) = \Var(X|Y) + \Var(\E[X | Y]). \]
and more generally, $\Var(X | Y) = \Var(X | Y,Z) + \Var(\E[X | Y,Z] | Y)$.
The last identity is also sometimes referred to as the law of total conditional variance.


The $\kappa$-nondegeneracy assumption implies a quantitative lower bound on conditional variances $\Var(X_i | X_S)$ when the conditioning set does not include all of $i$'s neighbors.
\begin{lemma}\label{lem:kappa-variance}
Fix a node $i$ in a $\kappa$-nondegenerate GGM, and let $S$ be set
of nodes not containing all neighbors of $i$. Then
\[ \Var(X_i | X_S) \ge \frac{1 + \kappa^2}{\Theta_{ii}} \]
\end{lemma}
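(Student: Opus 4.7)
The plan is to reduce the bound to a two-variable Schur complement calculation. Since $S$ does not include all neighbors of $i$, I can pick some $j \in N(i) \setminus S$, which (by definition of neighbor in a GGM) satisfies $\Theta_{ij} \ne 0$. The quantity $\Var(X_i | X_S)$ is just the $(i,i)$ entry of the conditional covariance, so the entire argument reduces to producing a good lower bound on this entry of a certain covariance matrix.

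First I would use the standard fact about Gaussian conditioning recalled in Section~\ref{sec:preliminaries}: the conditional distribution of $X_{V\setminus S}$ given $X_S$ is a GGM whose precision matrix is $\tilde\Theta := \Theta_{V\setminus S,\,V\setminus S}$. Marginalizing out $V\setminus (S\cup\{i,j\})$ then yields the conditional distribution of $(X_i,X_j)$ given $X_S$, whose covariance is the $\{i,j\}\times\{i,j\}$ block of $\tilde\Theta^{-1}$. Write $\tilde\Theta$ in block form with the $\{i,j\}$-block $A = \bigl(\begin{smallmatrix}\Theta_{ii}&\Theta_{ij}\\ \Theta_{ij}&\Theta_{jj}\end{smallmatrix}\bigr)$ (it agrees with the corresponding block of $\Theta$ because principal submatrices commute with taking blocks), off-diagonal block $B$, and the remaining block $D$. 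The Schur complement identity gives
\[ \bigl(\tilde\Theta^{-1}\bigr)_{\{i,j\},\{i,j\}} = (A - B D^{-1} B^T)^{-1}. \]
Since $B D^{-1} B^T \succeq 0$, we have $A - B D^{-1} B^T \preceq A$, hence $(\tilde\Theta^{-1})_{\{i,j\},\{i,j\}} \succeq A^{-1}$ in the PSD order. Extracting the $(i,i)$ entry yields $\Var(X_i | X_S) \ge (A^{-1})_{ii}$.

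To finish, I compute $(A^{-1})_{ii} = \Theta_{jj}/\det(A) = 1/\bigl(\Theta_{ii}(1 - \Theta_{ij}^2/(\Theta_{ii}\Theta_{jj}))\bigr)$, and then apply the $\kappa$-nondegeneracy bound $\Theta_{ij}^2/(\Theta_{ii}\Theta_{jj}) \ge \kappa^2$ together with the elementary inequality $1/(1-\kappa^2) \ge 1+\kappa^2$ (which holds because $(1-\kappa^2)(1+\kappa^2) = 1-\kappa^4 \le 1$). This gives exactly $\Var(X_i|X_S) \ge (1+\kappa^2)/\Theta_{ii}$.

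There is no real obstacle here; the only point that requires care is noticing that the PSD inequality $(\tilde\Theta^{-1})_{\{i,j\},\{i,j\}} \succeq A^{-1}$ transfers to the individual diagonal entries (which is immediate from testing against $e_i$). A tempting alternative based on the law of total variance, decomposing $\Var(X_i|X_S) = \Var(X_i|X_{\sim i}) + \Var(\E[X_i|X_{\sim i}]\,|\,X_S)$, would require lower-bounding the second term and seems to lead to the same two-variable calculation in disguise, so the Schur complement route appears most direct.
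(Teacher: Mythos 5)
Your proof is correct, but it takes a different route from the paper's. The paper argues probabilistically: it applies the law of total conditional variance to write $\Var(X_i \mid X_S) - 1/\Theta_{ii} = \E[(\E[X_i \mid X_{\sim i}] - \E[X_i \mid X_S])^2]$, lower bounds this second moment by the conditional variance given $X_{\sim j}$ for a missing neighbor $j$, and evaluates that as $\Theta_{ij}^2/(\Theta_{ii}^2\Theta_{jj}) \ge \kappa^2/\Theta_{ii}$ --- i.e.\ exactly the ``tempting alternative'' you describe and set aside at the end. Your Schur-complement argument is equally valid: conditioning on $S$ restricts the precision matrix, marginalizing to $\{i,j\}$ inverts it, and the PSD monotonicity $(A - BD^{-1}B^T)^{-1} \succeq A^{-1}$ reduces everything to the $2\times 2$ block; all the intermediate claims (invertibility of $D$ as a principal submatrix of a PD matrix, positivity of the Schur complement, transfer of the PSD inequality to the $(i,i)$ entry via $e_i$) check out, and the degenerate case $V \setminus (S \cup \{i,j\}) = \emptyset$ causes no trouble. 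Your route even yields the marginally sharper bound $\Var(X_i \mid X_S) \ge 1/\bigl(\Theta_{ii}(1-\kappa^2)\bigr)$. The one thing the paper's decomposition buys that yours does not directly is reusability: the identity $\Var(X_i\mid X_S) - 1/\Theta_{ii} = \Var(\E[X_i\mid X_{\sim i}]\mid X_S)$ is exactly what gets strengthened in the attractive case (Lemma~\ref{lem:kappa-variance-ferromagnetic}) by summing the contributions of \emph{all} missing neighbors via Griffith's inequality, whereas the two-variable Schur complement sees only one neighbor at a time.
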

\begin{proof}
Let $j \notin S$ be a neighbor of $i$. By the law of total conditional variance, we have
$$\Var(X_i | X_S) = \Var(X_i | X_{\sim i}) + \Var(\E[X_i | X_{\sim i}] | X_S) = \frac{1}{\Theta_{ii}} + \Var(\E[X_i | X_{\sim i}] | X_S),$$
where in the last equality we used Fact \ref{fact:ggmfacts}. Thus, as $\E[f^2] \ge \Var(f)$, and the definition of $\kappa$-nondegeneracy
\begin{align*}
 \Var(X_i | X_S)  - \frac{1}{\Theta_{ii}} = \Var(\E[X_i | X_{\sim i}] | X_S)
&= \E[(\E[X_i | X_{\sim i}] - \E[X_i | X_S])^2] \\
&\ge \Var(\E[X_i | X_{\sim i}] - \E[X_i | X_S] | X_{\sim j}) = \frac{\Theta_{ij}^2}{\Theta_{ii}^2 \Theta_{jj}} \ge \frac{\kappa^2}{\Theta_{ii}}
\end{align*}
where the last equality follows from Fact~\ref{fact:ggmfacts} and the last inequality
is by the definition of $\kappa$.
The Lemma follows by rearranging. \end{proof}

The following basic fact about Gaussians will be useful:
\begin{lemma}\label{lem:gaussian-conditional-expectation}
  If $X$ and $Y$ are jointly Gaussian random variables then 
  $\E[X | Y] = \E[X] + \frac{\Cov(X,Y)}{\Var(Y)}(Y - \E[Y])$
  and
  $\Var(X) - \Var(X | Y) = \frac{\Cov(X,Y)^2}{\Var(Y)}$.
\end{lemma}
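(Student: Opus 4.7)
The plan is to reduce to a centered decomposition: write $X$ as a linear function of $Y$ plus an orthogonal Gaussian residual, and read off both identities from that decomposition. Both claimed identities are invariant under adding constants to $X$ and $Y$ (the first identity transforms identically on both sides, and the second only involves variances and covariances), so I would first assume without loss of generality that $\E[X] = \E[Y] = 0$.

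Next, set $a := \Cov(X,Y)/\Var(Y)$ and define the residual $Z := X - aY$. A direct calculation gives
\[ \Cov(Z, Y) = \Cov(X,Y) - a\Var(Y) = 0. \]
Because $(Z, Y)$ is a linear image of the jointly Gaussian pair $(X, Y)$, it is itself jointly Gaussian, and for jointly Gaussian vectors uncorrelatedness is equivalent to independence (the joint covariance is block-diagonal, so the joint density factors). Hence $Z \perp Y$, and in particular $\E[Z \mid Y] = \E[Z] = 0$.

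From here both conclusions drop out. For the conditional mean,
\[ \E[X \mid Y] = \E[aY + Z \mid Y] = aY + \E[Z \mid Y] = aY = \frac{\Cov(X,Y)}{\Var(Y)}\, Y, \]
which, after undoing the centering, is the claimed formula. For the variance, note that under the paper's convention $\Var(X \mid Y) = \E[(X - \E[X \mid Y])^2] = \E[Z^2] = \Var(Z)$. Independence of $Z$ and $Y$ gives
\[ \Var(X) = \Var(aY + Z) = a^2 \Var(Y) + \Var(Z), \]
so $\Var(X) - \Var(X \mid Y) = a^2\Var(Y) = \Cov(X,Y)^2/\Var(Y)$, as claimed.

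There is essentially no hard step here; the entire content is choosing $a$ so that the residual is uncorrelated with $Y$, and then invoking the Gaussian fact that uncorrelated implies independent. The only point where one has to be slightly careful is in keeping straight the paper's convention that $\Var(X \mid Y)$ denotes the (unconditional) expectation $\E[(X - \E[X \mid Y])^2]$ rather than the random variable $\E[(X - \E[X \mid Y])^2 \mid Y]$; in this Gaussian setting the two agree almost surely, so it makes no difference, but writing the proof in terms of the unconditional expectation of $Z^2$ makes the bookkeeping cleanest.
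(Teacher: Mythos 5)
Your proof is correct and follows essentially the same route as the paper's: both arguments rest on the conditional expectation being the affine function of $Y$ determined by matching $\Cov(\cdot,Y)$, and both obtain the second identity from the orthogonal (total-variance) decomposition. The only difference is that you justify the affine form explicitly via the uncorrelated-hence-independent residual $Z = X - aY$, whereas the paper simply cites that $\E[X \mid Y]$ must be affine and reads off the coefficients; this is a matter of how much of the standard Gaussian toolkit one takes as given, not a different argument.
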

\begin{proof}
Because the random variables are jointly Gaussian, we know that $\E[X | Y]$ must be an affine function of $Y$. From $\E[\E[X | Y]] = \E[X]$ and $\Cov(\E[X | Y], Y) = \Cov(X,Y)$ the coefficients are determined, proving the first formula. Then the second formula follows from the law of total variance, $\Var(X) - \Var(X | Y) = \Var(\E[X | Y])$.
\end{proof}

We will also use the following concentration inequality often. Recall that a $\chi^2$-random variable with $D$ degrees of freedom is just $\sum_{i = 1}^D Z_i^2$ where $Z_i \sim N(0,1)$ are independent standard Gaussians.
\begin{lemma}[Lemma 1, \cite{laurent2000adaptive}]\label{lem:chi-squared-concentration}
  Suppose $U$ is $\chi^2$-distributed with $D$ degrees of freedom. Then
  $\Pr(U - D \ge 2\sqrt{D\log(1/\delta)} + 2\log(1/\delta)) \le \delta$
  and
  $\Pr(D - U \ge 2\sqrt{D\log(1/\delta)}) \le \delta$.
  In particular, $U \le 2D$ with probability at least $1 - \delta$ as long as $D \ge 8\log(1/\delta)$.
\end{lemma}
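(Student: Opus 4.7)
The plan is to prove both tail bounds via the standard Chernoff (Cram\'er--Chernoff) method applied to the explicit moment generating function of a chi-squared random variable, and then derive the ``in particular'' clause by direct substitution. Write $U = \sum_{i=1}^D Z_i^2$ with $Z_i \sim N(0,1)$ independent, so that for $\lambda < 1/2$,
\[ \E[e^{\lambda(U-D)}] = e^{-\lambda D}(1-2\lambda)^{-D/2} = \exp\!\left(\tfrac{D}{2}\bigl[-2\lambda - \log(1-2\lambda)\bigr]\right), \]
and for all $\lambda > 0$,
\[ \E[e^{-\lambda(U-D)}] = e^{\lambda D}(1+2\lambda)^{-D/2} = \exp\!\left(\tfrac{D}{2}\bigl[2\lambda - \log(1+2\lambda)\bigr]\right). \]

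For the \emph{upper tail}, I would apply the elementary inequality $-\log(1-x) - x \le \frac{x^2}{2(1-x)}$ for $0 \le x < 1$ with $x = 2\lambda$ to obtain the clean bound
\[ \log \E[e^{\lambda(U-D)}] \le \frac{D\lambda^2}{1-2\lambda}. \]
Markov's inequality then gives $\Pr(U - D \ge t) \le \exp(-\lambda t + D\lambda^2/(1 - 2\lambda))$. Setting $x = \log(1/\delta)$ and aiming for $t = 2\sqrt{Dx} + 2x$, I would choose $\lambda = \lambda(x,D)$ so that $\lambda t - D\lambda^2/(1-2\lambda) \ge x$; the canonical choice is $\lambda = \frac{1}{2}\bigl(1 - \sqrt{D/(D+2t)}\bigr)$, or the simpler $\lambda^{-1} = 1 + \sqrt{D/x}$ after a short computation. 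Either way, plugging in and simplifying by completing the square yields exactly $\Pr(U - D \ge 2\sqrt{D x} + 2x) \le e^{-x}$, which is the first inequality.

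For the \emph{lower tail}, the MGF $\E[e^{-\lambda(U-D)}]$ is finite for every $\lambda > 0$, and the elementary inequality $\log(1+x) \ge x - x^2/2$ for $x \ge 0$ with $x = 2\lambda$ gives $\log\E[e^{-\lambda(U-D)}] \le D\lambda^2$ with no blow-up factor. Chernoff's inequality then gives $\Pr(D - U \ge t) \le \exp(-\lambda t + D\lambda^2)$, and optimizing in $\lambda$ (setting $\lambda = t/(2D)$) yields $\Pr(D - U \ge t) \le \exp(-t^2/(4D))$. Substituting $t = 2\sqrt{D x}$ gives the claimed $e^{-x} = \delta$ bound. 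The absence of the $1/(1-2\lambda)$ correction is exactly why the lower tail has only the sub-Gaussian term $2\sqrt{D\log(1/\delta)}$ and no extra $2\log(1/\delta)$.

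Finally, for the ``in particular'' statement, the first inequality gives $U \le D + 2\sqrt{D\log(1/\delta)} + 2\log(1/\delta)$ with probability at least $1 - \delta$. If $D \ge 8\log(1/\delta)$, then $\log(1/\delta) \le D/8$, so $2\sqrt{D\log(1/\delta)} \le 2\sqrt{D^2/8} = D/\sqrt{2}$ and $2\log(1/\delta) \le D/4$, yielding $U \le D(1 + 1/\sqrt{2} + 1/4) < 2D$. The main obstacle is purely computational, namely choosing $\lambda$ in the upper-tail Chernoff step so that the stated split into a $\sqrt{D\log(1/\delta)}$ term and a $\log(1/\delta)$ term falls out with the sharp constant $2$ in both places; this is what forces the use of the tight inequality $-\log(1-x) - x \le x^2/(2(1-x))$ rather than a cruder sub-Gaussian bound.
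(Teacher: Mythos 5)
Your argument is correct and is essentially the paper's ``proof'': the lemma is cited verbatim from Laurent--Massart, whose own derivation is exactly this Cram\'er--Chernoff computation with the explicit $\chi^2$ MGF, the bound $-\log(1-x)-x\le x^2/(2(1-x))$ for the upper tail, and $\log(1+x)\ge x-x^2/2$ for the lower tail. One small slip: of your two suggested tilts, $\lambda^{-1}=1+\sqrt{D/x}$ violates $\lambda<1/2$ when $x>D$; the clean choice is $\lambda = \sqrt{x}/(\sqrt{D}+2\sqrt{x})$, i.e.\ $\lambda^{-1}=2+\sqrt{D/x}$, which makes $\lambda t - D\lambda^2/(1-2\lambda)$ equal to $x$ exactly at $t=2\sqrt{Dx}+2x$ (your first choice, the true optimizer $\tfrac12(1-\sqrt{D/(D+2t)})$, also works).
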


\section{Structural results for walk-summable models}
\subsection{Background: Walk-Summable Models are SDD after rescaling}
\begin{defn}[\cite{malioutov2006walk}]
A Gaussian Graphical Model with invertible precision matrix $\Theta \succ 0$ is \emph{walk-summable} if $D - \overline{A} \succ 0$ where $\Theta = D - A$ decomposes $\Theta$ into diagonal and off-diagonal components, and $\overline{A}$ is the matrix with $\overline{A}_{ij} = |A_{ij}|$.
\end{defn}
It is well-known (and immediate) that the class of walk-summable matrices includes the class of SDD matrices. Indeed, the motivation for introducing walk-summable matrices was to generalize the notion of SDD matrices.
\begin{defn}
A matrix $M$ is \emph{symmetric diagonally dominant} (SDD) if it is symmetric and $M_{ii} \ge \sum_{j: j \ne i} |M_{ij}|$ for every $i$. 
\end{defn}
Perhaps less well-known, it was observed in \cite{ruozzi2009graph} that a natural converse holds: all walk-summable matrices are simply rescaled SDD matrices, where the rescaling is in the natural sense for a bilinear form. 
Furthermore, this rescaling is easy to find algorithmically (if we have access to $\Theta$), requiring just a top eigenvector computation.
\begin{theorem}[Theorem 4.2 of \cite{ruozzi2009graph}]\label{thm:rescaling}
Suppose $\Theta$ is walk-summable. Then there exists a diagonal matrix $D$ with positive entries such that $D \Theta D$ is an SDD matrix.
\end{theorem}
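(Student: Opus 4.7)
The plan is to recognize $M := D_\Theta - \overline{A}$ (where $D_\Theta$ is the diagonal of $\Theta$ and $\overline{A}$ has entries $|A_{ij}|$) as a symmetric M-matrix, and then produce the rescaling by inverting it against the all-ones vector. First, I would translate what needs to be proved: if we write $D = \diag(d_1,\ldots,d_n)$ with $d_i > 0$, then $D\Theta D$ is SDD if and only if, for each $i$,
\[ \Theta_{ii} d_i \;\ge\; \sum_{j \neq i} |\Theta_{ij}|\, d_j, \]
which is exactly the componentwise inequality $Md \ge 0$ with $d > 0$. So the problem reduces to producing a strictly positive $d$ with $Md \ge 0$.

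Next, I would observe two structural facts about $M$: its diagonal coincides with the (strictly positive) diagonal of $\Theta$, and its off-diagonal entries $-|\Theta_{ij}|$ are nonpositive. Combined with the walk-summability hypothesis $M \succ 0$, this makes $M$ a symmetric nonsingular M-matrix. Factor $M = D_\Theta^{1/2}(I - B)D_\Theta^{1/2}$ where $B := D_\Theta^{-1/2}\overline{A}D_\Theta^{-1/2}$ is entrywise nonnegative. Walk-summability gives $I - B \succ 0$, and since $B$ is symmetric and nonnegative this forces the spectral radius $\rho(B) < 1$. Therefore the Neumann series converges and
\[ (I - B)^{-1} \;=\; \sum_{k=0}^\infty B^k \;\ge\; 0 \quad (\text{entrywise}), \]
so $M^{-1} = D_\Theta^{-1/2}(I-B)^{-1}D_\Theta^{-1/2} \ge 0$ entrywise as well.

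Now set $d := M^{-1}\mathbf{1}$, so that $Md = \mathbf{1} > 0$. This immediately yields $d \ge 0$ and gives the required inequality $(Md)_i \ge 1 > 0$ for every $i$; what remains is strict positivity of $d$. If some coordinate $d_i$ were zero, then since $M_{ii} > 0$ and $M_{ij} \le 0$ for $j \neq i$ and $d_j \ge 0$, we would have $(Md)_i = \sum_{j \neq i} M_{ij} d_j \le 0$, contradicting $(Md)_i = 1$. So $d > 0$, and defining $D = \diag(d)$ makes $D\Theta D$ (strictly) diagonally dominant, completing the proof.

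The main obstacle is really just the first conceptual step of unpacking the SDD condition as a componentwise inequality $Md \ge 0$; once that is done, the M-matrix structure of $M$, which is exactly what walk-summability provides in the presence of nonpositive off-diagonals, makes the construction $d = M^{-1}\mathbf{1}$ essentially forced. The only mild technical care required is the sign argument ruling out $d_i = 0$, which uses both the positive diagonal and nonpositive off-diagonals of $M$ together with $Md = \mathbf{1}$.
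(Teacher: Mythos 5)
Your proof is correct, but it takes a different route from the paper's. The paper first normalizes so that $\diag(\Theta)=\vec{1}$, writes $\overline{\Theta}=I-\overline{A}$, and takes $D$ to be the diagonal of the Perron--Frobenius eigenvector $v$ of $\overline{A}$, verifying $\overline{\Theta}v=(1-\lambda)v\ge 0$ with $\lambda=\|\overline{A}\|<1$. You instead keep the original scaling, observe that $M=D_\Theta-\overline{A}$ is a symmetric nonsingular $M$-matrix, deduce $M^{-1}\ge 0$ entrywise via the Neumann series, and take $d=M^{-1}\mathbf{1}$ so that $Md=\mathbf{1}>0$. Both arguments ultimately rest on the same Perron--Frobenius/nonnegativity structure, but the witnesses differ: the paper's $D$ comes from a top-eigenvector computation (which the paper explicitly advertises as the algorithmic route), while yours comes from solving one linear system in $M$. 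A modest advantage of your version is that strict positivity of $d$ is established by an explicit sign argument that does not require $\overline{A}$ to be irreducible, whereas the paper's appeal to Perron--Frobenius only guarantees a \emph{nonnegative} eigenvector in the reducible case, so its claim that $v$ has positive entries implicitly assumes irreducibility (this is easily repaired by working componentwise, but your argument sidesteps it entirely). Your version also yields strict diagonal dominance, with the slack in row $i$ being exactly $d_i$.
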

\begin{proof}
We include the proof for completeness --- it is the same as in \cite{ruozzi2009graph}.

First, we observe that we can reduce to the case $\diag(\Theta) = \vec{1}$ by replacing $\Theta$ by $D_1 \Theta D_1$ where $D_1$ is the diagonal matrix with $(D_1)_{ii} = 1/\sqrt{\Theta_{ii}}$. Next, let $\overline{\Theta} = I - \overline{A}$ and note that when we write the decomposition $0 \prec \overline{\Theta} = I - \overline{A}$ that $\overline{A}$ has all nonnegative entries, so we can apply the Perron-Frobenius Theorem to find an eigenvector $v$ with positive entries and eigenvalue $\lambda = \|\overline{A}\| < 1$. Now define $D_2 = \diag(v)$, and we claim that $D_2 \Theta D_2$ is an SDD matrix. It suffices to check that $0 \le D_2 \overline{\Theta} D_2 \vec{1} = D_2 \overline{\Theta} v$ entry-wise, and because $D_2$ is diagonal with nonnegative entries it suffices to check that $\overline{\Theta} v \ge 0$. This follows as
\[ \overline{\Theta} v = (I - \overline{A})v = (1 - \lambda)v \ge 0 \]
entrywise.
\end{proof}

We note that while that we are not aware of the above statement (Theorem~\ref{thm:rescaling}) appearing before the work of \cite{ruozzi2009graph}, related statements about $Z$-matrices (matrices, not necessarily symmetric, which have only negative off-diagonal entries) and $M$-matrices have been known for a long time in the linear algebra literature --- see for example Theorem 4.3 of \cite{fiedler1962matrices}.

\begin{example}
In Example 1 of \cite{malioutov2006walk} it was observed that the matrix
\[
\begin{bmatrix}
1  & -r & r & r \\
-r & 1  & r & 0 \\
r  & r  & 1 & r \\
r  & 0  & r & 1
\end{bmatrix}
\]
itself stops being SDD when $r > 1/3$, but remains walk-summable until a little past $r = 0.39$. When $r = 0.39$, the corresponding Perron-Frobenius eigenvector for $\overline{A}$ is roughly $(0.557,0.435,0.557,0.435)$ and applying the rescaling from Theorem~\ref{thm:rescaling} we get
\[
\begin{bmatrix}
0.310634 & -0.0945889 & 0.121147 & 0.0945889 \\
-0.0945889 & 0.189366 & 0.0945889 & 0. \\ 0.121147 & 0.0945889 & 0.310634 & 0.0945889 \\ 
0.0945889 & 0. & 0.0945889 & 0.189366 \end{bmatrix}
\]
which is an SDD matrix.
\end{example}
The SDD rescaling given by Theorem~\ref{thm:rescaling} will play a key role in our analysis.
Conceptually, converting a walk-summable matrix to its SDD form is a way to take the extra degrees of freedom in the model specification (arbitraryness in the scaling of the $X_i$) and fix them in a way that is useful in the analysis -- one instance of a very common phenomenon in mathematics, referred to as ``gauge fixing'' in some contexts. In particular, under the SDD rescaling there are meaningful relations between the different rows of $\Theta$ which fail to hold in general.

\subsection{Background: SDD systems, Laplacians, and electrical flows}\label{sec:electrical-flows}
\begin{defn}
A matrix $L$ is a \emph{generalized Laplacian} if it is SDD and for every $i \ne j$, $L_{ij} \le 0$. We think of this graph theoretically as the Laplacian of the weighted graph with edge weights $-L_{ij}$ between distinct $i$ and $j$ and self loops of weight $L_{ii} - \sum_{j \ne i} |L_{ij}|$ at vertex $i$.
\end{defn}
We review the standard reduction between solving SDD systems and Laplacian systems. Suppose $\Theta$ is an SDD matrix. Then we can write
$\Theta = L - P$
where $L$ is a (generalized) Laplacian having positive entries on the diagonal and nonnegative entries off the diagonal, and $P$ has negative off-diagonal entries and corresponds to the positive off-diagonal entries of $\Theta$. Now we observe that
\begin{equation}\label{eqn:lift-laplacian}
\begin{bmatrix} L & P \\ P & L \end{bmatrix} \begin{bmatrix} x \\ -x \end{bmatrix} = \begin{bmatrix} \Theta x \\ - \Theta x \end{bmatrix}
\end{equation}
and the left matrix is itself a (generalized) Laplacian matrix on a weighted graph which we will refer to as the ``lifted graph''. 

The inverse of a Laplacian has a natural interpretation in terms of electrical flows, where the edge weights are interpreted as conductances of resistors. In the next Lemma we summarize the relevant facts about this interpretation, as can be found in e.g. \cite{bollobas2013modern}
\begin{lemma}
\label{lem:Reff-facts}
Suppose that $L$ is a (generalized) Laplacian matrix. Then if $L^+$ is the pseudo-inverse of $L$, and we define the \emph{effective resistance} $\Reff(i,j) := (e_i - e_j)^TL^+(e_i - e_j)$ then $\Reff$ satisfies:
\begin{itemize}
    \item (Nonnegativity) $\Reff(i,j) \ge 0$.
    \item (Monotonicity) $\Reff(i,j) \le \frac{1}{|L_{ij}|}$, and more generally $\Reff$ decreases when adding edges to the original adjacency matrix.
    \item (Triangle inequality)
$\Reff(i,k) \le \Reff(i,j) + \Reff(j,k)$
for any $i,j,k$.
\end{itemize}
\end{lemma}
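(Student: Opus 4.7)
My plan is to verify the three properties in turn, each by a standard tool from the theory of electrical networks; the statement is classical and can be found in references such as \cite{bollobas2013modern}.

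For nonnegativity, I would first note that $L$ is PSD. This follows because any SDD matrix with nonnegative diagonal entries is PSD, for instance by Gershgorin's theorem applied to $L$, or directly by writing $L = B^T W B$ for an appropriate signed incidence matrix $B$ and a diagonal matrix $W$ of nonnegative edge weights (with self-loops handled via a ground vertex, as in the lifting construction of \eqref{eqn:lift-laplacian}). Since $L \succeq 0$ we have $L^+ \succeq 0$, and therefore $\Reff(i,j) = (e_i - e_j)^T L^+ (e_i - e_j) \ge 0$.

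For the monotonicity statement, I would invoke Thomson's principle, which represents effective resistance as a minimum electrical energy over flows:
\[
\Reff(i,j) = \min_{f:\, \partial f = e_i - e_j} \sum_e \frac{f_e^2}{w_e},
\]
where the minimum ranges over unit $i$-to-$j$ flows on the underlying weighted graph with edge weights $w_e$. (This identity comes from writing $L = B^T W B$ and solving the resulting equality-constrained least-squares problem via Lagrange multipliers.) Adding edges, or increasing existing edge weights, can only decrease $1/w_e$, so the minimum can only decrease --- giving the general monotonicity claim. The bound $\Reff(i,j) \le 1/|L_{ij}|$ is the special case in which we consider the flow that routes all of the unit current along the single edge $\{i,j\}$: it is feasible and attains energy $1/|L_{ij}|$, which is therefore an upper bound on the minimum.

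For the triangle inequality I would take the commute-time detour. After reducing to a proper graph Laplacian by adjoining a ground vertex $\ast$ connected to each $i$ with edge weight $L_{ii} - \sum_{j \ne i}|L_{ij}|$ (this yields a standard Laplacian $\tilde L$ whose effective resistances between original vertices agree with those of $L$), the classical identity $2m\,\Reff(i,j) = h_{ij} + h_{ji}$ holds for the weighted random walk associated to $\tilde L$, where $h_{ab}$ denotes the expected hitting time of $b$ starting from $a$ and $m$ is the total edge weight. By the strong Markov property, $h_{ik} \le h_{ij} + h_{jk}$ (we can always insist on hitting $j$ before $k$, and dropping that restriction only speeds things up); adding the symmetric bound $h_{ki} \le h_{kj} + h_{ji}$ and dividing by $2m$ yields $\Reff(i,k) \le \Reff(i,j) + \Reff(j,k)$.

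The main obstacle is the triangle inequality. The first two properties follow almost immediately from the PSD-ness and variational characterization of $L^+$, but the triangle inequality for $\Reff$ itself is more subtle --- indeed, the natural representation $\Reff(i,j) = \|L^{+/2}(e_i - e_j)\|^2$ expresses $\Reff$ as a squared Euclidean distance, which does not satisfy a triangle inequality in general. The commute-time argument sidesteps this by converting the problem into a monotonicity statement for hitting times, for which the strong Markov property gives a one-line proof.
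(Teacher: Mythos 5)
Your proposal is correct, but there is nothing in the paper to compare it against: the paper states Lemma~\ref{lem:Reff-facts} as a summary of standard facts and simply cites \cite{bollobas2013modern}, offering no proof of its own. All three of your arguments are sound. Nonnegativity from $L \succeq 0$ (hence $L^+ \succeq 0$) is immediate once you note that a symmetric diagonally dominant matrix with nonnegative diagonal is PSD. Thomson's principle correctly yields both halves of the monotonicity claim, with the single-edge flow giving the $1/|L_{ij}|$ bound; the ground-vertex lift you describe is exactly the reduction the paper itself uses elsewhere (e.g.\ in Lemma~\ref{lem:bound-after-conditioning-ij}), and it does preserve effective resistances between the original vertices, so the reduction from generalized to standard Laplacians is legitimate. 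For the triangle inequality, your commute-time detour via $2m\,\Reff(i,j) = h_{ij} + h_{ji}$ and subadditivity of hitting times is a valid classical proof; the more common textbook route is to write $\Reff(i,k) = \phi_i - \phi_k = (\phi_i - \phi_j) + (\phi_j - \phi_k)$ for the unit-current potential $\phi = L^+(e_i - e_k)$ and bound each summand by the corresponding effective resistance using the maximum principle for harmonic functions, which avoids introducing the random walk altogether --- but both arguments are standard and correct, and your observation that the naive ``squared Euclidean distance'' representation does not directly give a triangle inequality is exactly the right thing to flag.
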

\subsection{Key structural results for Walk-Summable GGM}
First we prove a fundamental fact about $\kappa$-nondegeneracy in walk-summable models, mentioned earlier: the maximum degree $d$ always satisfies $d = O(1/\kappa^2)$ in $\kappa$-nondegenerate walk-summable models. This result is tight for star graphs.
\begin{lemma}\label{lem:d-bounded-by-kappa}
In a $\kappa$-nondegenerate walk-summable GGM, the maximum degree of any node is at most $1/\kappa^2$.
\end{lemma}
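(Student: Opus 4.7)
The plan is to reduce to the SDD case via Theorem~\ref{thm:rescaling} and then apply the SDD inequality twice (once at a neighbor and once at the vertex itself) together with the $\kappa$-nondegeneracy lower bound. The key preliminary observation is that the quantity $|\Theta_{ij}|/\sqrt{\Theta_{ii}\Theta_{jj}}$ is \emph{invariant} under conjugation by any positive diagonal matrix $D = \diag(d_1,\dots,d_n)$: if $\widetilde{\Theta} = D\Theta D$, then $\widetilde{\Theta}_{ij} = d_i d_j \Theta_{ij}$ and $\widetilde{\Theta}_{ii} = d_i^2 \Theta_{ii}$, so the factors of $d_i, d_j$ cancel in the ratio. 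Consequently, a walk-summable GGM may be assumed without loss of generality to have SDD precision matrix, while preserving $\kappa$-nondegeneracy and the neighborhood structure (the set $N(i)$ and hence the maximum degree).

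With $\Theta$ now SDD and $\kappa$-nondegenerate, fix any vertex $i$ and any neighbor $j \in N(i)$. Applying the SDD inequality at row $j$ and then the $\kappa$-nondegeneracy bound,
\[
\Theta_{jj} \;\ge\; \sum_{k \ne j} |\Theta_{jk}| \;\ge\; |\Theta_{ij}| \;\ge\; \kappa\sqrt{\Theta_{ii}\Theta_{jj}},
\]
which after dividing by $\sqrt{\Theta_{jj}}$ gives $\sqrt{\Theta_{jj}} \ge \kappa\sqrt{\Theta_{ii}}$, i.e.\ $\Theta_{jj} \ge \kappa^2\,\Theta_{ii}$. Intuitively, this is saying that none of the diagonal entries at neighbors of $i$ can be too small relative to $\Theta_{ii}$: otherwise the single edge $(i,j)$ would already violate SDD at $j$.

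Now apply the SDD inequality at row $i$, combine with $\kappa$-nondegeneracy, and substitute the bound from the previous step:
\[
\Theta_{ii} \;\ge\; \sum_{j \in N(i)} |\Theta_{ij}| \;\ge\; \kappa\,\sqrt{\Theta_{ii}}\sum_{j \in N(i)} \sqrt{\Theta_{jj}} \;\ge\; \kappa^2\,|N(i)|\,\Theta_{ii}.
\]
Dividing through by $\Theta_{ii} > 0$ yields $|N(i)| \le 1/\kappa^2$, which is the claimed bound on the degree of $i$.

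The proof is short enough that I don't expect a genuine obstacle, but the one step that has to be handled carefully is the reduction to SDD: one must check that the reduction does not distort the quantities in the hypothesis or conclusion. Both the neighborhood (and thus $d$) and the $\kappa$-nondegeneracy ratio are invariants of the diagonal rescaling, so the reduction is clean. It is also worth flagging that this argument is tight: on a star with leaf parameters tuned so that SDD holds with equality at the center, one can make $d = \lfloor 1/\kappa^2\rfloor$.
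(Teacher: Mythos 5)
Your proof is correct, but it takes a genuinely different route from the paper's. You first invoke the full strength of Theorem~\ref{thm:rescaling} to put $\Theta$ into SDD form (correctly noting that both the ratio $|\Theta_{ij}|/\sqrt{\Theta_{ii}\Theta_{jj}}$ and the sparsity pattern are invariant under diagonal conjugation), and then apply the row-sum inequality twice: once at each neighbor $j$ to get $\Theta_{jj} \ge \kappa^2 \Theta_{ii}$, and once at $i$ itself to conclude $\Theta_{ii} \ge \kappa^2 |N(i)|\,\Theta_{ii}$. The paper instead works directly from the definition of walk-summability: it normalizes the diagonal to all ones (a trivial rescaling, no Perron--Frobenius needed), observes that the sign-flipped matrix $\overline{\Theta}$ restricted to $\{i\} \cup N(i)$ is PSD, and tests it against the vector $v = (1,\kappa,\dots,\kappa)$ to obtain $0 \le v^T \overline{\Theta}_S v \le 1 - d\kappa^2$. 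The trade-off is that your argument is a clean two-line chain of inequalities once the SDD normalization is in hand, but it leans on the rescaling theorem (whose proof uses Perron--Frobenius), whereas the paper's quadratic-form argument is self-contained given only the definition of walk-summability. Both arguments are tight on the star, as you note, and both yield exactly $d \le 1/\kappa^2$.
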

\begin{proof}
Rescale the coordinates so that the diagonal of $\Theta$ is all-1s, and reorder them so that $X_1$ corresponds to the node of maximum degree $d$ with neighbors $2,\ldots,d+1$. Define $\overline{\Theta}$ to be the sign-flipped version of $\Theta$ such that all off-diagonal entries are negative; by the definition of walk-summability we know $\overline{\Theta}$ is still PSD.
Let $v = (1,\kappa,\ldots,\kappa) \in \mathbb{R}^{d + 1}$ and $S = \{1,\ldots, d + 1\}$; then using that the off-diagonals are negative, $\kappa$-nondegeneracy we find that $\Theta_{d + 1, d + 1} v \le (1 - d\kappa^2, 0,\ldots,0)$ coordinate-wise, hence using $\overline{\Theta} \succeq 0$ we find
\[ 0 \le v^T \Theta_{d + 1, d + 1} v \le v^T (1 - d\kappa^2, 0, \ldots, 0) = 1 - d\kappa^2. \]
Rearranging we see that $d \le 1/\kappa^2$.
\end{proof}

In the remainder of this subsection we prove some key structural results about walk-summable/SDD GGM using the SDD to Laplacian reduction and the electrical interpretation of the inverse Laplacian; these results will be crucial for analyzing the algorithms for both attractive and general walk-summable GGMs.

The following key Lemma, which shows that the variance between two adjacent random variables in the SDD GFF cannot differ by too much, will be crucial in the analysis of our algorithm in non-attractive models. Why is this useful? Informally, this is because for the greedy method to significantly reduce the variance of node $i$, at least one neighbor of $i$ needs to provide a good ``signal-to-noise ratio'' for estimating $X_i$, and under the SDD scaling, this inequality shows that the neighbors do not have too much extra noise (compared to $|\Theta_{ij}|$ which roughly corresponds to the level of signal between nodes $i$ and $j$).
\begin{lemma}\label{lem:sdd-smooth-variance}
Suppose that $\Theta$ is an invertible SDD matrix. Let $\Sigma = \Theta^{-1}$.
If $\Theta_{ij} \ne 0$, then
\[ \Sigma_{ii} \le 1/|\Theta_{ij}| + \Sigma_{jj}. \]
\end{lemma}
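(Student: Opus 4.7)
The plan is to reinterpret $\Sigma_{ii}$ as an effective resistance in the lifted graph from Section~\ref{sec:electrical-flows} and then apply the triangle inequality for $R_{\text{eff}}$ given by Lemma~\ref{lem:Reff-facts}. Concretely, write $\Theta = L - P$ where $L$ is the generalized Laplacian obtained from the diagonal and negative off-diagonal part of $\Theta$, and $P$ has zero diagonal and nonpositive off-diagonals corresponding to the positive off-diagonal entries of $\Theta$. Then the lifted matrix
\[ \tilde L := \begin{pmatrix} L & P \\ P & L \end{pmatrix} \]
is a generalized Laplacian on a graph with two disjoint copies of $[n]$, with vertices I will call $p_1,\ldots,p_n$ (``primary'') and $s_1,\ldots,s_n$ (``secondary'').

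The key identity~\eqref{eqn:lift-laplacian} then says: if $x = \Sigma e_i$, then $\tilde L(x,-x)^\top = (e_i,-e_i)^\top$. Electrically, this means that injecting unit current at $p_i$ and extracting it at $s_i$ produces a potential of $x_k = \Sigma_{ik}$ at $p_k$ and $-\Sigma_{ik}$ at $s_k$, so the potential drop from $p_i$ to $s_i$ equals $2\Sigma_{ii}$. Hence $R_{\text{eff}}(p_i,s_i) = 2\Sigma_{ii}$, and analogously $R_{\text{eff}}(p_j,s_j) = 2\Sigma_{jj}$. Next I locate an edge of conductance $|\Theta_{ij}|$ in the lifted graph. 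If $\Theta_{ij} < 0$ then $L_{ij} = \Theta_{ij}$ and $P_{ij} = 0$, giving edges of weight $|\Theta_{ij}|$ between $p_i,p_j$ and between $s_i,s_j$; if $\Theta_{ij} > 0$ then $L_{ij} = 0$ and $P_{ij} = -\Theta_{ij}$, giving edges of weight $|\Theta_{ij}|$ between $p_i,s_j$ and between $p_j,s_i$. In either case the monotonicity bullet of Lemma~\ref{lem:Reff-facts} gives $R_{\text{eff}}$ at most $1/|\Theta_{ij}|$ across each such edge.

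Applying the triangle inequality of Lemma~\ref{lem:Reff-facts} along the three-edge path that traverses one such $|\Theta_{ij}|$-edge, then the resistor $R_{\text{eff}}(p_j,s_j)$, then another $|\Theta_{ij}|$-edge (that is, $p_i \to p_j \to s_j \to s_i$ in the negative case, or $p_i \to s_j \to p_j \to s_i$ in the positive case), I obtain
\[ 2\Sigma_{ii} \;=\; R_{\text{eff}}(p_i,s_i) \;\le\; \frac{1}{|\Theta_{ij}|} + 2\Sigma_{jj} + \frac{1}{|\Theta_{ij}|}, \]
and dividing by $2$ finishes the proof.

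The main obstacle is verifying that the effective-resistance machinery of Lemma~\ref{lem:Reff-facts} actually applies to $\tilde L$: a priori $\tilde L$ is a generalized Laplacian rather than a pure Laplacian, and need not be invertible even when $\Theta$ is. I will handle this by attaching a single ground vertex to $\tilde L$ via the SDD slack, so that $\tilde L$ becomes a principal submatrix of a genuine graph Laplacian $\tilde L_{\mathrm{aug}}$ on the augmented graph; a short Schur-complement check shows that $(e_{p_i}-e_{s_i})^\top \tilde L^{-1}(e_{p_i}-e_{s_i})$ equals the effective resistance between $p_i$ and $s_i$ in $\tilde L_{\mathrm{aug}}$, to which Lemma~\ref{lem:Reff-facts} applies verbatim. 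If $\tilde L$ itself is singular, the same argument goes through after the standard perturbation $\Theta \mapsto \Theta + \varepsilon I$ and taking $\varepsilon \to 0$.
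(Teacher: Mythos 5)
Your proposal is correct and follows essentially the same route as the paper: both lift $\Theta$ to the generalized Laplacian of \eqref{eqn:lift-laplacian}, identify $2\Sigma_{ii}$ with $\Reff(i,i')$, and apply the triangle inequality and monotonicity from Lemma~\ref{lem:Reff-facts} along the three-edge path through $j$ and $j'$ (with the same sign case analysis). Your extra care in grounding the generalized Laplacian and handling singularity is a reasonable supplement but does not change the argument.
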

\begin{proof}
Let $M$ be the generalized Laplacian matrix resulting from applying the SDD to Laplacian reduction from $\Sigma$, i.e. $M$ is the left hand-side of \eqref{eqn:lift-laplacian}. Let the standard basis for $\mathbb{R}^{2n}$ be denoted $e_1,\ldots,e_n,e'_1,\ldots,e'_n$.
Observe from \eqref{eqn:lift-laplacian} that
\[ \Sigma_{ii} = e_i^T \Theta^{-1} e_i = e_i^T M^+ (e_i - e'_i) = \frac{1}{2} (e_i - e'_i)^T M^+ (e_i - e'_i). \]

Let node label $i$ be the node corresponding to $e_i$ in the graph corresponding to $M$, and label $i'$ be that corresponding to $e'_i$.
Observe that in the graph corresponding to $M$, either $i$ is adjacent to $j$ and $i'$ is adjacent to $j'$, or $i$ is adjacent to $j'$ and $i'$ is adjacent to $j$. Let $r = \Reff(i,j)$ in the first case and $r = \Reff(i,j')$ in the second case.
By the triangle inequality (Lemma~\ref{lem:Reff-facts}) and monotonicity of effective resistance (Lemma~\ref{lem:Reff-facts}),
\[ 2\Sigma_{ii} = \Reff(i,i') \le 2r + \Reff(j,j') \le 2/|\Theta_{ij}| + 2\Sigma_{jj} \]
which proves the result.
\end{proof}
\begin{remark}
Note that the above Lemma is for $\Theta$ under the true SDD scaling. It would not make sense for general $\Theta$, because the left hand and right hand sides do not scale in the same way. 
\end{remark}
The following two lemmas show that in a SDD GGM, the variance of a single node can be bounded as long as we condition on any of its neighbors. In comparison, if we don't condition on anything then the variance can be arbitrarily large: consider the Laplacian of any graph plus a small multiple of the identity.
\begin{lemma}\label{lem:bound-after-conditioning-ij}
Suppose that $i$ is a non-isolated node in an SDD GGM. Then for any neighbor $j$ it holds that
\[ \Var(X_i | X_j) \le \frac{1}{|\Theta_{ij}|} \]
\end{lemma}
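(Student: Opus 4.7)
The plan is to apply the SDD-to-Laplacian lifting of Section~\ref{sec:electrical-flows} to the \emph{conditional} precision matrix $\Theta' := \Theta_{\sim j, \sim j}$, and then observe that deleting row/column $j$ creates extra SDD slack (a ``ground edge'') of weight at least $|\Theta_{ij}|$ at both $i$ and its mirror copy $i'$. This gives a two-step electrical path from $i$ to $i'$ through ground whose energy immediately yields the bound.

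First I would recall the standard GGM fact that conditioning a mean-zero Gaussian on $X_j$ yields a GGM on $X_{\sim j}$ whose precision matrix is precisely $\Theta' = \Theta_{\sim j, \sim j}$, so in particular $\Var(X_i \mid X_j) = (\Theta')^{-1}_{ii}$. Since principal submatrices of SDD matrices are again SDD, we may form the lifted generalized Laplacian $M'$ for $\Theta'$ via~\eqref{eqn:lift-laplacian}, and the same identity used in the proof of Lemma~\ref{lem:sdd-smooth-variance} gives
\[ \Var(X_i \mid X_j) \;=\; \tfrac{1}{2}(e_i - e'_i)^T (M')^{-1}(e_i - e'_i). \]

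The key structural observation is that passing from $\Theta$ to $\Theta'$ increases the SDD slack in row $i$ by exactly $|\Theta_{ij}|$:
\[ \Theta_{ii} - \sum_{k \ne i,\, k \ne j} |\Theta_{ik}| \;=\; \Bigl(\Theta_{ii} - \sum_{k \ne i} |\Theta_{ik}|\Bigr) + |\Theta_{ij}| \;\ge\; |\Theta_{ij}|, \]
and the identical slack appears at row $i'$ of $M'$. If we augment the lifted graph by a single ``ground'' vertex $g$ joined to each vertex $u$ with an edge of conductance equal to the row-$u$ SDD slack of $M'$, then $\tilde M'$ is a standard Laplacian, and extending any solution of $M'\phi = e_i - e'_i$ by $\phi_g = 0$ shows
\[ (e_i - e'_i)^T (M')^{-1}(e_i - e'_i) \;=\; \Reff_{\tilde M'}(i, i'). \]
In particular, in $\tilde M'$ there are direct edges from $g$ to $i$ and from $g$ to $i'$, each of conductance at least $|\Theta_{ij}|$.

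The conclusion is then immediate from Lemma~\ref{lem:Reff-facts}: the triangle inequality for effective resistance together with the edge-wise monotonicity bound $\Reff(u,v) \le 1/|L_{uv}|$ applied to these two ground edges give
\[ \Reff_{\tilde M'}(i, i') \;\le\; \Reff_{\tilde M'}(i, g) + \Reff_{\tilde M'}(g, i') \;\le\; \tfrac{1}{|\Theta_{ij}|} + \tfrac{1}{|\Theta_{ij}|} \;=\; \tfrac{2}{|\Theta_{ij}|}, \]
which yields $\Var(X_i \mid X_j) \le 1/|\Theta_{ij}|$ as desired. The only mildly delicate step I expect is the bookkeeping that identifies the quadratic form in $(M')^{-1}$ with the effective resistance in the augmented Laplacian $\tilde M'$ (i.e., that converting the diagonal slack into ground edges is equivalent to inverting $M'$); once this is set up, everything else is a direct invocation of Lemma~\ref{lem:Reff-facts}.
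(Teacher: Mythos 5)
Your proposal is correct and follows essentially the same route as the paper: condition out $j$ to get the SDD matrix $\Theta_{\sim j,\sim j}$, lift it to a generalized Laplacian, absorb the per-row SDD slack (which is at least $|\Theta_{ij}|$ at $i$ and $i'$) into edges to an added ground vertex, and bound $\Reff(i,i')$ by the triangle inequality and edge monotonicity through that vertex. The "mildly delicate" bookkeeping you flag is handled in the paper exactly as you describe, via extending solutions of $L v = u$ by a zero coordinate at the ground node.
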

\begin{proof}
This result can be obtained from the previous Lemma~\ref{lem:sdd-smooth-variance} by taking an appropriate limit which sends $\Sigma_{jj} \to 0$. We give an alternate and direct proof below.

Apply the SDD to Laplacian reduction to the precision matrix (with row and column $j$ eliminated) as in Lemma~\ref{lem:sdd-smooth-variance} to get a generalized Laplacian $L$, and then form the standard Laplacian $M$ 
by adding an additional row and column $n + 1$ with $M_{i,n + 1} = L_{ii} - \sum_{j = 1}^n L_{ij}$ and $M_{n+1,n+1} = \sum_{j = 1}^n M_{j,n}$. Then $u = Lv$ iff there exists $z$ s.t. $(u,z) = M(v,0)$ where $(v,0)$ denotes the vector in $\mathbb{R}^{n + 1}$ given by adding final coordinate 0. Furthermore it must be that $\sum_i u_i + z = 0$ because $(u,z)$ lies in the span of $M$. Using the relation between $L$ and $M$ and the triangle inequality and monotonicity (Lemma~\ref{lem:Reff-facts}) through the added node $n + 1$ we observe
\begin{align*} 
\Var(X_i | X_j) 
&= \frac{1}{2} (e_i - e'_i)^T L^{-1} (e_i - e'_i) \\
&= \frac{1}{2} (e_i - e'_i)^T M^{+} (e_i - e'_i) \\
&\le \frac{1}{2} (e_i - e_{n + 1})^T M^+ (e_i - e_{n + 1}) + \frac{1}{2} (e'_i - e_{n + 1})^T M^+ (e_i' - e_{n + 1}) \\
&\le \frac{1}{2} \frac{1}{M_{i,n+1}} + \frac{1}{2} \frac{1}{M_{i',n+1}} \le \frac{1}{|\Theta_{ij}|}.
\end{align*}
\end{proof}
\begin{lemma}\label{lem:bound-after-conditioning}
Suppose that $i$ is a non-isolated node with $d$ neighbors in an SDD GGM. Then for at least one neighbor $j$ it holds that
\[ \Var(X_i | X_j) \le \frac{4d}{\Theta_{ii}} \]
\end{lemma}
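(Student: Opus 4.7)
The plan is to case-split on whether some neighbor $j$ of $i$ carries a sufficiently large edge weight. Fix the threshold $T := \Theta_{ii}/(4d)$.

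Case 1: Some neighbor $j$ of $i$ satisfies $|\Theta_{ij}| \ge T$. In this case Lemma~\ref{lem:bound-after-conditioning-ij} directly yields
\[ \Var(X_i | X_j) \le \frac{1}{|\Theta_{ij}|} \le \frac{1}{T} = \frac{4d}{\Theta_{ii}}, \]
and this $j$ witnesses the conclusion of the lemma.

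Case 2: Every neighbor $j$ of $i$ satisfies $|\Theta_{ij}| < T$. Summing over the $d$ neighbors, $\sum_{k \ne i} |\Theta_{ik}| < dT = \Theta_{ii}/4$, so the SDD ``slack'' at $i$, namely $s_i := \Theta_{ii} - \sum_{k \ne i} |\Theta_{ik}|$, satisfies $s_i > 3\Theta_{ii}/4$. I will argue that in this regime the \emph{unconditional} variance is already small, $\Var(X_i) \le 1/s_i$. Combined with the law of total variance $\Var(X_i) \ge \Var(X_i | X_j)$, this gives, for every neighbor $j$,
\[ \Var(X_i | X_j) \le \Var(X_i) \le \frac{1}{s_i} < \frac{4}{3\Theta_{ii}} \le \frac{4d}{\Theta_{ii}}. \]

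The key step is the claim $\Var(X_i) \le 1/s_i$, which I will establish by showing $\Theta \succeq D$ in the Loewner order, where $D := \diag(s_1, \ldots, s_n)$. Write $\Theta = D + L$, where $L$ has off-diagonal entries $L_{ij} = \Theta_{ij}$ and row-balanced diagonal $L_{ii} = \sum_{k \ne i} |\Theta_{ik}|$. Then $L$ admits the explicit PSD decomposition
\[ L = \sum_{i < j:\, \Theta_{ij} \ne 0} |\Theta_{ij}|\,(e_i - \sigma_{ij} e_j)(e_i - \sigma_{ij} e_j)^T, \qquad \sigma_{ij} := -\operatorname{sign}(\Theta_{ij}), \]
which is a sum of rank-one PSD matrices and hence PSD. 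So $x^T \Theta x \ge x^T D x \ge s_i\, x_i^2$ for all $x$, and applying the variational characterization $(\Theta^{-1})_{ii} = 1/\min_{x : x_i = 1} x^T \Theta x$ gives $\Var(X_i) \le 1/s_i$, as claimed.

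I do not foresee substantial obstacles: both cases are elementary once the threshold is chosen, and the rank-one decomposition of $L$ is a standard trick (and is essentially the PSD side of the walk-summability of $\Theta$). One point worth highlighting is that Case~2 does not use the conditioning information at all, only the SDD slack; this is what forces the factor of $d$ in the bound, since a large $d$ is exactly what allows all individual $|\Theta_{ij}|$'s to be small while the slack $s_i$ remains large.
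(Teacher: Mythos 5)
Your proof is correct, but it reaches the conclusion by a genuinely different route than the paper. Both arguments rest on the same dichotomy --- either some $|\Theta_{ij}|$ is at least $\Theta_{ii}/4d$ (in which case Lemma~\ref{lem:bound-after-conditioning-ij} finishes), or the unconditional variance $\Var(X_i)$ is already $O(d/\Theta_{ii})$ --- but they prove the second horn differently. The paper starts from the large-variance side: it expands $\Theta_{ii}\Var(\E[X_i\mid X_{\sim i}])$ as a sum of covariances, applies Cauchy--Schwarz, and then invokes Lemma~\ref{lem:sdd-smooth-variance} (the effective-resistance comparison $\Sigma_{jj}\le 1/|\Theta_{ij}|+\Sigma_{ii}$) to convert a bound involving the neighbors' variances into one involving $\Var(X_i)$ itself, concluding that a large variance forces a heavy edge. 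You instead go directly: if every edge is light, the SDD slack $s_i=\Theta_{ii}-\sum_{k\ne i}|\Theta_{ik}|$ exceeds $3\Theta_{ii}/4$, and the Loewner bound $\Theta\succeq\diag(s_1,\dots,s_n)$ (via the standard rank-one decomposition of the row-balanced part, whose validity depends on $s_k\ge 0$ for all $k$, i.e.\ on SDD) together with the Schur-complement characterization $(\Theta^{-1})_{ii}=1/\min_{x:x_i=1}x^T\Theta x$ gives $\Var(X_i)\le 1/s_i<4/(3\Theta_{ii})$. Your version is more elementary and self-contained for this step --- it bypasses Lemma~\ref{lem:sdd-smooth-variance} and all electrical reasoning except what is already packaged in Lemma~\ref{lem:bound-after-conditioning-ij} --- and in the light-edge case it actually yields the stronger constant $4/(3\Theta_{ii})$ rather than $4d/\Theta_{ii}$. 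The paper's covariance-expansion technique is the one that gets reused and refined elsewhere (e.g.\ in Lemma~\ref{lem:kappa-variance-ferromagnetic} and the walk-summable regression analysis), which is presumably why it is presented in that form, but as a standalone proof of this lemma your argument is cleaner.
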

\begin{proof}

We establish the following dichotomy: either $\Var(X_i)$ is already small, or if it is large then there is a $j$ s.t. $1/|\Theta_{ij}|$ is small so $\Var(X_i | X_j)$ is small. Observe by Cauchy-Schwartz that
\begin{align*}
\Theta_{ii} \Var(\E[X_i | X_{\sim i}]) 
= \Theta_{ii} \Cov(\E[X_i | X_{\sim i}], \E[X_i | X_{\sim i}])
&= \sum_j -\Theta_{ij} \Cov(\E[X_i | X_{\sim i}], X_j) \\
&\le \sum_j |\Theta_{ij}| \sqrt{\Var(\E[X_i | X_{\sim i}]) \Var(X_j)}
\end{align*}
so
\begin{align*}
 \Theta_{ii} \sqrt{\Var(\E[X_i | X_{\sim i}])} 
 \le \sum_j |\Theta_{ij}| \sqrt{\Var(X_j)} 
 \le \sum_j |\Theta_{ij}| \sqrt{\Var(X_i) + 1/|\Theta_{ij}|} 
 &\le \sqrt{\Var(X_i)} \sum_j |\Theta_{ij}| + \sum_j \sqrt{|\Theta_{ij}|}  \\
 &\le \sqrt{\Var(X_i)} \sum_j |\Theta_{ij}| + \sqrt{d \Theta_{ii}}
\end{align*} 
where in the second inequality we used Lemma~\ref{lem:sdd-smooth-variance}, in the third inequality we used $\sqrt{a + b} \le \sqrt{a} + \sqrt{b}$, and in the fourth inequality we used Cauchy-Schwartz and the SDD assumption.

Suppose that $\Var(\E[X_i | X_{\sim i}]) > 4d/\Theta_{ii}$. Then by subtracting $d\sqrt{\Theta_{ii}}$ from both sides we see
\[ \frac{1}{2} \Theta_{ii} \sqrt{\Var(\E[X_i | X_{\sim i}])} \le \sqrt{\Var(X_i)} \sum_j |\Theta_{ij}| \le \sqrt{\Var(X_i)} d \max_j |\Theta_{ij}| \]
so using that $\Var(\E[X_i | X_{\sim i}]) = \Var(X_i) - 1/\Theta_{ii} \ge \Var(X_i)/2$ under our assumption, we find
\[ \frac{\Theta_{ii}}{4d} \le \frac{\Theta_{ii}}{2d} \sqrt{\frac{\Var(\E[X_i | X_{\sim i}])}{\Var(X_i)}} \le \max_j |\Theta_{ij}|. \]
Let $j$ be the maximizer, then from Lemma~\ref{lem:bound-after-conditioning-ij} we find $\Var(X_i | X_j) \le \frac{1}{|\Theta_{ij}|} \le \frac{4d}{\Theta_{ii}}$, assuming that $\Var(X_i) > 4d/\Theta_{ii}$. Otherwise, by the law of total variance we know $\Var(X_i | X_j) \le \Var(X_i) \le 4d/\Theta_{ii}$. 
\end{proof}
The following example shows that the assumption that the matrix is SDD (or walk-summable) is necessary for the previous Lemmas to be true:
\begin{example}[Failure of Lemma~\ref{lem:bound-after-conditioning-ij} in Non-SDD GGM]\label{example:big-cancellation}
Consider for $\kappa$ fixed and $C$ large
\[ \Theta := \begin{bmatrix}
1 & C & -C \\
C & C^2/\kappa^2 & -C^2/\kappa^2 + 1 \\
-C & -C^2/\kappa^2 + 1 & C^2/\kappa^2
\end{bmatrix} \]
We can verify that as $C \to \infty$ that the variances (i.e. diagonal of $\Theta^{-1}$) remain $\Theta(1)$ and the matrix is positive definite; furthermore this model is $\kappa$-nondegenerate. However, even after conditioning out the first node, the variance of the second (and third) node remains $\Omega(1) \gg  1/C$.
\end{example}

\section{Estimating changes in conditional variance}
As alluded to before, our algorithms rely on estimating (differences of) conditional variances $\Var(X_i | X_S)$. The classical approach for estimating them is to solve a linear regression problem trying to predict $X_i$ from $X_S$. As we are working in a sample-starved regime and deal with ill-conditioned matrices, we require very fine grained results about such estimates. We collect such results in this section. 

For the analysis of Algorithm~\textsc{HybridMB} we only need the basic facts from Section 4.1; for the analysis of Algorithm~\textsc{GreedyAndPrune} the key additional fact we need is encapsulated as Lemma~\ref{lem:estimating-variance-decrement-v} in Section 4.3 below; finally, for the analysis of the Algorithm~\textsc{SearchAndValidate} we will also directly use the results stated in Section 4.2.
\subsection{Background: Fixed Design Linear Regression}\label{sec:fixed-design}
In this section we recall the standard model for linear regression with Gaussian noise and the usual ordinary least squares estimator and some classical facts about it. See Chapter 14 of \cite{keener2011theoretical} for a reference.
\begin{defn}[Fixed design regression with Gaussian noise]
 The (well-specified) \emph{fixed design regression} model is specified by an unknown parameter $w \in \mathbb{R}^k$, known \emph{design matrix} $\mathbbm{X} : m \times k$ with $m > k$ and observations
\[ \mathbbm{Y} = \mathbbm{X} w + \Xi \]
where $\Xi \sim N(0, \sigma^2 I)$. In other words, $\mathbbm{Y} \sim N(\mathbbm{X} w, \sigma^2 I)$.
\end{defn}
\begin{defn}[Ordinary Least Squares (OLS) Estimator]\label{defn:ols}
The OLS estimator for $w$ in the fixed design regression model is the minimizer of
\[ \min_w \|\mathbbm{Y} - \mathbbm{X} w\|_2^2 \]
explicitly given by
\[ \hat{w} := (\mathbbm{X}^T\mathbbm{X})^{-1} \mathbbm{X}^T \mathbbm{Y} \]
assuming that $\mathbbm{X}$ has maximal column rank. 
The corresponding estimator for $\sigma$ is given by
\[ \hat{\sigma}^2 := \frac{1}{m - k} \|\mathbbm{Y} - \mathbbm{X} \hat{w}\|_2^2. \]
\end{defn}

\begin{fact}[\cite{keener2011theoretical}]
\label{fact:ols-fixed-design}
Under the fixed design regression model with Gaussian noise, 
$\hat{w} \sim N(w, \sigma^2 (\mathbbm{X}^T\mathbbm{X})^{-1})$
and $\frac{(m - k)\hat{\sigma}^2}{\sigma^2} \sim \chi_{m - k}^2$ where $\chi^2_{m - k}$ denotes a $\chi^2$-distribution with $m - k$ degrees of freedom.  Furthermore, $\hat{w}$ and $\hat{\sigma}$ are independent.
\end{fact}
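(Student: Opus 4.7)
The plan is to exploit the orthogonal decomposition of the Gaussian noise vector $\Xi$ relative to the column space of the design matrix $\mathbbm{X}$, which is the standard route to all three claims simultaneously.

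First I would compute $\hat{w}$ explicitly. Substituting $\mathbbm{Y} = \mathbbm{X} w + \Xi$ into the closed form of the OLS estimator gives $\hat{w} = w + (\mathbbm{X}^T\mathbbm{X})^{-1}\mathbbm{X}^T \Xi$. Since $\Xi \sim N(0,\sigma^2 I_m)$ and $\hat{w} - w$ is a fixed linear function of $\Xi$, $\hat{w}$ is Gaussian with mean $w$ and covariance $(\mathbbm{X}^T\mathbbm{X})^{-1}\mathbbm{X}^T \cdot \sigma^2 I_m \cdot \mathbbm{X}(\mathbbm{X}^T\mathbbm{X})^{-1} = \sigma^2 (\mathbbm{X}^T\mathbbm{X})^{-1}$, which gives the first claim.

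Next I would analyze $\hat{\sigma}^2$ via the projection $P := \mathbbm{X}(\mathbbm{X}^T\mathbbm{X})^{-1}\mathbbm{X}^T$ onto the column space of $\mathbbm{X}$. The residual vector satisfies $\mathbbm{Y} - \mathbbm{X}\hat{w} = (I - P)\mathbbm{Y} = (I - P)\Xi$, since $(I-P)\mathbbm{X} = 0$. Because $I - P$ is an orthogonal projection onto a subspace of dimension $m - k$ (assuming $\mathbbm{X}$ has full column rank) and $\Xi/\sigma \sim N(0,I_m)$ is rotationally invariant, $\|(I-P)\Xi/\sigma\|_2^2$ is the squared norm of a standard Gaussian in an $(m-k)$-dimensional subspace, which is $\chi^2_{m-k}$ distributed. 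Thus $(m-k)\hat{\sigma}^2/\sigma^2 = \|(I-P)\Xi\|_2^2/\sigma^2 \sim \chi^2_{m-k}$.

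Finally, for independence, I would observe that $\hat{w} - w = (\mathbbm{X}^T\mathbbm{X})^{-1}\mathbbm{X}^T \Xi$ is a linear function of $P\Xi$ (since $\mathbbm{X}^T = \mathbbm{X}^T P$), while the residual $(I - P)\Xi$ is a linear function of $(I-P)\Xi$. These two Gaussian vectors $P\Xi$ and $(I-P)\Xi$ are jointly Gaussian, and their cross-covariance is $P \cdot \sigma^2 I \cdot (I - P) = \sigma^2 P(I - P) = 0$. Zero covariance for jointly Gaussian vectors implies independence, so $\hat{w}$ (a deterministic function of $P\Xi$ plus constants) is independent of $\hat{\sigma}^2$ (a deterministic function of $(I-P)\Xi$). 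There is no serious obstacle here; the only care needed is the full-column-rank assumption on $\mathbbm{X}$ so that $P$ has rank exactly $k$ and $I - P$ has rank exactly $m - k$.
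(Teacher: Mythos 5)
Your proof is correct and complete: the explicit form $\hat{w} = w + (\mathbbm{X}^T\mathbbm{X})^{-1}\mathbbm{X}^T\Xi$, the identification of the residual with $(I-P)\Xi$ for the rank-$(m-k)$ orthogonal projection $I-P$, and the zero cross-covariance between $P\Xi$ and $(I-P)\Xi$ together establish all three claims. The paper gives no proof of this statement --- it is stated as a Fact with a citation to a standard reference --- and your projection-based argument is exactly the classical one found there, so there is nothing to compare beyond noting that you have supplied the standard derivation the paper omits.
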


\begin{lemma} 
\label{lem:ols-sigmahat}
For any $\delta \in (0,1)$,
  \[ \Pr\left(\left|\frac{\hat{\sigma}^2}{\sigma^2} - 1\right| > 2\sqrt{\frac{\log(2/\delta)}{m - k}} + 2\frac{\log(2/\delta)}{m - k}\right) \le \delta. \]
\end{lemma}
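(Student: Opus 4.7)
The plan is to deduce this bound as an almost immediate corollary of Fact~\ref{fact:ols-fixed-design} combined with the $\chi^2$-tail inequality of Lemma~\ref{lem:chi-squared-concentration}. The key observation is that Fact~\ref{fact:ols-fixed-design} tells us the distribution of $\hat{\sigma}^2$ exactly: setting $D := m - k$ and $U := D\hat{\sigma}^2/\sigma^2$, we have $U \sim \chi^2_D$. Therefore controlling $|\hat{\sigma}^2/\sigma^2 - 1|$ is the same as controlling $|U - D|/D$, which is precisely what Lemma~\ref{lem:chi-squared-concentration} is designed for.

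First I would apply the upper-tail bound of Lemma~\ref{lem:chi-squared-concentration} with failure probability $\delta/2$ to obtain
\[ U - D \le 2\sqrt{D \log(2/\delta)} + 2\log(2/\delta) \]
with probability at least $1 - \delta/2$. Dividing by $D = m - k$ yields
\[ \frac{\hat{\sigma}^2}{\sigma^2} - 1 \le 2\sqrt{\frac{\log(2/\delta)}{m - k}} + 2\frac{\log(2/\delta)}{m - k}. \]
Next I would apply the lower-tail bound of Lemma~\ref{lem:chi-squared-concentration}, again with failure probability $\delta/2$, giving
\[ D - U \le 2\sqrt{D \log(2/\delta)} \]
with probability at least $1 - \delta/2$, which after dividing by $D$ yields
\[ 1 - \frac{\hat{\sigma}^2}{\sigma^2} \le 2\sqrt{\frac{\log(2/\delta)}{m - k}} \le 2\sqrt{\frac{\log(2/\delta)}{m - k}} + 2\frac{\log(2/\delta)}{m - k}. \]

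Finally I would combine these two events via a union bound to conclude that both inequalities hold simultaneously with probability at least $1 - \delta$, giving the claimed two-sided deviation bound. There is essentially no obstacle here since the statement is just a direct rewriting of Laurent--Massart after substituting the exact sampling distribution of $\hat{\sigma}^2$; the only point requiring mild care is checking that the looser lower-tail bound is dominated by the stated upper-tail expression so that the single unified right-hand side suffices, which is immediate because the extra $2\log(2/\delta)/(m-k)$ term is nonnegative.
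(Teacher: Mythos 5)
Your proof is correct and is exactly the paper's argument: the paper's entire proof is the one-line instruction to combine Fact~\ref{fact:ols-fixed-design} (which gives $(m-k)\hat{\sigma}^2/\sigma^2 \sim \chi^2_{m-k}$) with the concentration inequality of Lemma~\ref{lem:chi-squared-concentration}, and you have simply carried out that combination, applying each tail at level $\delta/2$ and taking a union bound. No differences worth noting.
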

\begin{proof}
Combine Fact~\ref{fact:ols-fixed-design} and 
and the concentration inequality from Lemma~\ref{lem:chi-squared-concentration}.
\end{proof}
We end with a geometric interpretation of the OLS coordinates which is analogous to Lemma~\ref{lem:gaussian-conditional-expectation}. In statistics this is known as the equivalence of the regression $t$-test and the 1-variable regression $F$-test \cite{keener2011theoretical}.
\begin{lemma}\label{lem:t-equals-f}
\[ \min_w \|\mathbbm{Y} - \mathbbm{X} w\|_2^2 - \min_{w : w_i = 0} \|\mathbbm{Y} - \mathbbm{X} w\|_2^2 = \frac{\hat{w}_i^2}{[(\mathbbm{X}^T\mathbbm{X})^{-1}]_{ii}} \]
\end{lemma}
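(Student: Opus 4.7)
The plan is to interpret both minima as squared distances from $\mathbbm{Y}$ to nested column spaces, and then use the Pythagorean theorem together with the Frisch--Waugh--Lovell / Schur complement identities to turn the difference into the stated closed form. (Note that as stated the right-hand side is nonnegative while the left-hand side would be nonpositive; I will interpret the claim as $\min_{w : w_i = 0} \|\mathbbm{Y} - \mathbbm{X} w\|_2^2 - \min_{w} \|\mathbbm{Y} - \mathbbm{X} w\|_2^2 = \hat{w}_i^2 / [(\mathbbm{X}^T\mathbbm{X})^{-1}]_{ii}$, the usual statement of the $t = F$ identity.)

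First, let $V = \operatorname{col}(\mathbbm{X})$ and let $V_0 = \operatorname{col}(\mathbbm{X}_{\sim i})$ be the column span with the $i$-th column $x_i$ removed. Since $\min_w \|\mathbbm{Y} - \mathbbm{X} w\|_2^2 = \|(I - P_V)\mathbbm{Y}\|_2^2$ and the constraint $w_i = 0$ restricts the fit to $V_0$, the constrained minimum equals $\|(I - P_{V_0})\mathbbm{Y}\|_2^2$. Because $V_0 \subseteq V$, applying the Pythagorean theorem to $(I - P_{V_0})\mathbbm{Y} = (P_V - P_{V_0})\mathbbm{Y} + (I - P_V)\mathbbm{Y}$ gives
\[ \min_{w : w_i = 0}\|\mathbbm{Y} - \mathbbm{X} w\|_2^2 - \min_w \|\mathbbm{Y} - \mathbbm{X} w\|_2^2 = \|(P_V - P_{V_0})\mathbbm{Y}\|_2^2. \]

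Next I would compute this rank-one difference explicitly. Let $\tilde{x}_i := (I - P_{V_0}) x_i$ be the residual of $x_i$ after regressing it on the remaining columns; then $V = V_0 \oplus \operatorname{span}(\tilde{x}_i)$ as an orthogonal direct sum, so $P_V - P_{V_0} = \tilde{x}_i \tilde{x}_i^T / \|\tilde{x}_i\|_2^2$ (the assumption that $\mathbbm{X}$ has maximal column rank guarantees $\tilde{x}_i \ne 0$). Therefore
\[ \|(P_V - P_{V_0})\mathbbm{Y}\|_2^2 = \frac{(\tilde{x}_i^T \mathbbm{Y})^2}{\|\tilde{x}_i\|_2^2}. \]

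It remains to identify the numerator with $\hat{w}_i^2 \|\tilde{x}_i\|_2^4$ and the denominator with $1/[(\mathbbm{X}^T\mathbbm{X})^{-1}]_{ii}$. For the numerator, by the Frisch--Waugh--Lovell theorem, $\hat{w}_i$ is equal to the OLS coefficient of regressing $\mathbbm{Y}$ (equivalently, its residual after $V_0$) on $\tilde{x}_i$, which is $\tilde{x}_i^T \mathbbm{Y} / \|\tilde{x}_i\|_2^2$; this is proved in one line by writing $\mathbbm{X}\hat{w} = P_V \mathbbm{Y} = P_{V_0}\mathbbm{Y} + \hat{w}_i \tilde{x}_i$ and matching the components orthogonal to $V_0$. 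So $(\tilde{x}_i^T \mathbbm{Y})^2 = \hat{w}_i^2 \|\tilde{x}_i\|_2^4$, leaving the claimed expression $\hat{w}_i^2 \|\tilde{x}_i\|_2^2$.

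Finally, for the denominator, I would invoke the Schur complement identity for $(\mathbbm{X}^T\mathbbm{X})^{-1}$: writing $\mathbbm{X}^T\mathbbm{X}$ as a $2\times 2$ block with the $i$-th row/column separated from the rest, the $(i,i)$ entry of the inverse equals the reciprocal of the Schur complement $x_i^T x_i - x_i^T \mathbbm{X}_{\sim i}(\mathbbm{X}_{\sim i}^T \mathbbm{X}_{\sim i})^{-1}\mathbbm{X}_{\sim i}^T x_i = x_i^T(I - P_{V_0}) x_i = \|\tilde{x}_i\|_2^2$. Combining the three pieces yields the identity. The only nontrivial step is the Frisch--Waugh--Lovell identification of $\hat{w}_i$, and as sketched above this is immediate from decomposing $P_V \mathbbm{Y}$ along the orthogonal splitting $V_0 \oplus \operatorname{span}(\tilde{x}_i)$, so no real obstacle arises.
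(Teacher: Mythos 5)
Your proof is correct and takes essentially the same route as the paper's: two applications of the Pythagorean theorem (via the nested column spaces $V_0 \subseteq V$) reduce the difference to $\hat{w}_i^2\,\|(I - P_{V_0})\mathbbm{X}_i\|_2^2$, and the Schur complement identity gives $\|(I - P_{V_0})\mathbbm{X}_i\|_2^2 = 1/[(\mathbbm{X}^T\mathbbm{X})^{-1}]_{ii}$, with your Frisch--Waugh--Lovell step being the same orthogonal decomposition of $P_V\mathbbm{Y}$ that the paper phrases as $\min_{w : w_i = 0}\|\mathbbm{X}\hat{w} - \mathbbm{X}w\|_2^2$. Your remark about the sign is also right: as written the left-hand side is the negative of the nonnegative right-hand side, so the two minima should be read in the opposite order.
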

\begin{proof}[Proof sketch]
Let $\mathbbm{X}_i$ be the $i$'th column of $\mathbbm{X}$. 
By the definition of the OLS estimate $\hat{w}$ and the Pythagorean theorem, the left hand side is equal to $\min_{w : w_i = 0} \|\mathbbm{X} \hat{w} - \mathbbm{X} w\|_2^2$. By another application of the Pythagorean theorem, this equals $\|\mathbbm{X}_i \hat{w}_i - \Proj_{V_i} \mathbbm{X}_i \hat{w}_i\|_2^2 = \hat{w}_i^2 \|\mathbbm{X}_i - \Proj_{V_i} \mathbbm{X}_i\|_2^2$ where $V_i$ is the subspace spanned by the columns of $\mathbbm{X}$ except for $i$. Finally $\|\mathbbm{X}_i - \Proj_{V_i} \mathbbm{X}_i\|_2^2 = \frac{1}{[(\mathbbm{X}^T\mathbbm{X})^{-1}]_{ii}}$ by applying Schur complement formulas.
\end{proof}

\subsection{Background: Random Design Linear Regression and Wishart Matrices}
Under fixed design, the matrix $\mathbbm{X}$ was considered to be a deterministic quantity. Random design (see e.g. \cite{hsu2012random} for references) corresponds to the case where the rows of $\mathbbm{X}$ are i.i.d. samples from some distribution, which fits the usual setup in statistical learning theory.
\begin{defn}[Random design linear regression with Gaussian covariates]
The random design linear regression model with Gaussian covariates with $m$ samples is given by a (typically unknown) covariance matrix $\Sigma : k \times k$, i.i.d. samples $X^{(1)},\ldots,X^{(m)} \sim N(0,\Sigma)$ and corresponding observations
\begin{equation}\label{eqn:random-design-setup}
Y^{(i)} = \langle X^{(i)}, w \rangle + \xi^{(i)},\;\;\; i = 1,\ldots,m 
\end{equation}
where each $\xi^{(i)} \sim N(0,\sigma^2)$ is independent noise. (The assumption that $\xi^{(i)}$ is independent is referred to as the model being \emph{well-specified}.)
\end{defn}
The OLS estimator is defined as before in Definition \ref{defn:ols} where the rows of the design matrix $\mathbbm{X}$ are the samples $X_1,\ldots,X_m$ and $\mathbbm{Y} = (Y^{(i)})_{i = 1}^m$. From \eqref{fact:ols-fixed-design} we still have that for fixed $X_1,\ldots,X_m$ (i.e. considering only the randomness over $\xi_1,\ldots,\xi_m$)
\[ \hat{w}_{OLS} \sim N(w, \sigma^2 (\mathbbm{X}^T\mathbbm{X})^{-1}). \]
Therefore reasoning about the OLS estimator under random design can be reduced to understanding the random matrix $\mathbbm{X}^T\mathbbm{X}$, which is referred to as a \emph{Wishart matrix} (with $m$ degrees of freedom). We recall here a standard concentration inequality for Wishart matrices when $\Sigma = I$. (This inequality generalizes to the sub-Gaussian case and we have specialized it for simplicity.)
\begin{theorem}[Theorem 4.6.1, \cite{vershynin2018high}]\label{thm:covariance-concentration}
Suppose that $X^{(1)},\ldots,X^{(m)} \sim N(0,I)$ are independent Gaussian random vectors in $\mathbb{R}^k$, then
\[ \left\|\frac{1}{m} \sum_{i= 1}^m X^{(i)} (X^{(i)})^T - Id\right\| \le C_1 \left(\sqrt{\frac{k}{m}} + \sqrt{\frac{\log(2/\delta)}{m}}\right) \]
for some absolute constant $C_1 > 0$,
with probability at least $1 - \delta$. 
\end{theorem}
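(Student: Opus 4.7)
The plan is to prove this via the standard $\epsilon$-net plus quadratic-form concentration argument. Write $A := \frac{1}{m}\sum_{i=1}^m X^{(i)}(X^{(i)})^T - I$, which is symmetric, so $\|A\| = \sup_{v \in S^{k-1}} |v^T A v|$. The first step is to control $v^T A v$ for a single fixed unit vector $v$. Since each scalar $v^T X^{(i)}$ is a standard Gaussian and these are independent across $i$, the variable $m(1 + v^T A v) = \sum_{i=1}^m (v^T X^{(i)})^2$ has a $\chi^2_m$ distribution. Applying Lemma~\ref{lem:chi-squared-concentration} to both tails, I obtain for every fixed $v$ and $\delta' \in (0,1)$,
\[ \Pr\!\left(|v^T A v| > 2\sqrt{\tfrac{\log(2/\delta')}{m}} + 2\tfrac{\log(2/\delta')}{m}\right) \le \delta'. \]

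Next, to pass from pointwise control to an operator norm bound, I would take a $1/4$-net $\mathcal{N} \subseteq S^{k-1}$ of cardinality at most $9^k$ (the standard volume bound for spheres). For any symmetric $A$, a short triangle-inequality argument gives $\|A\| \le 2 \max_{u \in \mathcal{N}} |u^T A u|$: if $v^* \in S^{k-1}$ achieves the supremum in the definition of $\|A\|$ and $u \in \mathcal{N}$ satisfies $\|v^* - u\| \le 1/4$, then
\[ |v^{*T} A v^* - u^T A u| = |(v^* - u)^T A v^* + u^T A (v^* - u)| \le 2 \cdot \tfrac{1}{4}\|A\| = \tfrac{1}{2}\|A\|. \]
Taking a union bound over $\mathcal{N}$ with $\delta' = \delta/9^k$, and using $\log(2 \cdot 9^k/\delta) \lesssim k + \log(1/\delta)$, yields
\[ \|A\| \lesssim \sqrt{\frac{k + \log(1/\delta)}{m}} + \frac{k + \log(1/\delta)}{m} \]
with probability at least $1 - \delta$. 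Splitting via $\sqrt{a + b} \le \sqrt{a} + \sqrt{b}$ then produces the form claimed in the theorem.

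The main (minor) obstacle is reconciling the linear term $(k + \log(1/\delta))/m$ with the stated bound, which features only the square-root terms. In the regime $m \ge k + \log(1/\delta)$ the linear term is dominated by the square-root term and can be absorbed into the constant $C_1$; in the opposite regime the right-hand side of the theorem is already $\Omega(1)$ and the crude bound $\|A\| \le 1 + \frac{1}{m}\max_i \|X^{(i)}\|^2$ combined with $\chi^2$-concentration for each $\|X^{(i)}\|^2$ suffices. Everything else is bookkeeping.
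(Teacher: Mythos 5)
The paper offers no proof of this statement --- it is quoted from \cite{vershynin2018high} --- so there is nothing internal to compare against; your $\epsilon$-net argument is essentially Vershynin's own proof of Theorem 4.6.1, with the sub-exponential Bernstein step replaced by the $\chi^2$ tail bound of Lemma~\ref{lem:chi-squared-concentration}. The core of what you wrote is correct: for fixed $v$ the quantity $m(1+v^TAv)$ is indeed $\chi^2_m$, the $1/4$-net of size $9^k$ with the factor-$2$ comparison $\|A\| \le 2\max_{u\in\mathcal{N}}|u^TAu|$ is standard and correctly justified, and the union bound yields
\[ \|A\| \lesssim \sqrt{\frac{k+\log(1/\delta)}{m}} + \frac{k+\log(1/\delta)}{m} \]
with probability $1-\delta$, which is the honest form of the bound (Vershynin's theorem is stated with $\max(\epsilon,\epsilon^2)$ for exactly this reason).

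The one genuine flaw is your final patch for the regime $m < k + \log(1/\delta)$. There the right-hand side of the theorem is indeed $\Omega(1)$, but $\Omega(1)$ does not dominate your crude bound: $1 + \frac{1}{m}\max_i\|X^{(i)}\|^2$ together with $\chi^2$ concentration only gives $\|A\| \lesssim k + \log(m/\delta)$, and for $1 \le m \ll k$ the operator norm genuinely behaves like $k/m \approx (1+\sqrt{k/m})^2$, which exceeds $C_1\sqrt{k/m}$ for any absolute constant $C_1$ once $m < k/C_1^2$. So no choice of $C_1$ rescues the stated square-root-only form in that regime --- the statement as transcribed in the paper is simply an over-simplification of Vershynin's $\max(\epsilon,\epsilon^2)$ bound and is false for $m \ll k$. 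The correct resolution is not a different crude bound but either (a) retaining the linear term $(k+\log(1/\delta))/m$ in the conclusion, or (b) observing that the theorem is only ever invoked (via Lemma~\ref{lem:wishart-concentration}, where $\epsilon < 1/2$) under the hypothesis $m = \Omega(k + \log(1/\delta))$, in which regime your main argument already absorbs the linear term into $C_1$ and the proof is complete.
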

This leads to a multiplicative guarantee for general Wishart matrices:
\begin{lemma}\label{lem:wishart-concentration}
Suppose $\epsilon \in (0,1/2)$ and $\delta > 0$. Then for any $m$ such that $\epsilon \le  C_1 \left(\sqrt{\frac{k}{m}} + \sqrt{\frac{\log(2/\delta)}{m}}\right)$ and $X^{(1)},\ldots,X^{(m)} \sim N(0,I)$ we have that
\[ (1 - \epsilon) \Sigma \preceq \frac{1}{m} \sum_i X_i X_i^T \preceq (1 + \epsilon) \Sigma  \]
with probability at least $1 - \delta$.
\end{lemma}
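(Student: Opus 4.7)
The plan is to reduce to the identity-covariance case by whitening, since Theorem~\ref{thm:covariance-concentration} is exactly the additive concentration bound for $\Sigma = I$. Note that the statement of the lemma as written has $X^{(i)} \sim N(0,I)$ but the conclusion involves $\Sigma$; the intended statement is clearly $X^{(i)} \sim N(0,\Sigma)$, which is what I will prove (if one reads it literally with $\Sigma = I$, the proof is just Theorem~\ref{thm:covariance-concentration} directly).

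First I would write $X^{(i)} = \Sigma^{1/2} Z^{(i)}$ where $Z^{(1)},\ldots,Z^{(m)} \sim N(0,I)$ are i.i.d.\ standard Gaussian vectors in $\mathbb{R}^k$. Then
\[ \frac{1}{m} \sum_{i=1}^m X^{(i)}(X^{(i)})^T = \Sigma^{1/2} \left(\frac{1}{m}\sum_{i=1}^m Z^{(i)}(Z^{(i)})^T\right) \Sigma^{1/2}. \]
Next, I would apply Theorem~\ref{thm:covariance-concentration} to the $Z^{(i)}$'s: the hypothesis $\epsilon \le C_1\bigl(\sqrt{k/m} + \sqrt{\log(2/\delta)/m}\bigr)$ is exactly what is required so that with probability at least $1-\delta$,
\[ \left\|\frac{1}{m}\sum_{i=1}^m Z^{(i)}(Z^{(i)})^T - I\right\| \le \epsilon. \]
Since this matrix is symmetric, the operator-norm bound is equivalent to the two-sided PSD sandwich $(1-\epsilon)I \preceq \frac{1}{m}\sum_i Z^{(i)}(Z^{(i)})^T \preceq (1+\epsilon)I$.

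Finally, conjugation by the PSD matrix $\Sigma^{1/2}$ preserves the Loewner order, so multiplying on both sides by $\Sigma^{1/2}$ converts the identity-bracketed inequality into
\[ (1-\epsilon)\Sigma \preceq \frac{1}{m}\sum_{i=1}^m X^{(i)}(X^{(i)})^T \preceq (1+\epsilon)\Sigma, \]
which is the desired conclusion. There is no real obstacle: the argument is a textbook whitening reduction, and the only things to be careful about are (i) noting the typographical issue above, and (ii) checking that an operator-norm bound on a symmetric matrix translates to a two-sided PSD bound, both of which are immediate.
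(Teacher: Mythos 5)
Your proof is correct and is essentially identical to the paper's: the paper likewise reduces to Theorem~\ref{thm:covariance-concentration} by considering $\bar{X}^{(i)} = \Sigma^{-1/2}X^{(i)}$ and noting that the PSD ordering is preserved under congruence by $\Sigma^{1/2}$. Your observation that the hypothesis should read $X^{(i)} \sim N(0,\Sigma)$ is also right; that is a typo in the statement.
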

\begin{proof}
This is equivalent to showing that
\[ (1 - \epsilon) I \preceq \frac{1}{m} \sum_i \Sigma^{-1/2} X^{(i)} (\Sigma^{-1/2} X^{(i)})^T \preceq (1 + \epsilon) I \]
since the PSD ordering is preserved under matrix congruence. The above follows from applying Theorem~\ref{thm:covariance-concentration} to $\bar{X}^{(i)} = \Sigma^{-1/2} X^{(i)}$.
\end{proof}
\begin{defn}
Given i.i.d. mean-zero random vectors $X^{(1)},\ldots,X^{(m)}$ the \emph{empirical covariance matrix} is
\[ \widehat{\Sigma} := \frac{1}{m} \sum_i X^{(i)} (X^{(i)})^T. \]
\end{defn}
\subsection{Estimating changes in conditional variance}

We are now ready to state what we need for estimating changes in conditional variance. Recall the basic setup: Given samples from $X$ from a GGM at various stages in our algorithm we use estimates for conditional variances of the form $\Var(X_i | X_S)$ by regressing $X_i$ against $X_S$. What we really we need are not actual values of $\Var(X_i | X_S)$ but to find a variable $j \notin S$ that gives non-trivial (or even \emph{most}) advantage in predicting $X_i | X_{S \cup \{j\}}$. So we need to quantify the relative advantage of including an additional variable $j$ on top of $S$. 

We can abstract the above in the regression setting as follows: Given samples for regression $(X,Y)$, and an index $j$ check if $\Var(Y|X) = \Var(Y|X_{\sim j})$. That is, whether including feature $x_j$ gives non-trivial advantage in regression. This is akin to the classical \emph{regression $t$-test} in statistics (see \cite{keener2011theoretical}) used to test the null hypothesis that $w_i = 0$ in a linear regression problem. 


In the greedy steps in our learning algorithm, we will need to not only find a feature which has a nonzero value for predicting $Y$, but in fact we want to find one of the most predictive features.  
We do so by exploiting what is known as a \emph{non-central} $F$-statistic \cite{keener2011theoretical}. The following lemma quantifies the \emph{usefulness} of a particular coordinate for estimating $Y$. 
Crucially, this Lemma shows we can estimate the (normalized) change in conditional variance much more accurately than we can actually estimate the individual conditional variances. Note that by Lemma~\ref{lem:t-equals-f} that the term which appears in the Lemma, $\frac{|\hat{w}_j|^2}{(\hat \Sigma^{-1})_{jj}}$, also equals the difference in squared loss over the data between the OLS estimator constrained to $w_j = 0$ and the unconstrained OLS estimator.
\begin{lemma}\label{lem:estimating-variance-decrement}
Consider the Gaussian random design regression setup \eqref{eqn:random-design-setup}, fix $j \in \{1,\ldots,k\}$ and let
\[ \gamma := \frac{\Var(Y | X_{\sim j}) - \Var(Y | X)}{\Var(Y | X)} \]
where $X_{\sim j} = (X_i)_{i \ne j}$.
We have
\[ \left|\frac{|\hat{w}_j|}{\hat{\sigma} \sqrt{(\hat \Sigma^{-1})_{jj}}} - \sqrt{\gamma}\right| \le \sqrt{\frac{4 \log(4/\delta)}{m}} + \sqrt{\frac{\gamma}{64}} \]
and
\[ \left|\frac{|\hat{w}_j|}{\sigma \sqrt{(\hat \Sigma^{-1})_{jj}}} - \sqrt{\gamma}\right| \le \sqrt{\frac{2 \log(4/\delta)}{m}} + \sqrt{\frac{\gamma}{64}} \]
with probability at least $1 - \delta$ as long as $m \ge m_0 = O(k + \log(4/\delta))$.
\end{lemma}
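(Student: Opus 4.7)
The first step is to identify $\sqrt{\gamma}$ with an explicit population quantity. In the well-specified random design model $Y = \langle w, X\rangle + \xi$ with $X \sim N(0,\Sigma)$ and $\xi \sim N(0,\sigma^2)$ independent of $X$, we have $\Var(Y \mid X) = \sigma^2$ and $\Var(Y \mid X_{\sim j}) = w_j^2 \Var(X_j \mid X_{\sim j}) + \sigma^2$, since the residual $Y - \E[Y \mid X_{\sim j}] = w_j(X_j - \E[X_j \mid X_{\sim j}]) + \xi$ decomposes into two independent pieces. Applying Fact~\ref{fact:ggmfacts} to the marginal $X \sim N(0,\Sigma)$ gives $\Var(X_j \mid X_{\sim j}) = 1/(\Sigma^{-1})_{jj}$, so $\sqrt{\gamma} = |w_j|/(\sigma\sqrt{(\Sigma^{-1})_{jj}})$; this is the analytic target the empirical statistic must estimate.

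Next, I split the error by the triangle inequality into a noise piece (randomness of $\hat w$) and a design piece (randomness of $\widehat{\Sigma}$):
\[ \left|\frac{|\hat w_j|}{\sigma\sqrt{(\widehat{\Sigma}^{-1})_{jj}}} - \sqrt{\gamma}\right| \le \frac{|\hat w_j - w_j|}{\sigma\sqrt{(\widehat{\Sigma}^{-1})_{jj}}} + \sqrt{\gamma}\cdot \left|\sqrt{\frac{(\Sigma^{-1})_{jj}}{(\widehat{\Sigma}^{-1})_{jj}}} - 1\right|. \]
For the noise piece, Fact~\ref{fact:ols-fixed-design} applied conditionally on $\mathbbm{X}$ (valid for random design, as noted in Section~\ref{sec:fixed-design}) together with $\mathbbm{X}^T\mathbbm{X} = m\widehat{\Sigma}$ gives $\hat w_j - w_j \mid \mathbbm{X} \sim N(0, (\sigma^2/m)(\widehat{\Sigma}^{-1})_{jj})$. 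Hence the normalized quantity equals $|Z|/\sqrt{m}$ for a standard Gaussian $Z$ and, by a standard Gaussian tail bound, is at most $\sqrt{2\log(4/\delta)/m}$ with probability $\ge 1 - \delta/2$. For the design piece, Lemma~\ref{lem:wishart-concentration} with a sufficiently small constant $\epsilon$ (for which $m \ge m_0 = O(k + \log(1/\delta))$ suffices) produces the sandwich $(1-\epsilon)\Sigma \preceq \widehat{\Sigma} \preceq (1+\epsilon)\Sigma$, which inverts to $\tfrac{1}{1+\epsilon}\Sigma^{-1} \preceq \widehat{\Sigma}^{-1} \preceq \tfrac{1}{1-\epsilon}\Sigma^{-1}$ and, evaluated at $e_j$, gives $(\Sigma^{-1})_{jj}/(\widehat{\Sigma}^{-1})_{jj} \in [1-\epsilon, 1+\epsilon]$ and hence $|\sqrt{(\Sigma^{-1})_{jj}/(\widehat{\Sigma}^{-1})_{jj}} - 1| \le 1/8$ for a small constant $\epsilon$ (e.g. $\epsilon = 1/16$). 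A union bound over the two events proves the $\sigma$-version.

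For the $\hat\sigma$-version, factor $\frac{|\hat w_j|}{\hat\sigma\sqrt{(\widehat{\Sigma}^{-1})_{jj}}} = \frac{|\hat w_j|}{\sigma\sqrt{(\widehat{\Sigma}^{-1})_{jj}}} \cdot \frac{\sigma}{\hat\sigma}$ and apply Lemma~\ref{lem:ols-sigmahat} to bound $|\sigma/\hat\sigma - 1| \le c$ for any prescribed small constant $c$ whenever $m \ge k + \Omega(\log(1/\delta))$. Since the $\sigma$-statistic is at most $(1+1/8)\sqrt{\gamma} + \sqrt{2\log(4/\delta)/m}$ on the good event, the extra factor $\sigma/\hat\sigma$ contributes an additional error of at most $c(1+1/8)\sqrt{\gamma} + c\sqrt{2\log(4/\delta)/m}$. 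Choosing both $c$ and the Wishart $\epsilon$ sufficiently small, and enlarging the constant in front of the additive $\sqrt{\log/m}$ term to absorb any remaining cross term, lands inside the stated bound $\sqrt{4\log(4/\delta)/m} + \sqrt{\gamma/64}$.

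The main obstacle is simply the bookkeeping around the two multiplicative perturbations — the spectral error in $(\widehat{\Sigma}^{-1})_{jj}$ and the scalar error in $\hat\sigma$ — which, when they act on the unbounded factor $|\hat w_j|$, threaten to produce cross terms like $\sqrt{\gamma}\cdot\sqrt{\log(1/\delta)/m}$ that do not obviously fit either the multiplicative $\sqrt{\gamma/64}$ budget or the additive $\sqrt{\log/m}$ budget. The rescue is that both perturbations can be driven below any constant once $m \gtrsim k + \log(1/\delta)$ with a large enough hidden constant, leaving room to route any cross term into whichever budget dominates by an elementary AM--GM split. Beyond the classical OLS distributional identity, Wishart concentration, the Gaussian conditional-variance formula, and standard Gaussian and chi-square tail bounds, no further structural input is required.
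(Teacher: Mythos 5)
Your proposal is correct and follows essentially the same route as the paper's proof: both identify $\sqrt{\gamma}=|w_j|/(\sigma\sqrt{(\Sigma^{-1})_{jj}})$, split the error into a conditionally Gaussian noise term $\hat w_j - w_j \mid \mathbbm{X} \sim N(0,(\sigma^2/m)(\widehat\Sigma^{-1})_{jj})$ handled by a Gaussian tail bound and a design term $\sqrt{\gamma}\,\bigl|\sqrt{(\Sigma^{-1})_{jj}/(\widehat\Sigma^{-1})_{jj}}-1\bigr|$ handled by Wishart concentration (Lemma~\ref{lem:wishart-concentration}), and then convert $\sigma$ to $\hat\sigma$ via Lemma~\ref{lem:ols-sigmahat}. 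The only cosmetic differences are that you derive the formula for $\gamma$ by a direct residual decomposition rather than via Lemma~\ref{lem:gaussian-conditional-expectation}, and you are slightly more explicit that the Wishart $\epsilon$ and the $\hat\sigma/\sigma$ tolerance must be jointly tuned so their $\sqrt{\gamma}$ contributions together stay within the $\sqrt{\gamma/64}$ budget.
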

\begin{proof}
We prove this result directly. Alternatively and essentially equivalently, one could derive a similar result by using classical results in the fixed design regression setting for non-central F-statistics (Theorem 14.11 of \cite{keener2011theoretical}, see also Section~\ref{sec:info-optimal} below) and then analyzing their behavior under random design using matrix concentration.

Recall from Lemma~\ref{lem:gaussian-conditional-expectation} (applied for fixed $X_S$ and then taking expectations) that
\[ \E[Y | X] = \E[Y | X_{\sim j}] + \frac{\Cov(Y,X_j | X_{\sim j})}{\Var(X_j | X_{\sim j})}(X_j - \E[X_j | X_{\sim j}]) \]
and that
\[ \Var(Y | X_{\sim j}) - \Var(Y | X) = \frac{\Cov(Y,X_j | X_{\sim j})^2}{\Var(X_j | X_{\sim j})} \]
so 
\begin{equation}\label{eqn:t-equals-f}
w_j^2 \Var(X_j | X_{\sim j}) = \Var(Y | X_{\sim j}) - \Var(Y | X). 
\end{equation}
i.e. $\frac{w_j^2}{\sigma^2 (\Sigma^{-1})_{jj}} = \gamma$.
We know that for fixed $X$, over the randomness of $\xi$ we have $\hat{w}_{OLS} \sim N(w, \frac{\sigma^2}{m} \hat{\Sigma}^{-1})$ by Fact~\ref{fact:ols-fixed-design},
so
\[ \frac{\hat{w}_j}{\sigma \sqrt{(\hat{\Sigma}^{-1})_{jj}}} \sim N\left(\frac{w_j}{\sigma \sqrt{(\hat{\Sigma}^{-1})_{jj}}}, \frac{1}{m}\right). \]
Using that $(\Sigma^{-1})_{jj} = \frac{1}{\Var(X_j | X_S)}$, $\sigma = \sqrt{\Var(Y | X)}$, and $\gamma = \frac{\Var(Y | X_{\sim j}) - \Var(Y | X)}{\Var(Y | X)}$ and \eqref{eqn:t-equals-f} we find
\[ \frac{\hat{w}_j}{\sigma \sqrt{(\hat{\Sigma}^{-1})_{jj}}} \sim N\left(\pm \sqrt{\gamma \frac{(\Sigma^{-1})_{jj}}{(\hat{\Sigma}^{-1})_{jj}}}, \frac{1}{m}\right) \]
where the sign is the sign of $w_j$. Applying $||a| - |b|| \le |a - b|$ and the Gaussian tail bound over the randomness of $\hat{w}$ we find
\[ \Pr\left(\left|\frac{|\hat{w}_j|}{\sigma \sqrt{(\hat{\Sigma}^{-1})_{jj}}} - \sqrt{\gamma \frac{(\Sigma^{-1})_{jj}}{(\hat{\Sigma}^{-1})_{jj}}}\right| > t\right) \le  \Pr\left(\left|\frac{\hat{w}_j}{\sigma \sqrt{(\hat{\Sigma}^{-1})_{jj}}} \mp \sqrt{\gamma \frac{(\Sigma^{-1})_{jj}}{(\hat{\Sigma}^{-1})_{jj}}}\right| > t\right) \le 2e^{-mt^2/2}. \]
Applying Lemma~\ref{lem:ols-sigmahat} gives
\[ \left|\frac{\hat{\sigma}}{\sigma} - 1\right| \le 2\sqrt{\frac{\log(4/\delta)}{m - k - 1}} + 2\frac{\log(4/\delta)}{m - k - 1} \]
with probability at least $1 - \delta/2$. Therefore as long as $m \ge m_1 = O(k + \log(4/\delta))$ we have $\frac{\hat{\sigma}}{\sigma} \in (7/8,9/8)$. Taking $t = \sqrt{2\log(4/\delta)/m}$ we have
\[ \left|\frac{|\hat{w}_j|}{\hat{\sigma} \sqrt{(\hat \Sigma^{-1})_{jj}}} - \sqrt{\gamma} \right|
\le \sqrt{\frac{\sigma}{\hat{\sigma}}} \left|\frac{|\hat{w}_j|}{\sigma \sqrt{(\hat \Sigma^{-1})_{jj}}} - \sqrt{\gamma \frac{\hat{\sigma}}{\sigma}} \right| + \sqrt{\gamma} \left|1 - \sqrt{ \frac{(\hat{\Sigma}^{-1})_{jj}}{(\Sigma^{-1})_{jj}}}\right|
\le \sqrt{\frac{4 \log(4/\delta)}{m}} + \sqrt{\frac{\gamma}{64}} \]
applying Lemma~\ref{lem:wishart-concentration} and requiring $m \ge m_2 = O(k + \log(4/\delta))$, with probability at least $1 - \delta$. A simpler variant of this argument gives the result for $\frac{|\hat{w}_j|}{\sigma \sqrt{(\hat \Sigma^{-1})_{jj}}}$ as well.
\end{proof}
In our analysis we will often need to estimate multiplicative changes in a quantity of the form $\Var(Y | X_{\sim j}) -  V$ (where e.g. $V = \Var(Y | X, X')$ for some $X'$) so we will use the following variant of the previous Lemma:
\begin{lemma}\label{lem:estimating-variance-decrement-v}
Consider the Gaussian random design regression setup \eqref{eqn:random-design-setup}, fix $j \in \{1,\ldots,k\}$, let $V > 0$ be arbitrary s.t. $V < \Var(Y|X)$ and let
\[ \gamma' := \frac{\Var(Y | X_{\sim j}) - \Var(Y | X)}{\Var(Y | X_{\sim j}) - V} \]
where $X_{\sim j} = (X_i)_{i \ne j}$.
We have 
\[ \left|\sqrt{\frac{1}{\Var(Y | X_{\sim j}) - V}} \frac{|\hat{w}_j|}{\sqrt{(\hat \Sigma^{-1})_{jj}}} - \sqrt{\gamma'}\right| \le \sqrt{\frac{\Var(Y | X)}{\Var(Y | X_{\sim j}) - V} \cdot \frac{2 \log(4/\delta)}{m}} + \sqrt{\frac{\gamma'}{64}} \]
with probability at least $1 - \delta$ as long as $m \ge m_0 = O(k + \log(4/\delta))$.
\end{lemma}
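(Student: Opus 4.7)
The plan is to deduce this bound by rescaling the second inequality of Lemma~\ref{lem:estimating-variance-decrement}. Recall that lemma gives
\[ \left|\frac{|\hat{w}_j|}{\sigma \sqrt{(\hat \Sigma^{-1})_{jj}}} - \sqrt{\gamma}\right| \le \sqrt{\frac{2 \log(4/\delta)}{m}} + \sqrt{\frac{\gamma}{64}}, \]
where $\sigma^2 = \Var(Y|X)$ and $\gamma = (\Var(Y|X_{\sim j}) - \Var(Y|X))/\Var(Y|X)$. Set the scaling factor
\[ \alpha := \sqrt{\frac{\Var(Y|X)}{\Var(Y|X_{\sim j}) - V}}. \]
Since $V < \Var(Y|X)$ we have $\Var(Y|X_{\sim j}) - V > \Var(Y|X_{\sim j}) - \Var(Y|X) \ge 0$ (using the standard monotonicity of conditional variance under conditioning on more variables), so $\alpha$ is well-defined and positive.

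Next I would verify two elementary identities. First, since $\sigma = \sqrt{\Var(Y|X)}$,
\[ \alpha \cdot \frac{|\hat{w}_j|}{\sigma \sqrt{(\hat \Sigma^{-1})_{jj}}} = \sqrt{\frac{1}{\Var(Y|X_{\sim j}) - V}} \cdot \frac{|\hat{w}_j|}{\sqrt{(\hat \Sigma^{-1})_{jj}}}. \]
Second,
\[ \alpha \sqrt{\gamma} = \sqrt{\frac{\Var(Y|X)}{\Var(Y|X_{\sim j}) - V} \cdot \frac{\Var(Y|X_{\sim j}) - \Var(Y|X)}{\Var(Y|X)}} = \sqrt{\gamma'}, \]
and likewise $\alpha \sqrt{\gamma/64} = \sqrt{\gamma'/64}$.

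Multiplying the inequality from Lemma~\ref{lem:estimating-variance-decrement} by $\alpha$ and using these identities yields
\[ \left|\sqrt{\frac{1}{\Var(Y|X_{\sim j}) - V}} \frac{|\hat{w}_j|}{\sqrt{(\hat \Sigma^{-1})_{jj}}} - \sqrt{\gamma'}\right| \le \alpha \sqrt{\frac{2 \log(4/\delta)}{m}} + \sqrt{\frac{\gamma'}{64}}, \]
which is exactly the claimed bound after substituting the definition of $\alpha$ in the first term. The sample-size condition $m \ge m_0 = O(k + \log(4/\delta))$ is inherited unchanged from Lemma~\ref{lem:estimating-variance-decrement}.

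There is essentially no obstacle here: the proof is a one-line rescaling once one notices that the quantity on the left is exactly $\alpha$ times the quantity controlled by Lemma~\ref{lem:estimating-variance-decrement}, and that $\alpha \sqrt{\gamma} = \sqrt{\gamma'}$. The only point worth double-checking is that the hypothesis $V < \Var(Y|X)$ (together with the fact that adding coordinates to the conditioning set can only decrease conditional variance) makes $\Var(Y|X_{\sim j}) - V$ strictly positive, so that $\gamma'$ and $\alpha$ are well-defined.
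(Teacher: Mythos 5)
Your proposal is correct and matches the paper's proof: the paper derives the lemma by multiplying the guarantee of Lemma~\ref{lem:estimating-variance-decrement} by $\sqrt{\gamma'/\gamma}$ (using $\sigma = \sqrt{\Var(Y\mid X)}$), and your scaling factor $\alpha$ is exactly $\sqrt{\gamma'/\gamma}$. You have simply written out the same one-line rescaling in more detail.
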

\begin{proof}
This follows from Lemma~\ref{lem:estimating-variance-decrement} after multiplying through in the guarantee by $\sqrt{\gamma'/\gamma}$, using that $\sigma = \sqrt{\Var(Y | X)}$.
\end{proof}
\section{Learning all attractive GGMs efficiently}
\begin{defn}
We say that a GGM is \emph{attractive} (or \emph{ferromagnetic}) if $\Theta_{ij} \le 0$ for all $i \ne j$. (This is the same as requiring that $\Theta$ is an $M$-matrix.)
\end{defn}
\begin{lemma}
If $\Theta$ is the precision matrix of an attractive GGM, then there exists an invertible diagonal matrix $D$ with nonnegative entries such that $D \Theta D$ is a generalized Laplacian.
\end{lemma}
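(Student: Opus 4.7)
The plan is to deduce this as an essentially immediate corollary of Theorem~\ref{thm:rescaling}, after verifying that every attractive GGM is in fact walk-summable. The argument breaks into three short steps.

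First, I would observe that the attractive condition forces walk-summability. Writing $\Theta = D_0 - A$ where $D_0$ is the diagonal part and $A$ is the off-diagonal part (so $A_{ij} = -\Theta_{ij}$ for $i \neq j$ and $A_{ii} = 0$), the assumption $\Theta_{ij} \le 0$ for $i \neq j$ gives $A_{ij} \ge 0$, so $\overline{A} = A$. Hence $D_0 - \overline{A} = \Theta \succ 0$, and the GGM is walk-summable by definition.

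Second, I would invoke Theorem~\ref{thm:rescaling} to produce a diagonal matrix $D$ with positive entries such that $D \Theta D$ is SDD. This step is the whole substance of the lemma, and since it is proved in the excerpt (via a reduction to the case of unit diagonal followed by a Perron-Frobenius argument), nothing more needs to be done here.

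Third, I would verify that $D\Theta D$ has the correct sign pattern to be a generalized Laplacian. For $i \neq j$ we have $(D\Theta D)_{ij} = D_{ii} D_{jj} \Theta_{ij}$; since $D_{ii}, D_{jj} > 0$ and $\Theta_{ij} \le 0$ by the attractive assumption, the off-diagonal entries of $D\Theta D$ are nonpositive. Combined with SDD from the previous step, this matches the definition of a generalized Laplacian, completing the proof.

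There is no real obstacle here: the content is entirely in Theorem~\ref{thm:rescaling}, and the attractive assumption is exactly what ensures both that the hypothesis of that theorem holds and that the rescaled matrix inherits the generalized-Laplacian sign pattern automatically.
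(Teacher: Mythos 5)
Your proof is correct and follows exactly the route the paper takes: the paper's proof is the single line ``This follows immediately from Theorem~\ref{thm:rescaling},'' and your three steps (attractive $\Rightarrow$ $\overline{A}=A$ $\Rightarrow$ walk-summable, apply the rescaling theorem to get SDD, and note the sign pattern is preserved under positive diagonal congruence) are precisely the details the paper leaves implicit.
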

\begin{proof}
This follows immediately from Theorem~\ref{thm:rescaling}.
\end{proof}
A particularly important example of an attractive GGM is the \emph{discrete Gaussian free field} --- see \cite{sheffield2007gaussian} for a reference to this and the closely related literature on the \emph{continuum Gaussian free field}.
\begin{defn}
The \emph{discrete Gaussian free field} on a weighted graph $G$ with zero boundary conditions on $S$ is the GGM with $\Theta$ the Laplacian of $G$, after eliminating the rows and columns corresponding to the nodes in $S$.
\end{defn}
Without boundary conditions, the GFF should be translation invariant and so it does not exist as a probability distribution. One can approach this by taking the Laplacian and adding $\epsilon I$ to make it invertible, which gives a learnable model that is arbitrarily poorly conditioned.
\begin{example}[Gaussian simple random walk] Consider the discrete Gaussian free field on a path of length $n$ with zero boundary condition on the first node. This process is the same as a simple random walk with $N(0,1)$ increments. That is the resulting distribution is of the form $(X_1,\ldots,X_n)$ where $X_i = \sum_{j \leq i} \eta_j$ for independent and identical $\eta_j \sim N(0,1)$. From the GFF perspective, we can think of this as a discretization of Brownian motion (the one-dimensional (continuum) Gaussian free field).
\end{example}
\begin{remark}\label{rmk:attractive-ggm-is-gff}
Every attractive GGM can be realized from a Gaussian Free Field on a weighted graph in the following way: given an attractive GGM, first rescale the coordinates using the above Lemma so that it is a generalized Laplacian. Then, by adding one node to the model we can make the precision matrix into a standard Laplacian on some weighted graph, and conditioning out the added node recovers the original precision matrix.
\end{remark}

Our main theorem of this section is a sample-efficient algorithm for learning attractive GGMs:

\begin{theorem}\label{thm:greedy-ferromagnetic}
Fix a $\kappa$-nondegenerate attractive GGM.
Algorithm \textsc{GreedyAndPrune} returns the true neighborhood of every node $i$ with probability at least $1 - \delta$ for $\nu = \kappa^2/\sqrt{32}, K = 64d\log(4/\kappa^2) + 1$ as long as the number of samples $m \ge m_1$ for $m_1 = O((1/\kappa^2)(K\log(n) + \log(4/\delta)))$. The combined run-time (over all nodes) of the algorithm is $O(K^3 m n^2)$. 
\end{theorem}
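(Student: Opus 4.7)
The plan is to analyze \textsc{GreedyAndPrune} in three phases: a greedy expansion phase that builds a superset $T$ of the neighborhood of $i$ in at most $K$ iterations, a pruning phase that removes false positives from $T$, and a final union bound over all nodes. At each greedy step, the algorithm adds the variable $j\notin S$ that maximally reduces the estimated conditional variance of $X_i$; the stopping rule uses the threshold $\nu = \kappa^2/\sqrt{32}$ on the normalized decrement appearing in Lemma~\ref{lem:estimating-variance-decrement}. The pruning phase tests, for each $j\in T$, whether removing $j$ materially increases the empirical conditional variance. Throughout, I compare noisy empirical decrements to true decrements via Lemma~\ref{lem:estimating-variance-decrement}, which critically gives a relative-error guarantee rather than an absolute-error one.

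For the greedy phase, I would first invoke supermodularity of $S\mapsto \Var(X_i\mid X_S)$ in attractive GGMs (established earlier in the paper using the reduction to Laplacians). Supermodularity together with monotonicity implies that the best single-coordinate decrement at step $t$ is at least $(1/|N(i)|)\geq 1/d$ times the total remaining decrement $\Var(X_i\mid X_{S_t})-\Var(X_i\mid X_{N(i)})$. By Lemma~\ref{lem:kappa-variance}, if $S_t$ does not yet contain all neighbors then the total remaining decrement is at least $\kappa^2/\Theta_{ii}$, so in relative terms (dividing by $\Var(X_i\mid X)$, which equals $1/\Theta_{ii}$) we gain a factor $\gamma_t\ge \kappa^2/d$. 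To bound the number of iterations by $K=O(d\log(1/\kappa^2))$, I use the key structural result Lemma~\ref{lem:bound-after-conditioning} (applied in the attractive case via Remark~\ref{rmk:attractive-ggm-is-gff}): after picking a single well-chosen neighbor, the residual variance is at most $4d/\Theta_{ii}$, so at most $O(d\log(d/\kappa^2)) = O(d\log(1/\kappa^2))$ further multiplicative halvings can occur before we drive the residual below $\kappa^2/\Theta_{ii}$, at which point $S$ must already contain all neighbors.

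The statistical layer applies Lemma~\ref{lem:estimating-variance-decrement} at each step to the regression of $X_i$ onto $X_{S\cup\{j\}}$: with $m\gtrsim (1/\kappa^2)(K\log n+\log(1/\delta))$ samples and a union bound over at most $Kn$ candidate $(S,j)$ pairs per node and over $n$ choices of $i$, the empirical normalized decrement is within $\sqrt{\gamma/64}+O(\sqrt{\log(1/\delta')/m})$ of the true $\sqrt{\gamma}$, where $\gamma_t\gtrsim \kappa^2/d$ while $S_t$ is missing neighbors. This accuracy suffices both to ensure the greedy step always selects a variable with true decrement $\gtrsim \kappa^2/d$ (so progress matches the structural bound) and to guarantee that, once $S\supseteq N(i)$, every remaining candidate has true $\gamma=0$ and therefore empirical test below $\nu$, causing termination. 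The pruning phase is symmetric: for each $j\in T$, running the same test on the regression of $X_i$ onto $X_{T\setminus\{j\}}$ versus $X_T$ separates true neighbors (for which $\gamma\ge\kappa^2$ by Lemma~\ref{lem:kappa-variance} applied with $S=T\setminus\{j\}$) from non-neighbors (for which $\gamma=0$ by the Markov property, since $T\supseteq N(i)$).

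The main obstacle is the greedy-iteration bound. Naive supermodularity would only yield $O(d\log(\Var(X_i)\Theta_{ii}))$ iterations, which is unbounded for ill-conditioned models (e.g.\ long paths). Overcoming this requires Lemma~\ref{lem:bound-after-conditioning}, whose proof relies on rescaling the attractive model to a generalized Laplacian (Theorem~\ref{thm:rescaling}), lifting to a standard Laplacian via the SDD-to-Laplacian reduction, and bounding effective resistances by triangle inequality and monotonicity (Lemma~\ref{lem:Reff-facts}); this is the only place where the electrical/GFF structure of attractive models enters in an essential way. A secondary subtlety is that Lemma~\ref{lem:estimating-variance-decrement} requires $m=\Omega(k+\log(1/\delta'))$ where $k=|S|\le K$, which is what forces the $K\log n$ term in the sample bound. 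Run-time is dominated by solving $O(Kn)$ least-squares problems with $K$ regressors on $m$ samples per node, giving $O(K^3 m n^2)$ in total.
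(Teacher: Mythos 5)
Your skeleton matches the paper's: greedy forward selection to build a superset, pruning via a decrement test, supermodularity (Lemma~\ref{lem:supermodularity-consequence}) for per-step progress, Lemma~\ref{lem:bound-after-conditioning} to bound the post-first-step variance by $O(d/\Theta_{ii})$ and hence the iteration count by $K=O(d\log(1/\kappa^2))$, and an $F$-test-style comparison of empirical decrements with a union bound over the $n^{K}$ reachable conditioning sets. The structural layer of your argument is essentially the paper's.

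The gap is in the statistical layer, and it is exactly the hurdle the paper flags: you normalize the decrement by $\Var(X_i\mid X_{S\cup\{j\}})$ (the $\gamma$ of Lemma~\ref{lem:estimating-variance-decrement}), under which the signal is only $\gamma_t\gtrsim\kappa^2/d$ in the late stages (decrement $\gtrsim\kappa^2/(d\Theta_{ii})$ over denominator $\approx 1/\Theta_{ii}$). Detecting this against the additive error $O(\sqrt{(K\log n+\log(1/\delta))/m})$ in the $\sqrt{\gamma}$ scale forces $m\gtrsim (d/\kappa^2)(K\log n+\log(1/\delta))$ --- a factor of $d$ more than the claimed $m_1$. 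With your own stated $m\gtrsim(1/\kappa^2)(K\log n+\log(1/\delta))$ the error is $\Theta(\kappa)$ while the signal is $\Theta(\kappa/\sqrt{d})$, so for $d\gg 1$ the noise swamps the signal and the greedy step is not guaranteed to pick a good coordinate. The paper avoids this by renormalizing the decrement by the \emph{excess} variance $\Var(X_i\mid X_{S})-1/\Theta_{ii}$ (Lemma~\ref{lem:estimating-variance-decrement-v}), under which supermodularity makes the signal $\gamma'\ge 1/d'$ with no $\kappa$ dependence; the price is the inflation factor $\Var(X_i\mid X_{S\cup\{j\}})/(\Var(X_i\mid X_{S})-1/\Theta_{ii})$, which is controlled by $\max(2,\,2/d'\kappa^2)$ via the case analysis \eqref{eqn:ferromagnetic-trick}. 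That step in turn needs the \emph{strengthened} lower bound $\Var(X_i\mid X_S)\ge(1+|T|\kappa^2)/\Theta_{ii}$ of Lemma~\ref{lem:kappa-variance-ferromagnetic} (proved with Griffith's inequality), not the weaker Lemma~\ref{lem:kappa-variance} you invoke; the extra factor $|T|$ there is what cancels the $d'$ and yields $m=\Omega((d+1/\kappa^2)(K\log n+\log(1/\delta)))=O((1/\kappa^2)(K\log n+\log(1/\delta)))$ using $d\le 1/\kappa^2$. Two smaller points: the $K\log n$ term comes from the union bound over the $n^{K}$ data-dependent conditioning sets, not from the $m\ge k$ requirement of the OLS lemma; and \textsc{GreedyAndPrune} runs OMP for exactly $T$ steps rather than terminating when all candidates test below $\nu$ --- the threshold $\nu$ is used only in pruning.
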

Note that the above immediately implies Theorem \ref{thm:greedy-ferromagnetic-informal}. 

As mentioned in the introduction, Algorithm \textsc{GreedyAndPrune} learns the neighborhood of a node by doing greedy forward selection to minimize the conditioned variance, and then doing pruning to remove non-neighbors from the candidate neighborhood. The greedy forward selection step is known in the compressed sensing literature as \emph{Orthogonal Matching Pursuit} (OMP) (see e.g. \cite{tropp2007signal}). We give a description of the OMP algorithm in the general setting of Section~\ref{sec:fixed-design} below, along with pseudocode for \textsc{GreedyAndPrune}.
\begin{figure}
\fbox{\begin{minipage}{\textwidth}
\vspace{.1cm}
Algorithm \textsc{OrthogonalMatchingPursuit($T$):}
\begin{enumerate}
    \item Set $S_0 := \{\}$.
    \item For $t$ from $1$ to $T$:
    \begin{enumerate}
        \item Choose $j$ which minimizes
        \[ \min_{w \ :\  \supp(w) \subset S_{t - 1} \cup \{j\}} \|\mathbbm{Y} - \mathbbm{X} w\|_2^2 \]
        \item Set $S_t := S_{t - 1} \cup \{j\}$
    \end{enumerate}
    \item Return $S_T$.
\end{enumerate}
\vspace{.1cm}
\end{minipage}}
\end{figure}

\begin{figure}
\fbox{\begin{minipage}{\textwidth}
\vspace{.1cm}
Algorithm \textsc{GreedyAndPrune($i,\nu,T$):}
\begin{enumerate}
    \item  Run OMP for $T$ steps to predict $\mathbbm{X}_i$ from the other columns of $\mathbbm{X}$.
    \item Define $\hat{\Theta}_{ii}$ by  $1/\hat{\Theta}_{ii} = \widehat{\Var}(X_i | X_S)$.
    \item For $j \in S$:
    \begin{enumerate}
        \item Let $S' := S \setminus \{j\}$ and $\hat{w} := \hat{w}(i,S')$.
        \item If $\widehat{\Var}(X_i | X_{S'}) - \widehat{\Var}(X_i | X_S) < \nu/\hat{\Theta}_{ii}$, set $S := S'$.
    \end{enumerate}
    \item Return $S$.
\end{enumerate}
\vspace{.1cm}
\end{minipage}}
\end{figure}

\begin{remark}[Implementation: Merging neighborhoods]\label{rmk:merging-neighborhoods}
In order to return an actual estimate for the inverse precision matrix, we add in our implementation of \textsc{GreedyAndPrune} a merging step which includes an edge $(i,j)$ iff it is in the computed neighborhood of node $i$ and in the computed neighborhood of node $j$. Then to estimate the entries, we use OLS to predict node $X_i$ from its neighbors and estimate the conditional variance of $X_i$. We define $\hat{\Theta}_{ii}$ to be the inverse of the estimated conditional variance, and $-\hat{\Theta}_{ij}/\hat{\Theta}_{ii}$ to be the OLS coefficient. Finally, we symmetrize $\hat{\Theta}$ by picking the smaller of absolute norm between $\hat{\Theta}_{ij}$ and $\hat{\Theta}_{ji}$; the same step is used in CLIME \cite{cai2011constrained}.
\end{remark}
\subsection{Proof of supermodularity}
As a first step toward proving Theorem \ref{thm:greedy-ferromagnetic}, we first show that the conditional variance function is supermodular. 
\begin{defn}
Given a universe $U$, a function $f : 2^U \to \mathbb{R}$ is \emph{supermodular} if for any $S \subset T$,
\[ f(S) - f(S \cup \{j\}) \ge f(T) - f(T \cup \{j\}). \]
(This is the same as saying $-f$ is \emph{submodular}.)
\end{defn}
Supermodularity of the conditional variance of a node in the GFF (and hence, by using the reduction from Remark~\ref{rmk:attractive-ggm-is-gff}, all attractive GGMs) was previously shown independently in \cite{mgs2013,mahalanabis2012subset} using two different methods. The proof in \cite{mgs2013} is algebraic using the Schur complement formula, whereas the proof in \cite{mahalanabis2012subset} converts the problem into one about electrical flows and argues via Thomson's principle. We give a third different proof which has the benefit of being transparent and using only basic linear algebra.

\ignore{
The proof is based on the following lemma, which generalizes the random-walk interpretation of the Laplacian/GFF to attractive GGMs. 
\begin{lemma}\label{lem:walk-expansion}
  In a ferromagnetic GGM with $\Theta_{ii} = 1$ for all $i$, if we write $\Theta = I - A$ where $diag(A) = 0$ then the covariance matrix $\Sigma = \Theta^{-1}$ satisfies
  \[ \Sigma = \sum_{k = 0}^{\infty} A^k. \]
\end{lemma}
\begin{proof}
By ferromagneticity of the GGM, $A$ has all nonnegative entries. From the Perron-Frobenius Theorem and from $0 \prec \Theta = I - A$ we see that $\|A\| = \lambda_{max}(A) < 1$.
Then
\[ \Sigma = (I - A)^{-1} = \sum_{k = 0}^{\infty} A^k \]
using the identity $1/(1 - x) = \sum_{i = 0}^{\infty} x^i$ valid when $|x| < 1$.
\end{proof}}
\begin{theorem}\label{thm:supermodular}
For any node $i$ in a ferromagnetic GGM, $\Var(X_i | X_S)$ is a monotonically decreasing, supermodular function of $S$.
\end{theorem}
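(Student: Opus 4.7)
The plan: Monotonicity of $\Var(X_i \mid X_S)$ in $S$ is just the law of total variance and does not use attractiveness, so the real content is supermodularity. By the usual reduction, supermodularity is equivalent to the one-element diminishing-returns property, so it suffices to compare $\Var(X_i \mid X_S) - \Var(X_i \mid X_{S \cup \{j\}})$ with $\Var(X_i \mid X_{S \cup \{k\}}) - \Var(X_i \mid X_{S \cup \{j,k\}})$ for an arbitrary singleton $\{k\}$ added on top of $S$.

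My plan is to express both sides in the form $\Cov(X_i,X_j\mid\cdot)^2/\Var(X_j\mid\cdot)$ via Lemma~\ref{lem:gaussian-conditional-expectation}, and then apply the Schur complement (again essentially Lemma~\ref{lem:gaussian-conditional-expectation}) to rewrite the quantities conditional on $X_{S\cup\{k\}}$ in terms of those conditional on $X_S$. Writing $a = \Cov(X_i,X_j\mid X_S)$, $b = \Cov(X_i,X_k\mid X_S)$, $c = \Cov(X_j,X_k\mid X_S)$, $V_j = \Var(X_j\mid X_S)$, $u = \Var(X_k\mid X_S)$, the target inequality becomes
$$\frac{a^2}{V_j} \ \geq\ \frac{(a - bc/u)^2}{V_j - c^2/u},$$
and after clearing denominators collapses to the polynomial inequality
$$2V_j\, a\, b\, u \ \geq\ c\bigl(a^2 u + V_j b^2\bigr).$$

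The crucial step, and the only place attractiveness enters, is the identity
$$2V_j abu - c(a^2 u + V_j b^2) \ =\ au\bigl(V_j b - ac\bigr) + V_j b\bigl(au - bc\bigr),$$
together with the observations, once more from Lemma~\ref{lem:gaussian-conditional-expectation}, that $V_jb - ac = V_j\,\Cov(X_i,X_k\mid X_{S\cup\{j\}})$ and $au - bc = u\,\Cov(X_i,X_j\mid X_{S\cup\{k\}})$. To conclude, I just need every factor in these two products to be nonnegative. This is where $M$-matrix structure kicks in: in an attractive GGM, $\Theta$ is a positive-definite $Z$-matrix, hence a non-singular $M$-matrix; every principal submatrix $\Theta_{\sim S', \sim S'}$ (the precision matrix after conditioning on $X_{S'}$) remains a $Z$-matrix and is still PD, hence is also an $M$-matrix, and the inverse of any non-singular $M$-matrix has all nonnegative entries. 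Applied after conditioning on $X_S$, $X_{S \cup \{j\}}$, and $X_{S \cup \{k\}}$, this shows every covariance appearing in the factorization is $\geq 0$, so the sum is $\geq 0$ and supermodularity follows.

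The main obstacle I expect is spotting the right factorization. The inequality $2V_j abu \geq c(a^2 u + V_j b^2)$ is \emph{not} a consequence of Cauchy--Schwarz alone; it fails for generic signed values (small $a$ with $b,c$ large of the wrong relative signs), so it genuinely uses attractiveness, and the factorization is what makes that quantitative. Everything else in the argument is bookkeeping: the one-element reduction for supermodularity, the Schur complement/Gaussian conditioning formulas, and the classical fact that $M$-matrix inverses are entrywise nonnegative.
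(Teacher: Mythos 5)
Your proof is correct, and it takes a genuinely different route from the paper's. The paper proves supermodularity via the walk-sum expansion: after normalizing the diagonal it writes $\Sigma_S = \sum_{k\ge 0} A_S^k$ and reads off $\Var(X_i \mid X_S)$ as a nonnegative weighted sum over closed walks avoiding $S$; the decrement from adding $j$ is the sub-sum over walks passing through $j$, which can only shrink as $S$ grows. That argument is purely combinatorial, gives monotonicity and Griffith's inequality (Lemma~\ref{lem:griffiths}) as free byproducts, and avoids any case analysis. Your argument instead reduces to the two-point diminishing-returns inequality, expresses each decrement as $\Cov(X_i,X_j\mid\cdot)^2/\Var(X_j\mid\cdot)$ via Lemma~\ref{lem:gaussian-conditional-expectation}, and exhibits an explicit factorization of the deficit into a sum of products of conditional covariances and variances, each nonnegative because every conditional precision matrix is again a positive-definite $Z$-matrix (hence an $M$-matrix with entrywise nonnegative inverse). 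Both proofs ultimately rest on the same structural fact --- conditioning preserves attractiveness, so all conditional covariances are nonnegative --- but yours extracts supermodularity algebraically where the paper does it combinatorially; yours is more computational but arguably more portable to settings where one only knows entrywise nonnegativity of the conditional covariances. Two small points of hygiene: after clearing denominators the inequality is really $c\bigl(2V_j abu - c(a^2u + V_j b^2)\bigr) \ge 0$, so you need $c \ge 0$ (which you do establish) to pass to the form you state, with the case $c=0$ holding trivially; and the one-element reduction of supermodularity, while standard, deserves a sentence since the paper's definition quantifies over all $S \subset T$.
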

\begin{proof}
By rescaling we may assume w.l.o.g. that $\Theta_{ii} = 1$ for all $i$. Define $\Theta_S$ to be the precision matrix corresponding to conditioning $S$ out (i.e. $\Theta$ with the rows and columns of $S$ removed), and $\Sigma_S = \Theta_S^{-1}$. Then, if we write $\Theta_S = I - A_S$, by Neumann series formula (as $\Theta_S \succ 0$, $\|A_S\| < 1$ using Perron-Frobenius), we see
\begin{equation}\label{eqn:attractive-walk-expansion}
\Sigma_S = (I - A_S)^{-1} = \sum_{k = 0}^{\infty} A_S^k. 
\end{equation}
Writing this out explicitly for $(\Sigma_S)_{i,i}$ gives
\begin{equation}\label{eqn:variance-loops}
\Var(X_i | X_S) = \sum_{k = 0}^{\infty} \sum_{v_1,\ldots,v_k \notin S}
(-\Theta_{i v_1}) \cdots (-\Theta_{v_k i}), 
\end{equation}
where the $k = 0$ term in the sum is interpreted to be $1$,
so $\Var(X_i | X_S)$ is a nonnegative weighted sum over walks avoiding $S$ and returning to $i$ in the final step. The above expression is clearly monotonically increasing in $S$ as all off-diagonal entries of $\Theta$ are negative (and also follows from law of total variance); to verify supermodularity, we just need to check that
\[ \Var(X_i | X_S) - \Var(X_i | X_{S \cup \{j\}}) =  \sum_{k = 0}^{\infty} \sum_{\substack{v_1,\ldots,v_k \notin S,\\ j \in \{v_1,\ldots,v_k\}}} (-\Theta_{i v_1}) \cdots (-\Theta_{v_k i}) \]
is a monotonically decreasing function of $S \subseteq [n]\setminus \{i,j\}$,
but this is clear once we apply \eqref{eqn:variance-loops} as the set of cycles that are eliminated from the sum by adding $j$ only shrinks as we increase $S$.
\end{proof}

Supermodularity of the conditional variance has the following useful consequence which will later be useful in showing that the greedy algorithm makes non-trivial progress in each step. 
\begin{lemma}\label{lem:supermodularity-consequence}
For any node $i$ in a ferromagnetic GGM, if $S$ is a set of nodes that does not contain $i$ or all neighbors of $i$, and $T$ is the set of neighbors of $i$ not in $S$, then there exists some node $j \in T$ such that
\[ \Var(X_i | X_S) - \Var(X_i | X_{S \cup \{j\}}) \ge \frac{\Var(X_i | X_S) - 1/\Theta_{ii}}{|T|}\,. \]
\end{lemma}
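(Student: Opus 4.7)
The plan is to combine supermodularity (Theorem~\ref{thm:supermodular}) with an averaging argument. Let $f(A) := \Var(X_i \mid X_A)$. The first observation is that adding all of $T$ to $S$ already drives the conditional variance all the way down to $1/\Theta_{ii}$: because $S \cup T$ contains every neighbor of $i$, the Markov property of the GGM (equivalently, Fact~\ref{fact:ggmfacts}) gives
\[ f(S \cup T) = \Var(X_i \mid X_{\sim i}) = 1/\Theta_{ii}, \]
so the right hand side of the claimed inequality is exactly $(f(S) - f(S \cup T))/|T|$.

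Next I would enumerate the elements of $T$ in an arbitrary order $j_1,\dots,j_{|T|}$ and telescope:
\[ f(S) - f(S \cup T) = \sum_{k=1}^{|T|} \bigl[f(S \cup \{j_1,\dots,j_{k-1}\}) - f(S \cup \{j_1,\dots,j_k\})\bigr]. \]
Now apply supermodularity (Theorem~\ref{thm:supermodular}) with the containment $S \subseteq S \cup \{j_1,\dots,j_{k-1}\}$ and the added element $j_k$: this upper bounds each telescoping term by $f(S) - f(S \cup \{j_k\})$. Therefore
\[ f(S) - f(S \cup T) \le \sum_{k=1}^{|T|} \bigl[f(S) - f(S \cup \{j_k\})\bigr]. \]

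By the pigeonhole/averaging principle, at least one summand is at least $(f(S) - f(S \cup T))/|T|$, and taking the corresponding $j = j_k \in T$ yields the claimed bound. There is essentially no obstacle here once supermodularity is in hand; the only content beyond a routine supermodular averaging argument is the identification of the ``total progress'' endpoint $f(S \cup T) = 1/\Theta_{ii}$ via the Markov property, which replaces the less useful bound $f(S \cup T) \ge 0$ one would get from a purely combinatorial treatment.
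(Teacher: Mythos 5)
Your proof is correct and is essentially identical to the paper's: both telescope over adjoining the elements of $T$ one at a time, bound each increment via supermodularity (Theorem~\ref{thm:supermodular}), identify $\Var(X_i \mid X_{S \cup T}) = 1/\Theta_{ii}$ via the Markov property, and conclude by averaging. No gaps.
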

\begin{proof}
This is a standard consequence of supermodularity -- we include the proof for completeness.

Consider adjoining the elements of $T$ to $S$ one at a time, and then apply supermodularity to show
\[ \Var(X_i | X_S) - \Var(X_i | X_{S \cup T}) \le  \sum_{j \in T} (\Var(X_i | X_S) - \Var(X_i | X_{S \cup \{j\}})) \le |T| \max_{j \in T} (\Var(X_i | X_S) - \Var(X_i | X_{S \cup \{j\}})). \]
Rearranging and using $\Var(X_i |  X_{S \cup T}) = 1/\Theta_{ii}$ (by the Markov property) gives the result.
\end{proof}
From \eqref{eqn:attractive-walk-expansion} we see immediately that the entries of the covariance $\Sigma$ of an attractive GGM are always nonnegative (this is why they are called attractive/ferromagnetic); we record this fact for future use.
\begin{lemma}[Griffith's inequality]\label{lem:griffiths}
In an attractive GGM, $\Cov(X_i,X_j) \ge 0$ for any $i,j$.
\end{lemma}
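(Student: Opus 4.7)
The plan is to read off the result directly from the walk expansion formula \eqref{eqn:attractive-walk-expansion} established in the preceding proof of Theorem~\ref{thm:supermodular}, applied with $S = \emptyset$. That is, after rescaling coordinates so that $\Theta_{ii} = 1$ for every $i$ (which does not change the sign of any covariance entry, since rescaling by a diagonal matrix with positive entries conjugates $\Sigma$ by the same positive diagonal matrix), we write $\Theta = I - A$ with $\mathrm{diag}(A) = 0$, and attractivity of the GGM is exactly the statement that every off-diagonal entry of $A$ is nonnegative.

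Next I would invoke the Neumann series identity
\[ \Sigma = \Theta^{-1} = (I - A)^{-1} = \sum_{k=0}^{\infty} A^k, \]
which is valid because $\Theta \succ 0$ together with Perron--Frobenius gives $\|A\| < 1$ (this is exactly the $S = \emptyset$ case of \eqref{eqn:attractive-walk-expansion}). Since $A$ has nonnegative entries, every power $A^k$ has nonnegative entries (products and sums of nonnegative numbers), and hence so does the sum $\Sigma$. In particular $\Sigma_{ij} = \Cov(X_i, X_j) \ge 0$ for all $i, j$.

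Finally I would undo the rescaling: if $D$ is the positive diagonal matrix with $D_{ii} = 1/\sqrt{\Theta_{ii}}$ used in the normalization step, then the original covariance matrix is $D \Sigma D$, whose entries are the entries of $\Sigma$ multiplied by positive scalars, so nonnegativity is preserved. There is no real obstacle here --- the argument is essentially a two-line corollary of the walk expansion already derived for Theorem~\ref{thm:supermodular}, and the only thing worth being careful about is noting that the coordinate rescaling used to normalize $\Theta_{ii} = 1$ is by a positive diagonal matrix and hence does not flip any signs of covariances.
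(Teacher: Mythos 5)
Your proposal is correct and is exactly the argument the paper has in mind: the paper derives the lemma as an immediate consequence of the walk expansion \eqref{eqn:attractive-walk-expansion} (the Neumann series $\Sigma_S = \sum_k A_S^k$ with $A_S$ entrywise nonnegative), and your write-up just spells out the $S = \emptyset$ case together with the harmless positive diagonal rescaling. No differences worth noting.
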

This fact is very well-known, holds for arbitrary ferromagnetic graphical models (i.e. not just Gaussian) and is referred to as \emph{Griffith's inequality}. See \cite{griffiths1969rigorous} for a more general proof.

\ignore{
\begin{remark}
It is possible to analyze the natural greedy learning algorithm using only the concentration inequality above (Lemma~\ref{lem:ols-sigmahat}) to bound the error between the true variance and conditional variance, however this will lead to a suboptimal dependence on $\kappa$ ($1/\kappa^4$ instead of $1/\kappa^2$). The issue is that the conditional variance estimates only converge at a slow $1/\sqrt{m}$ rate due to the Central Limit Theorem, whereas the correct analysis (below) instead relies on the fast $1/m$ rate for the OLS risk.
\end{remark}}
\subsection{Greedy Subset Selection in Attractive Models}
In this section we give a guarantee for \emph{subset selection} using OMP, by showing that after a small number of rounds OMP finds a set $S$ such that $\Var(X_i | X_S)$ is close to minimal. The sample complexity analysis is complicated by the fact that supermodularity holds at the level of the population loss (i.e. for an infinite amount of data) whereas it would be more convenient if it held for the empirical conditional variance, so we have to deal with both the regression noise and the randomness of the regressors.
First we prove the following lemma which gives a stronger version of Lemma~\ref{lem:kappa-variance} for ferromagnetic GGMs:
\begin{lemma}\label{lem:kappa-variance-ferromagnetic}
Fix $i$ a node in a $\kappa$-nondegenerate ferromagnetic GGM, and let $S$ be set
of nodes and let $T$ be the set of neighbors of $i$ not in $S$. Then
\[ \Var(X_i | X_S) \ge \frac{1 + |T|\kappa^2}{\Theta_{ii}} \]
\end{lemma}
\begin{proof}
By the law of total variance, Griffith's inequality (Lemma~\ref{lem:griffiths}), 
and the law of total variance again
\begin{align*}
 \Var(X_i | X_S) - \frac{1}{\Theta_{ii}}
= \Var(\E[X_i | X_{\sim i}] | X_S)
&= \Var(\sum_{j \in T} \frac{-\Theta_{ij}}{\Theta_{ii}} X_j | X_S)\\
&\ge \sum_{j \in T} \frac{\Theta_{ij}^2}{\Theta_{ii}^2} \Var(X_j | X_S) 
\ge \frac{1}{\Theta_{ii}} \sum_{j \in T} \frac{\Theta_{ij}^2}{\Theta_{ii} \Theta_{jj}} \ge \frac{|T| \kappa^2}{\Theta_{ii}}.
\end{align*}
\end{proof}
\begin{lemma}\label{lem:ferromagnetic-subset-selection}
  Suppose that $X$ is distributed according to an $\kappa$-nondegenerate ferromagnetic GGM and $i$ is a node of degree at most $d$. Let $\sigma^2 := \frac{1}{\Theta_{ii}}$ and $w^*_j = \frac{-\Theta_{ij}}{\Theta_{ii}}$ for all $j \ne i$.
  Then using $T$ rounds of OMP to predict $X_i$ given $X_{\sim i}$ from $m$ i.i.d. samples, we have that $\Var(\E[X_i | X_{\sim i}] | X_S) \le (1 - 1/2d)^{T - 1} \frac{8d}{\Theta_{ii}}$
    with probability at least $1 - \delta$
    provided that $m = \Omega((d + 1/\kappa^2)(T\log(n) + \log(2/\delta)))$.
\end{lemma}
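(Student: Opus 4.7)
I would prove this by induction on $t$, establishing the slightly more convenient statement that after $t$ rounds of OMP, the excess conditional variance $E_t := \Var(X_i \mid X_{S_t}) - 1/\Theta_{ii}$ satisfies $E_t \le (1 - 1/2d)^{t-1} \cdot 8d/\Theta_{ii}$. By the law of total variance together with Fact~\ref{fact:ggmfacts}, $E_t$ equals exactly $\Var(\E[X_i \mid X_{\sim i}] \mid X_{S_t})$, so this is the quantity we want to bound. The two ingredients are the supermodularity consequence Lemma~\ref{lem:supermodularity-consequence} (which guarantees a per-step multiplicative decrement in the idealized, population setting) and the fine-grained empirical concentration Lemma~\ref{lem:estimating-variance-decrement-v} (which lets us transfer this decrement to the empirical OMP choice).

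For the \textbf{base case} $t = 1$, I would first rescale coordinates so that $\Theta$ is a generalized Laplacian (an attractive model admits such a rescaling, and $\Theta_{ii}\Var(X_i \mid X_S)$ is invariant under diagonal rescaling of the coordinates). Lemma~\ref{lem:bound-after-conditioning} then produces a neighbor $j^\star$ of $i$ with $\Var(X_i \mid X_{j^\star}) \le 4d/\Theta_{ii}$. Since OMP at the first step chooses $j_1$ maximizing the empirical decrement $\hat w_j^2/(\hat\Sigma^{-1})_{jj}$ (by Lemma~\ref{lem:t-equals-f}, this equals the drop in empirical squared error), I apply Lemma~\ref{lem:estimating-variance-decrement-v} with $V = 1/\Theta_{ii}$ and union-bound over $j \in [n]$ to convert this to control on the true normalized decrement; the conclusion is that $j_1$'s true decrement is within a constant factor of $j^\star$'s, giving $E_1 \le 8d/\Theta_{ii}$.

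For the \textbf{inductive step}, assume $E_{t-1} \le (1-1/2d)^{t-2} \cdot 8d/\Theta_{ii}$. If all neighbors of $i$ are already in $S_{t-1}$ then $E_{t-1} = 0$ by the Markov property and the claim is immediate. Otherwise, let $T_{t-1}$ be the set of remaining neighbors, $|T_{t-1}| \le d$. By Lemma~\ref{lem:supermodularity-consequence} there exists $j^\star \in T_{t-1}$ with
\[ \frac{\Var(X_i \mid X_{S_{t-1}}) - \Var(X_i \mid X_{S_{t-1}\cup\{j^\star\}})}{E_{t-1}} \ge \frac{1}{|T_{t-1}|} \ge \frac{1}{d}. \]
This is exactly the quantity $\gamma'$ of Lemma~\ref{lem:estimating-variance-decrement-v} with $V = 1/\Theta_{ii}$. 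Applying that lemma to each $j \in [n]\setminus S_{t-1}$ and union-bounding over both $j$ and over all possible prior sets $S_{t-1} \subseteq [n]$ of size at most $T-1$ (at most $n^{T-1}$ of them, contributing $T\log n$ to the required sample complexity), the OMP-selected $j_t$ has true $\gamma' \ge 1/(2d)$, i.e. $E_t \le (1 - 1/(2d)) E_{t-1}$, closing the induction.

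The main technical obstacle is controlling the multiplicative noise factor $\Var(Y \mid X)/(\Var(Y \mid X_{\sim j}) - V)$ appearing in the additive error term of Lemma~\ref{lem:estimating-variance-decrement-v}. Here $\Var(Y \mid X_{\sim j}) - V = E_{t-1}$, which Lemma~\ref{lem:kappa-variance-ferromagnetic} lower bounds by $|T_{t-1}|\kappa^2/\Theta_{ii}$, while the numerator is at most $\Var(X_i \mid X_{S_{t-1}}) \le (1 + 8d)/\Theta_{ii}$ by the inductive hypothesis. This bounds the ratio by $O(d + 1/\kappa^2)$ (using $d \le 1/\kappa^2$ when convenient via Lemma~\ref{lem:d-bounded-by-kappa}), so to make the additive error smaller than a constant fraction of $\sqrt{1/(2d)}$ after the union bound (with failure probability $\delta/n^T$), it suffices to take $m = \Omega((d + 1/\kappa^2)(T\log n + \log(2/\delta)))$, as claimed. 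The run-time bound is straightforward, as each of the $n$ candidate regressions in a given round of OMP can be updated incrementally from the previous round in $O(Km)$ time.
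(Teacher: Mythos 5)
Your inductive step is essentially the paper's own argument: Lemma~\ref{lem:supermodularity-consequence} supplies a remaining neighbor with normalized decrement $\gamma'\ge 1/|T_{t-1}|$, Lemma~\ref{lem:estimating-variance-decrement-v} with $V=1/\Theta_{ii}$ (union-bounded over all $j$ and all candidate sets of size at most $T$) transfers this to the empirically selected coordinate, and the contraction $E_t\le(1-1/2d)E_{t-1}$ follows. However, your \textbf{base case has a genuine gap}. Lemma~\ref{lem:estimating-variance-decrement-v} only certifies that $\gamma'(j_1)$ is within a constant multiplicative factor of $\gamma'(j^\star)$ (the intrinsic $\sqrt{\gamma'/64}$ slack already costs a constant factor), plus additive error. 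In the regime that motivates this paper, $\Var(X_i)$ can be $\mathrm{poly}(n)/\Theta_{ii}$ (e.g.\ a GFF on a path), so $\gamma'(j^\star)=1-\Theta\bigl(d/(\Theta_{ii}\Var(X_i))\bigr)$ is extremely close to $1$; to conclude $E_1\le 8d/\Theta_{ii}$ you would need $1-\gamma'(j_1)=O\bigl(1-\gamma'(j^\star)\bigr)$, i.e.\ control of the \emph{complement}, which a constant-factor guarantee on $\gamma'$ itself cannot deliver. Concretely, knowing only $\gamma'(j_1)\ge 0.6$ yields $E_1\le 0.4\,\Var(X_i)$, which is still polynomially large. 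The paper's proof uses a different tool here: the first OMP step is exactly the minimization of the empirical residual $\widehat{\Var}(X_i\mid X_j)$ over $j$, and Lemma~\ref{lem:ols-sigmahat} (union-bounded over $j$) makes every such estimate multiplicatively accurate to a constant, so $\Var(X_i\mid X_{j_1})\le 2\,\Var(X_i\mid X_{j^\star})\le 8d/\Theta_{ii}$ directly.

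A secondary, quantitative issue: your bound on the noise-amplification ratio does not follow from your own steps. Dividing the numerator bound $(1+8d)/\Theta_{ii}$ by the denominator bound $|T_{t-1}|\kappa^2/\Theta_{ii}$ gives $O(d/\kappa^2)$ when $|T_{t-1}|=1$, not $O(d+1/\kappa^2)$; after multiplying by the factor $d$ required to beat the target decrement $1/d$, this would inflate the sample complexity to $\Omega((d^2/\kappa^2)T\log n)$. The paper's inequality \eqref{eqn:ferromagnetic-trick} avoids this with a dichotomy: if $\Var(X_i\mid X_{U\setminus\{j\}})\ge 2/\Theta_{ii}$ the ratio is at most $2$ with no $\kappa$-dependence, and otherwise the numerator is at most $2/\Theta_{ii}$ (not $(1+8d)/\Theta_{ii}$), so the factor $d'$ supplied by Lemma~\ref{lem:kappa-variance-ferromagnetic} cancels against the target $1/d'$ and one lands on the claimed $(d+1/\kappa^2)$ dependence. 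Both repairs are local, but as written the proposal does not establish the lemma.
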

\begin{proof}
We prove by induction that for every $1 \le t \le T$ that
\[ \Var(\E[X_i | X_{\sim i}] | X_{S_t}) \le (1 - 1/2d)^{t - 1} \frac{8d}{\Theta_{ii}}. \]
Note that by Lemma~\ref{lem:bound-after-conditioning} there exists a node $j$ such that $\Var(X_i | X_j) \le \frac{4d}{\Theta_{ii}}$. 
By taking a union bound, we may assume that:
\begin{enumerate}
    \item $\Var(X_i | X_{S_1}) \le \frac{8d}{\Theta_{ii}}$ using the above fact combined with Lemma~\ref{lem:ols-sigmahat} assuming that $m = \Omega(\log(n/\delta))$ to guarantee that the estimated conditional variances have small multiplicative error.
    \item For all subsets $U$ of $[n]$ of size at most $T$ and $j \in [n]$, applying Lemma~\ref{lem:estimating-variance-decrement-v} we have
    \[ \left|\frac{1}{\sqrt{\Var(X_i | X_{U \setminus \{j\}}) - 1/\Theta_{ii}}} \hat{R}(U,j) - \sqrt{\gamma'}\right| \le \sqrt{\frac{\Var(X_i | X_U)}{\Var(X_i | X_{U \setminus \{j\}}) - 1/\Theta_{ii}}} \sqrt{\frac{4 (T\log(n) + \log(12/\delta))}{m}} + \sqrt{\frac{\gamma'}{64}} \]
    where 
    \[ \gamma' = \gamma'(U,j) := \frac{\Var(X_i | X_{U \setminus \{j\}}) - \Var(X_i | X_U)}{\Var(X_i | X_{U \setminus \{j\}}) - 1/\Theta_{ii}} \]
    and
    \[ \hat{R}(U,j) := \frac{(\hat{w}_U)_j}{((\hat{\Sigma}_{U, U})^{-1})_{jj}} = \sqrt{\|\mathbbm{X}_i - \mathbbm{X} \hat{w}_{U}\|_2^2 - \|\mathbbm{X}_i - \mathbbm{X} \hat{w}_{U \setminus \{j\}}\|_2^2} \]
    using Lemma~\ref{lem:t-equals-f} in the last equality where $\hat{w}_U$ is the OLS
    estimate using only the coordinates in $U$. This holds assuming that $m = \Omega(T \log(4n) + \log(1/\delta))$.
\end{enumerate}
Before proceeding, we observe that
\begin{equation} \label{eqn:ferromagnetic-trick}
\sqrt{\frac{\Var(X_i | X_U)}{\Var(X_i | X_{U \setminus \{j\}}) - 1/\Theta_{ii}}} \le
\sqrt{\frac{\Var(X_i | X_{U \setminus \{j\}})}{\Var(X_i | X_{U \setminus \{j\}}) - 1/\Theta_{ii}}} \le \max(\sqrt{2}, \sqrt{2/d'\kappa^2})
\end{equation}
where $d'$ is the degree of node $i$ in the graph with the nodes in $U \setminus \{j\}$ removed, 
by the law of total variance (first inequality) and the following case analysis: either $\Var(X_i | X_{U \setminus \{j\}}) \ge 2/\Theta_{ii}$, in which case $\frac{\Var(X_i | X_{U \setminus j\}})}{\Var(X_i | X_{U \setminus \{j\}}) - 1/\Theta_{ii}} \le 2$, or $\Var(X_i | X_{U \setminus \{j\}}) \le 2/\Theta_{ii}$ in which case  $\frac{\Var(X_i | X_{U \setminus \{j\}})}{\Var(X_i | X_{U \setminus \{j\}}) - 1/\Theta_{ii}} \le 2/d'\kappa^2$ by Lemma~\ref{lem:kappa-variance-ferromagnetic}.

The first point above gives the base case for the induction.
By Lemma~\ref{lem:supermodularity-consequence}, if $\Var(\E[X_i | X_{\sim i}] | S_t) \ne 0$ then there exists a $k$ such that
\[ \gamma'(S_t \cup \{k\}, k) = \frac{\Var(\E[X_i | X_{\sim i}] | X_{S_t}) - \Var(\E[X_i | X_{\sim i}] | X_{S_t \cup \{k\}})}{\Var(\E[X_i | X_{\sim i}] | X_{S_t \cup \{k\}\}})} \ge \frac{1}{d'} \]
where (as above) $d'$ is the degree of $i$ in the set of non-neighbors of $S_t$. 
Combined with \eqref{eqn:ferromagnetic-trick} and $d' \le d$ we now see that the second guarantee above ensures that at every time $t$,  the $j$ selected by OMP (i.e. $j$ where $S_{t + 1} = S_t \cup \{j\}$) satisfies $\gamma'(S_t \cup \{j\}, j) \ge 1/2d$ as long as $m = \Omega((d + 1/\kappa^2)(T\log(n) + \log(12/\delta)))$. We therefore have that
\[ \Var(X_i | X_{S_t}) - 1/\Theta_{ii} \le (1 - 1/2d)(\Var(X_i | X_{S_{t - 1}}) - 1/\Theta_{ii}) \]
for all $1 < t \le T$, which combined with the induction hypothesis gives the result (using that $\Var(X_i | X_{S_t}) - 1/\Theta_{ii} = \Var(\E[X_i | X_{\sim i}] | X_{S_t})$ by law of total variance).
\end{proof}
\subsection{Structure Recovery for Attractive GGMs}
To give a final result for structure recovery, we show how to combine the previous analysis of greedy forward selection with a simple analysis of pruning (backward selection).
\begin{lemma}\label{lem:node-recovery-ferromagnetic}
Let $i$ be a node of degree at most $d$ in a $\kappa$-nondegenerate attractive GGM.
Fix $\delta > 0$ and suppose that $m = \Omega((d + 1/\kappa^2)(T\log(n) + \log(2/\delta)))$ where $T = \Theta(d\log(2d/\kappa^2))$. Then with probability at least $1 - \delta$, the neighborhod of node $i$ is correctly recovered by Algorithm \textsc{GreedyAndPrune} with $\nu = \Theta(\kappa^2)$.
\end{lemma}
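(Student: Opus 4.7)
The algorithm has two phases, so the argument naturally splits along the same lines: first show that OMP returns a set $S_T$ that is a superset of the true neighborhood of $i$, and then show that pruning removes exactly the non-neighbors from $S_T$.

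For the OMP phase I would invoke Lemma~\ref{lem:ferromagnetic-subset-selection} directly, choosing $T = \Theta(d \log(2d/\kappa^2))$ large enough that $(1 - 1/2d)^{T-1} \cdot 8d < \kappa^2$. Under the hypothesis $m = \Omega((d + 1/\kappa^2)(T \log n + \log(4/\delta)))$ the lemma guarantees, with probability at least $1 - \delta/2$, that $\Var(\E[X_i | X_{\sim i}] | X_{S_T}) < \kappa^2/\Theta_{ii}$. The contrapositive of Lemma~\ref{lem:kappa-variance-ferromagnetic} then forces $S_T$ to contain every neighbor of $i$, since otherwise $\Var(\E[X_i | X_{\sim i}] | X_{S_T}) = \Var(X_i | X_{S_T}) - 1/\Theta_{ii} \ge \kappa^2/\Theta_{ii}$.

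For the pruning phase I would maintain the loop invariant that the current candidate set $S$ always contains the true neighborhood of $i$. Standard concentration (Lemma~\ref{lem:ols-sigmahat}) applied at $S = S_T$ shows $\hat\Theta_{ii} \in (1 \pm o(1)) \Theta_{ii}$, so the threshold $\nu/\hat\Theta_{ii}$ is $(1 \pm o(1)) \nu/\Theta_{ii}$. For each $j \in S$, I would use Lemma~\ref{lem:t-equals-f} to rewrite the empirical decrement $\widehat\Var(X_i | X_{S \setminus \{j\}}) - \widehat\Var(X_i | X_S)$ as $\hat w_j^2 / (\hat\Sigma_S^{-1})_{jj}$ up to the $O(T/m)$ normalization mismatch between $1/m$ and $1/(m - |S|)$, and then invoke Lemma~\ref{lem:estimating-variance-decrement} to relate this empirical quantity to the population decrement. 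Two cases arise: (i) if $j$ is a non-neighbor, the Markov property makes the population decrement identically zero (since $S \setminus \{j\}$ still contains every neighbor), so $\gamma = 0$ and Lemma~\ref{lem:estimating-variance-decrement} bounds the empirical decrement by $O(\hat\sigma^2 \log(1/\delta)/m) < \nu/\hat\Theta_{ii}$ once $m = \Omega(\log(1/\delta)/\kappa^2)$, and $j$ is correctly pruned; (ii) if $j$ is a neighbor, $S \setminus \{j\}$ misses a neighbor, so Lemma~\ref{lem:kappa-variance-ferromagnetic} combined with law of total variance gives $\gamma \ge \kappa^2$, and Lemma~\ref{lem:estimating-variance-decrement} gives an empirical decrement of at least $\kappa^2/(4\Theta_{ii}) > \nu/\hat\Theta_{ii}$ for the chosen $\nu = \Theta(\kappa^2)$, so $j$ is retained. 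In either case the invariant is preserved, and at termination $S$ equals the neighborhood exactly. All of these events are made simultaneous by a union bound over pairs $(U, j)$ with $|U| \le T$ and $j \in [n]$ (at most $n^{T+1}$ pairs), which produces exactly the $T \log n + \log(1/\delta)$ factor already present in the OMP sample bound, so no additional samples beyond those required by OMP are needed.

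\textbf{Main obstacle.} The delicate technical point is that in the ill-conditioned regime the target decrement (order $\kappa^2/\Theta_{ii}$) and the conditional variance itself (order $1/\Theta_{ii}$) can both be vastly smaller than the variances of individual coordinates, so estimating $\Var(X_i | X_S)$ and $\Var(X_i | X_{S \setminus \{j\}})$ separately and subtracting would be far too lossy. The key is that Lemma~\ref{lem:estimating-variance-decrement} controls the \emph{normalized} OLS quantity $|\hat w_j|^2 / [\hat\sigma^2 (\hat\Sigma^{-1})_{jj}]$ directly, which by Lemma~\ref{lem:t-equals-f} is essentially the difference of squared residuals, and thereby yields sharp bounds on the empirical decrement without ever needing the individual variances to be accurately estimated. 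A minor but necessary piece of bookkeeping is reconciling the $1/(m - |S|)$ normalization of $\widehat\Var$ with the $1/m$ scaling of $\hat\Sigma$; since $|S| \le T \ll m$ this contributes only an $O(T/m)$ relative perturbation, absorbed by the slack in our choice $\nu = \kappa^2/\sqrt{32}$.
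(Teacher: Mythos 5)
Your proposal is correct and follows essentially the same route as the paper's proof: Lemma~\ref{lem:ferromagnetic-subset-selection} with $T = \Theta(d\log(d/\kappa^2))$ plus the $\kappa$-nondegeneracy variance lower bound to get a superset of the neighborhood, then Lemma~\ref{lem:t-equals-f} and Lemma~\ref{lem:estimating-variance-decrement} with the dichotomy $\gamma = 0$ versus $\gamma \ge \kappa^2$ (Markov property versus Lemma~\ref{lem:kappa-variance}) and a union bound over the at most $n^T$ candidate sets to analyze pruning. The only cosmetic differences are that you invoke Lemma~\ref{lem:kappa-variance-ferromagnetic} where the paper uses the weaker Lemma~\ref{lem:kappa-variance} in the first phase, and you spell out the $1/(m-|S|)$ versus $1/m$ normalization bookkeeping that the paper leaves implicit.
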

\begin{proof}
By Lemma~\ref{lem:ferromagnetic-subset-selection} with $T = 1 + 2d\log(16d/\kappa^2)$, with probability at least $1 - \delta/2$ we have that $\Var(\E[X_i | X_{\sim i}] \mid X_S) \le \kappa^2/2$ where $S$ is the set returned by OMP as long as $m = \Omega((d + 1/\kappa^2)(T\log(n) + \log(24/\delta)))$. From Lemma~\ref{lem:kappa-variance} we see this implies that $S$ contains the true neighborhood of node $i$.

We now analyze the pruning step for any $S$ which is a superset of the true neighborhood of size at most $T$. By Lemma~\ref{lem:kappa-variance} and the Markov property, we know that if $j$ is a true neighbor then $\gamma(S,j) \ge \kappa^2$, and otherwise $\gamma(S,j) = 0$. Applying Lemma~\ref{lem:estimating-variance-decrement} and taking the union bound over the at most $n^T$ possible sets, we find that exactly the true edges are kept with probability at least $1 - \delta/2$ as long as $m = \Omega((T\log(n) + \log(8/\delta))/\kappa^2)$.  Therefore the entire neighborhood recovery succeeds with probability at least $1 - \delta$.
\end{proof}
\begin{theorem}
  Let $X$ be distributed according to a $\kappa$-nondegenerate GGM on $n$ nodes with maximum degree $d$. Fix $\delta > 0$, then with probability at least $1 - \delta$ Algorithm \textsc{GreedyAndPrune} run at every node with $T = \Theta(d\log(2d/\kappa^2))$ and $\nu = \Theta(\kappa^2)$ successfully recovers the true graph as long as $m = \Omega((1/\kappa^2)(d\log(2d/\kappa^2) + \log(2/\delta))\log(n))$.
\end{theorem}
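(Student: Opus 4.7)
The plan is to apply Lemma~\ref{lem:node-recovery-ferromagnetic} at each node $i \in [n]$ separately and combine the per-node guarantees via a union bound. Note that although the theorem statement only says ``$\kappa$-nondegenerate GGM,'' it sits inside the attractive section and invokes \textsc{GreedyAndPrune}, so the underlying per-node guarantee of Lemma~\ref{lem:node-recovery-ferromagnetic} applies.

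First I would instantiate Lemma~\ref{lem:node-recovery-ferromagnetic} with failure probability $\delta/n$ at each node $i$. This guarantees that, running \textsc{GreedyAndPrune}$(i,\nu,T)$ with $T = \Theta(d\log(2d/\kappa^2))$ and $\nu = \Theta(\kappa^2)$, the true neighborhood of $i$ is recovered with probability at least $1 - \delta/n$ provided
\[ m = \Omega\bigl((d + 1/\kappa^2)\bigl(T\log n + \log(2n/\delta)\bigr)\bigr). \]
A union bound over the $n$ nodes then gives that every neighborhood is recovered simultaneously with probability at least $1 - \delta$. Since an edge $(i,j)$ is present in the true graph iff $j$ lies in the neighborhood of $i$ (equivalently, by symmetry of $\Theta$, iff $i$ lies in the neighborhood of $j$), correct recovery of all neighborhoods is the same as correct recovery of the graph.

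Second I would simplify the sample bound. By Lemma~\ref{lem:d-bounded-by-kappa} (valid here since attractive GGMs are walk-summable), $d \le 1/\kappa^2$, hence $d + 1/\kappa^2 = \Theta(1/\kappa^2)$. Expanding $T\log n = \Theta(d\log(2d/\kappa^2)\log n)$ and bounding $\log(2n/\delta) \le \log n + \log(2/\delta)$, the requirement becomes
\[ m = \Omega\!\left(\frac{1}{\kappa^2}\bigl(d\log(2d/\kappa^2)\log n + \log n + \log(2/\delta)\bigr)\right), \]
which is implied by the slightly cleaner bound $m = \Omega\bigl((1/\kappa^2)(d\log(2d/\kappa^2) + \log(2/\delta))\log n\bigr)$ stated in the theorem (the extra $\log n$ factor on $\log(2/\delta)$ only weakens the sufficient condition).

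There is no real obstacle beyond this bookkeeping: once Lemma~\ref{lem:node-recovery-ferromagnetic} is in hand, the passage to the global statement is a union bound combined with the structural dimension estimate $d = O(1/\kappa^2)$. The only subtlety is paying attention to the fact that the $\log n$ introduced by the union bound must be absorbed into the dominant $T\log n$ term rather than re-multiplying the entire expression, which is what allows the final bound to remain $O((1/\kappa^2)\cdot d\log(2d/\kappa^2)\log n)$ rather than picking up an additional factor of $d$ or $\log(1/\kappa)$.
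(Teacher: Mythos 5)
Your proposal is correct and follows essentially the same route as the paper: the paper's proof is precisely an application of Lemma~\ref{lem:node-recovery-ferromagnetic} at each node, a union bound over the $n$ nodes, and the bound $d \le 1/\kappa^2$ from Lemma~\ref{lem:d-bounded-by-kappa}. Your write-up simply makes the bookkeeping (the $\delta/n$ per-node failure probability and the absorption of the extra $\log n$ into the $T\log n$ term) explicit, which the paper leaves implicit.
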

\begin{proof}
This follows from Lemma~\ref{lem:node-recovery-ferromagnetic} by taking the union bound over the $n$ nodes and recalling from Lemma~\ref{lem:d-bounded-by-kappa} the bound $d \le 1/\kappa^2$.
\end{proof}
\begin{remark}[Input specification]
In the description of the algorithms throughout this paper, we assume we have access to i.i.d. samples from the distribution. However, it is straightforward to verify that the algorithms only depend on the empirical covariance matrix, and can be run given only the empirical covariance matrix in polynomial time.
\end{remark}
\section{Information-theoretic optimal learning of attractive GGMs}\label{sec:info-optimal}
In this section we give an $O(n^d)$ time algorithm for recovering attractive GGMs which matches the information-theoretic lower bounds up to constants, improving the result of the previous section at the cost of computational efficiency. 

\subsection{Noncentral F-statistics}
In the analysis of the $O(n^d)$ time algorithm, we will need to compare empirical variances between predictors supported on very different sets of variables. In comparison, in the analysis of greedy methods we only needed to consider adding or removing a single variable at a time. In order to handle the new setting, we recall the definition of noncentral F-statistics and their connection to fixed design regression.
\begin{defn}
Suppose $Z_1 \sim N(\delta,1)$ and for $j > 1$, $Z_j \sim N(0,1)$ with $Z_1,\ldots,Z_m$ independent. Then we write $\sum_i Z_i \sim \chi^2_m(\delta^2)$ where $\chi^2_m(\delta^2)$ is the \emph{noncentral chi-square distribution} with noncentrality parameter $\delta^2$ and $m$ degrees of freedom.
\end{defn}
\begin{defn}
If $V \sim \chi^2_k(\delta^2)$ and $W \sim \chi^2_m$ is independent of $V$, then we write
\[ \frac{V/k}{W/m} \sim F_{k,m}(\delta^2) \]
where $F_{k,m}(\delta^2)$ is the \emph{noncentral F-distribution} with degrees of freedom $k$ and $m$ and noncentrality parameter $\delta^2$.
\end{defn}
\begin{theorem}[Theorem 14.11 of \cite{keener2011theoretical}]\label{thm:noncentral-f-test}
In the (Gaussian) fixed design regression model (Section~\ref{sec:fixed-design}), let $H$ be a $q$-dimensional subspace of $\mathbb{R}^k$. Define
\[ T := \frac{m - k}{k - q} \frac{\|\mathbbm{Y} - \mathbbm{X} \hat{w}_0\|^2 - \|\mathbbm{Y} - \mathbbm{X} \hat{w}\|^2}{\|\mathbbm Y - \mathbbm{X} \hat{w}\|^2} = \frac{m - k}{k - q} \frac{\|\mathbbm{X} \hat{w} - \mathbbm{X} \hat{w}_0\|^2}{\|\mathbbm Y - \mathbbm{X} \hat{w}\|^2} \]
where $\hat{w}$ is the unrestricted OLS estimator and $\hat{w}_0$ is
the least squares estimator constrained to be inside of subspace $H$. (The second equality holds by the Pythagorean theorem.)
Then $T \sim F_{k - q,m - k}(\gamma)$ where
\[ \gamma := \frac{\min_{w_0 \in H_0} \|\mathbbm{X} (w - w_0)\|^2}{\sigma^2}. \]
More specifically, $\frac{1}{\sigma^2} \|\mathbbm{Y} - \mathbbm{X} \hat{w}\|^2 \sim \chi^2_{m - k}$ and $\frac{1}{\sigma^2} \|\mathbbm{X} \hat{w} - \mathbbm{X} \hat{w}_0\|^2 \sim \chi^2_{k - q}(\gamma)$ and these random variables are independent.
\end{theorem}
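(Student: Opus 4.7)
The plan is to give the standard geometric / orthogonal-projection proof in $\mathbb{R}^m$. Because $\mathbbm{X}$ has full column rank, its column space $V := \mathbbm{X}\mathbb{R}^k$ is a $k$-dimensional subspace of $\mathbb{R}^m$ and $V_0 := \mathbbm{X}H$ is a $q$-dimensional subspace of $V$. Setting $W := V \cap V_0^{\perp}$ gives the orthogonal direct sum $\mathbb{R}^m = V_0 \oplus W \oplus V^{\perp}$ with respective dimensions $q$, $k-q$, $m-k$. The first step is to identify each piece of the $F$-statistic with an orthogonal projection of $\mathbbm{Y}$: the unconstrained OLS fit equals $\mathbbm{X}\hat{w} = P_V \mathbbm{Y}$; the constrained fit satisfies $\mathbbm{X}\hat{w}_0 = P_{V_0}\mathbbm{Y}$, because $\|\mathbbm{Y} - \mathbbm{X}w_0\|^2 = \|P_V\mathbbm{Y} - \mathbbm{X}w_0\|^2 + \|P_{V^{\perp}}\mathbbm{Y}\|^2$ reduces the constrained minimization to projecting $P_V\mathbbm{Y}$ onto $V_0$; the residual is $\mathbbm{Y} - \mathbbm{X}\hat{w} = P_{V^{\perp}}\mathbbm{Y}$; and consequently $\mathbbm{X}\hat{w} - \mathbbm{X}\hat{w}_0 = (I - P_{V_0})P_V \mathbbm{Y} = P_W \mathbbm{Y}$.

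Next I would exploit that $\mathbbm{Y} \sim N(\mathbbm{X}w, \sigma^2 I)$: in any orthonormal basis of $\mathbb{R}^m$ adapted to $V_0 \oplus W \oplus V^{\perp}$, the coordinates of $\mathbbm{Y}$ are $m$ independent Gaussians with common variance $\sigma^2$. Since $\mathbbm{X}w \in V$, its projection onto $V^{\perp}$ vanishes, so $\|P_{V^{\perp}}\mathbbm{Y}\|^2/\sigma^2$ is a sum of $m-k$ squared i.i.d.\ standard normals, giving $\|P_{V^{\perp}}\mathbbm{Y}\|^2/\sigma^2 \sim \chi^2_{m-k}$. The $W$- and $V^{\perp}$-coordinates are disjoint blocks of the same jointly Gaussian vector with diagonal covariance and are therefore independent, which already yields the independence claim in the theorem.

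For the noncentrality parameter, the mean of $P_W \mathbbm{Y}$ is $P_W \mathbbm{X}w = \mathbbm{X}w - P_{V_0}\mathbbm{X}w$ (using $\mathbbm{X}w \in V = V_0 \oplus W$), and
\[ \|P_W \mathbbm{X}w\|^2 = \min_{u \in V_0}\|\mathbbm{X}w - u\|^2 = \min_{w_0 \in H}\|\mathbbm{X}(w - w_0)\|^2 = \sigma^2 \gamma. \]
Writing this out in the adapted orthonormal coordinates expresses $\|P_W \mathbbm{Y}\|^2/\sigma^2$ as a sum of $k-q$ squared independent unit-variance Gaussians whose mean vector has squared norm $\gamma$, which by definition is $\chi^2_{k-q}(\gamma)$. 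Combining the two distributional statements with the independence observed above, $T = \tfrac{m-k}{k-q}\tfrac{\|P_W\mathbbm{Y}\|^2}{\|P_{V^{\perp}}\mathbbm{Y}\|^2}$ is the ratio of a $\chi^2_{k-q}(\gamma)/(k-q)$ over an independent $\chi^2_{m-k}/(m-k)$, which is by definition $F_{k-q,m-k}(\gamma)$. The only step needing real care is the identification $\mathbbm{X}\hat{w} - \mathbbm{X}\hat{w}_0 = P_W\mathbbm{Y}$; everything else is unpacking the definitions of the noncentral $\chi^2$ and $F$ distributions in the conveniently chosen basis.
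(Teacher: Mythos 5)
Your proof is correct. The paper does not prove this statement --- it is imported verbatim as Theorem 14.11 of the cited reference \cite{keener2011theoretical} --- and your argument is precisely the standard geometric one from that source: decompose $\mathbb{R}^m$ as $V_0 \oplus W \oplus V^{\perp}$, identify the three pieces of the statistic with orthogonal projections of the spherical Gaussian $\mathbbm{Y}$, and read off the (non)centralities and independence in an adapted orthonormal basis; all the identifications (in particular $\mathbbm{X}\hat{w} - \mathbbm{X}\hat{w}_0 = P_W\mathbbm{Y}$ and $\|P_W\mathbbm{X}w\|^2 = \sigma^2\gamma$) check out.
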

We also recall a convenient concentration inequality for noncentral $\chi^2$-distributed random variables: 
\begin{lemma}[Lemma 8.1 of \cite{birge2001alternative}]\label{lem:noncentral-chi-squared-concentration}
Suppose that $V \sim \chi_m^2(\delta^2)$. Then
\[ \Pr(V \ge (m + \delta^2) + 2\sqrt{(m + 2\delta^2) t} + 2t) \le e^{-t} \]
and
\[ \Pr(V \le (m + \delta^2) - 2\sqrt{(m + 2\delta^2)t}) \le e^{-t}. \]
\end{lemma}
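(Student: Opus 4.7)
The plan is to prove this concentration inequality via the standard Chernoff/moment generating function argument, using the special structure of noncentral $\chi^2$. First I would use the definition to write $V = Z_1^2 + \cdots + Z_m^2$ where $Z_1 \sim N(\delta,1)$ and $Z_i \sim N(0,1)$ for $i \ge 2$ are independent. A direct Gaussian integration (completing the square) gives $\E[e^{\lambda(Z_1)^2}] = (1-2\lambda)^{-1/2}\exp(\lambda\delta^2/(1-2\lambda))$ for $\lambda < 1/2$, so multiplying factors yields the closed form
\[ \E[e^{\lambda V}] = (1-2\lambda)^{-m/2}\exp\!\left(\tfrac{\lambda \delta^2}{1-2\lambda}\right), \qquad \lambda < 1/2. \]
Writing $\phi(\lambda) := \log \E[e^{\lambda(V - \E[V])}]$ and using $\E[V] = m + \delta^2$, routine algebra gives
\[ \phi(\lambda) = \tfrac{m}{2}\bigl(-\log(1-2\lambda) - 2\lambda\bigr) + \tfrac{2\lambda^2\delta^2}{1-2\lambda}. \]

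For the upper tail, I would apply the elementary inequality $-\log(1-u) - u \le u^2/(2(1-u))$ (valid for $u\in[0,1)$) with $u=2\lambda$, which immediately gives $\phi(\lambda) \le (m+2\delta^2)\lambda^2/(1-2\lambda)$. This is precisely a Bernstein-type bound on the cumulant generating function with ``variance factor'' $v = 2(m+2\delta^2)$ and ``scale factor'' $c = 2$. Plugging into Chernoff's inequality $\Pr(V - \E[V] \ge y) \le \exp(-\lambda y + \phi(\lambda))$ and optimizing in $\lambda \in (0,1/2)$ via the standard Bernstein calculation (e.g., choose $\lambda = \sqrt{t}/(\sqrt{v/2} + c\sqrt{t}\,)$ or use the well-known Legendre transform identity) yields $\Pr(V - \E[V] \ge \sqrt{2vt} + ct) \le e^{-t}$. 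Substituting $v$ and $c$ gives exactly $\sqrt{2vt} = 2\sqrt{(m+2\delta^2)t}$ and $ct = 2t$, matching the stated bound.

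For the lower tail I would apply Chernoff in the opposite direction with parameter $-\lambda$ ($\lambda > 0$). The same algebra gives $\phi(-\lambda) = \tfrac{m}{2}(2\lambda - \log(1+2\lambda)) + 2\lambda^2\delta^2/(1+2\lambda)$, and using $\log(1+u) \ge u - u^2/2$ for $u \ge 0$ together with $1 + 2\lambda \ge 1$ yields the clean sub-Gaussian bound $\phi(-\lambda) \le (m+2\delta^2)\lambda^2$ for all $\lambda > 0$ (with no singularity). Minimizing $-\lambda y + (m+2\delta^2)\lambda^2$ over $\lambda$ gives $\exp(-y^2/(4(m+2\delta^2)))$, and choosing $y = 2\sqrt{(m+2\delta^2)t}$ recovers $\Pr(V - \E[V] \le -2\sqrt{(m+2\delta^2)t}) \le e^{-t}$.

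The main obstacle is getting the constants right: the crux is to identify the two elementary inequalities ($-\log(1-u)-u \le u^2/(2(1-u))$ for the upper tail, $\log(1+u) \ge u - u^2/2$ for the lower tail) that precisely match the desired leading constants. The upper tail is genuinely more delicate because the MGF of $V$ blows up as $\lambda \to 1/2$, which is the source of the extra linear-in-$t$ term $2t$ (i.e.\ the sub-exponential behavior, as opposed to the purely sub-Gaussian behavior visible in the lower tail). Once the correct Bernstein form of $\phi$ is in hand, the rest is routine and the noncentrality parameter $\delta^2$ enters only through the effective variance $m + 2\delta^2 = \tfrac{1}{2}\Var(V)$.
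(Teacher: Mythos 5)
Your proof is correct. Note that the paper does not prove this lemma at all --- it is imported verbatim as Lemma~8.1 of the cited reference (Birg\'e, 2001) --- so there is no internal argument to compare against; your Chernoff/Bernstein derivation is in fact essentially the standard one used in that reference. I checked the details: the MGF formula $\E[e^{\lambda V}] = (1-2\lambda)^{-m/2}\exp(\lambda\delta^2/(1-2\lambda))$ and the centered cumulant $\phi(\lambda) = \tfrac{m}{2}(-\log(1-2\lambda)-2\lambda) + 2\lambda^2\delta^2/(1-2\lambda)$ are right, the inequality $-\log(1-u)-u \le u^2/(2(1-u))$ gives the Bernstein form with $v = 2(m+2\delta^2)$ and $c=2$ so that $\sqrt{2vt}+ct = 2\sqrt{(m+2\delta^2)t}+2t$ as claimed, and the lower-tail bound $\phi(-\lambda) \le (m+2\delta^2)\lambda^2$ yields the sub-Gaussian tail with the correct constant.
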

\subsection{Structure learning by $\ell_0$-constrained least squares}
We perform structure recovery by, for every node $i$, performing several $\ell_0$-constrained regressions and pruning the result.
In the context of learning Gaussian graphical models, some algorithms in a similar spirit referred to as SLICE and DICE were proposed in \cite{misra18} and they proved a sample complexity bound of $O(d/\kappa^2 \log(n))$ for the more sample-efficient method, DICE. We show our estimator \textsc{SearchAndValidate} actually achieves optimal sample complexity $O((1/\kappa^2) \log(n))$ in the setting of attractive GGMs, and always achieves a sample complexity of $O((d/\kappa^2) \log(n))$ which gives a faster algorithm with the same sample complexity as DICE from \cite{misra18}, which has a slower runtime of $O(n^{2d + 1})$. (It matches the runtime guarantee for SLICE in \cite{misra18}, which has a worse sample complexity guarantee.)

In Algorithm~\textsc{SearchAndValidate}, the key step is performing $\ell_0$-constrained regression to predict $X_i$; the loop in step 2 is required only because we do not know a priori the exact degree of node $i$, only an upper bound. With high probability, the support of one of the $w_{d'}$ will equal the exact neighborhood of node $i$, and then a straightforward validation procedure in step 3 (which uses a similar idea to Algorithm~\textsc{DICE} in \cite{misra18}) allows us to identify the correct $w_{d'}$ successfully.
\begin{figure}
\fbox{\begin{minipage}{\textwidth}
\vspace{.1cm}
Algorithm \textsc{SearchAndValidate}(i,d,$\nu$):
\begin{enumerate}
    \item We assume the data has been split into two equally sized sample sets $1$ and $2$. Let $\hat{\E}_1$ and $\hat{\E}_2$ denote the empirical expectation over these two sets and define $\widehat{\Var}_2$ similarly.
    \item For $d'$ in $0$ to $d$:
    \begin{enumerate}
        \item Find $w_{d'}$ minimizing
        \[ \min_{w : w_i = 0, |\supp(w)| \le d'} \hat{\E}_1[(X_i - w_{d'} \cdot X)^2] \]
    \end{enumerate}
    \item For $d'$ in $0$ to $d$ (outer loop):
    \begin{enumerate}
        \item For $d''$ in $0$ to $d$ except $d'$ (inner loop):
        \begin{enumerate}
            \item Let $S_{d',d''} := \supp(w_{d'}) \cup \supp(w_{d''})$.
            \item For $j$ in $\supp(w_{d''}) \setminus \supp(w_{d'})$
            \begin{enumerate}
                \item If $\widehat{\Var}_2(X_i | X_{S_{d',d''} \setminus \{j\}}) - \widehat{\Var}_2(X_i | X_{S_{d',d''}}) > \nu\widehat{\Var}_2(X_i | X_{S_{d',d''}})$, continue to next iteration of outer loop.
            \end{enumerate}
        \end{enumerate}
        \item Return $\supp(w_{d'})$.
    \end{enumerate}
\end{enumerate}
\vspace{.1cm}
\end{minipage}}
\end{figure}
For the purposes of the analysis, for every pair of sets $S_0 \subset S$ not containing $i$ define (as in Theorem~\ref{thm:noncentral-f-test})
\[ T(S_0,S) := \frac{n - |S|}{|S| - |S_0|}\frac{\|\mathbbm{X}_i - \mathbbm{X} \hat{w}_0\|^2 - \|\mathbbm{X}_i - \mathbbm{X} \hat{w}\|^2}{\|\mathbbm{X}_i - \mathbbm{X} \hat{w}\|^2} = \frac{n - |S|}{|S| - |S_0|} \frac{\|\mathbbm{X} \hat{w} - \mathbbm{X} \hat{w}_0\|^2}{\|\mathbbm X_i - \mathbbm{X} \hat{w}\|^2} \]
where $\hat{w}_0$ is the OLS estimator restricted to $\supp(w_0) \subset S_0$ and $\hat{w}$ is the OLS estimator restricted to $\supp(w) \subset S$.

The following Lemma analyzes the key step in the above algorithm; it shows that when $d'$ equals the true degree of node $i$, the true support is returned. The crucial part which requires that the GGM is attractive is the application of Lemma~\ref{lem:kappa-variance-ferromagnetic}, which guarantees that candidate supports which are far away from the true neighborhood perform much worse than the true neighborhood. This is crucial because there are many candidate neighborhoods far away from the true neighborhood, which means we need an improved bound to handle them and overcome the cost of taking the union bound.
\begin{lemma}\label{lem:ferromagnetic-l0-guarantee}
  In a $\kappa$-nondegenerate attractive GGM, if $i$ is a node of degree $d$ then $\ell_0$ constrained regression over vectors with support size at most $d$ returns the true neighborhood of node $i$ with probability at least $1 - \delta$ as long as $m = \Omega(\log(n)/\kappa^2 + \log(2/\delta)/\kappa^2)$.
\end{lemma}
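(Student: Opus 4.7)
The plan is to show that the true neighborhood $S^* = N(i)$ attains strictly smaller empirical loss than any competing support $S$ of size at most $d$. Write $\sigma^2 := 1/\Theta_{ii}$ and $L(S) := \min_{w : \supp(w) \subset S} \|\mathbbm{X}_i - \mathbbm{X} w\|^2$, and fix any alternative $S \ne S^*$ with $|S| \le d$; set $k := |S^* \setminus S| \ge 1$. The key population input is that by the Markov property $\Var(X_i | X_{S^*}) = \sigma^2$, while Lemma~\ref{lem:kappa-variance-ferromagnetic} (which crucially exploits attractiveness through Griffith's inequality) gives $\Var(X_i | X_S) \ge (1 + k\kappa^2)\sigma^2$. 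Thus the population gap grows linearly in the number of missing neighbors $k$.

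Since $\mathbbm{X}_i = \mathbbm{X}_{S^*} w^* + \Xi$ with $\Xi \sim N(0, \sigma^2 I_m)$ independent of $\mathbbm{X}$, Theorem~\ref{thm:noncentral-f-test} yields, conditional on $\mathbbm{X}$, the distributional identities $L(S^*)/\sigma^2 \sim \chi^2_{m - d}$ (centrally, because the true signal lies in $\Span(\mathbbm{X}_{S^*})$) and $L(S)/\sigma^2 \sim \chi^2_{m - |S|}(\gamma_S)$, where $\gamma_S = \min_{v : \supp(v) \subset S} \|\mathbbm{X}(w^* - v)\|^2/\sigma^2$. To convert the population gap into a lower bound on $\gamma_S$, I would invoke Wishart concentration (Lemma~\ref{lem:wishart-concentration}) on the empirical covariance restricted to $S^{**} := S \cup S^*$. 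Since $|S^{**}| \le 2d \le 2/\kappa^2$ by Lemma~\ref{lem:d-bounded-by-kappa} and there are at most $\binom{n}{d}$ such candidates, a union bound over Wishart concentration costs only $m \gtrsim d\log n \le \log(n)/\kappa^2$, which fits within the target budget. This yields $\gamma_S \ge (m/2)(\Var(X_i | X_S) - \sigma^2)/\sigma^2 \ge mk\kappa^2/2$.

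To finish, I would combine the central and noncentral chi-squared concentration inequalities (Lemmas~\ref{lem:chi-squared-concentration} and~\ref{lem:noncentral-chi-squared-concentration}) with a union bound. In expectation $\E L(S) - \E L(S^*) = \sigma^2[(d - |S|) + \gamma_S] \ge \sigma^2 mk\kappa^2/2$, and the fluctuations of each of $L(S)$ and $L(S^*)$ around their means are of order $\sigma^2(\sqrt{mk\kappa^2 \cdot t} + t)$ at confidence $e^{-t}$. Setting $t = C(k\log n + \log(1/\delta))$ and union bounding over the at most $\binom{d}{k}\binom{n-d}{k} \le n^{2k}$ alternatives $S$ with exactly $k$ missing neighbors, the signal $\sigma^2 mk\kappa^2$ dominates the deviation provided $m\kappa^2 = \Omega(\log n + \log(1/\delta))$; summing over $k = 1,\ldots,d$ contributes only constants. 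A standard validation argument then guarantees that the correct $d'$ is chosen in step~3 of the algorithm.

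The main obstacle is the signal-versus-entropy tradeoff: at Hamming distance $k$ from $S^*$ there are roughly $n^k$ competing sets, so a naive union bound using only the minimal population gap $\kappa^2$ (rather than $k\kappa^2$) would produce the suboptimal complexity $m \gtrsim d\log(n)/\kappa^2$. The improvement to $\log(n)/\kappa^2$ is possible only because the $k$-linear growth in Lemma~\ref{lem:kappa-variance-ferromagnetic} matches the $k\log n$ growth in the log-number of candidates at that distance. For general (non-attractive) GGMs one only has the weaker bound $\Var(X_i | X_S) - \sigma^2 \ge \kappa^2 \sigma^2$ (independent of $k$), and this cancellation breaks down, in line with the gap-with-lower-bound discussion following Theorem~\ref{thm:search-and-validate-informal}.
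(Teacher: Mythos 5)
Your overall strategy is the same as the paper's: Wishart concentration on the at most $2d$-dimensional submatrices, the linear-in-$k$ gap from Lemma~\ref{lem:kappa-variance-ferromagnetic}, the noncentrality bound $\gamma_S \gtrsim m k \kappa^2$, and a union bound over the roughly $n^{k}$ candidates at Hamming distance $k$ with $t \asymp k\log n$. Your closing discussion of the signal-versus-entropy tradeoff is also exactly the right intuition and matches the paper's.

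However, there is one genuine gap, and it is precisely at the load-bearing concentration step. You compare $L(S)$ and $L(S^*)$ by concentrating \emph{each one separately} around its mean, and you assert that each fluctuates by $O\bigl(\sigma^2(\sqrt{m k \kappa^2 \, t} + t)\bigr)$. That is false: $L(S^*)/\sigma^2 \sim \chi^2_{m-d}$ has fluctuations of order $\sqrt{(m-d)t} + t \asymp \sqrt{mt}$, and likewise the noncentral $\chi^2_{m-|S|}(\gamma_S)$ contributes $\sqrt{(m + 2\gamma_S)t} \gtrsim \sqrt{mt}$; since $k\kappa^2 \le d\kappa^2 \le 1$, the term $\sqrt{mt}$ dominates your claimed $\sqrt{mk\kappa^2 t}$. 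Comparing the signal $\sigma^2 m k \kappa^2$ against $\sigma^2\sqrt{mt}$ with $t \asymp k\log n$ forces $m \gtrsim \log(n)/(k\kappa^4)$, which is a $1/\kappa^2$ factor worse than the claimed bound at $k=1$. The missing idea is that $L(S)$ and $L(S^*)$ share the same noise realization, so their common high-dimensional residual must be cancelled \emph{before} concentrating. The paper does this by projecting onto $\Span(\mathbbm{X}_{S\cup S^*})$ and applying the Pythagorean theorem, so that
\[
L(S_0) - L(S^*) \;=\; \|\mathbbm{X}\hat w_{S_0} - \mathbbm{X}\hat w_{S}\|^2 - \|\mathbbm{X}\hat w_{S^*} - \mathbbm{X}\hat w_{S}\|^2 ,
\]
where $S = S^*\cup S_0$; both terms now live in subspaces of dimension at most $d - q \le k$ rather than $m$, are distributed as $\chi^2_{d-q}(\gamma)$ and $\chi^2_{d-q}$ respectively by Theorem~\ref{thm:noncentral-f-test}, and hence fluctuate only by $O(\sqrt{(d - q + \gamma)t} + t)$. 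With that replacement the rest of your argument (the bound $\gamma \ge m\kappa^2(d-q)/2$, the choice of $t$, and the union bound over $k$ and over the candidate sets) goes through and recovers the stated $m = \Omega(\log(n)/\kappa^2 + \log(2/\delta)/\kappa^2)$.
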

\begin{proof}
First we consider the randomness over the samples of $X_{\sim i}$, i.e. over $\mathbbm{X}$ with column $i$ removed. By Lemma~\ref{lem:wishart-concentration} and the union bound over all subsets $S$ of $[n] \setminus \{i\}$ with $|S| \le 2d$, it holds with probability at least $1 - \delta/2$ that for all $w$ with $w_i = 0$ and $|\supp(w)| \le 2d$,
\begin{equation}\label{eqn:spectrally-close-small-subsets}
    \frac{1}{2} \E[(w^T X)^2] \le \frac{1}{2} w^T \left(\frac{1}{m} \mathbbm{X}^T \mathbbm{X}\right) w \le \E[(w^T X)^2]
\end{equation}
as long as $m = \Omega(d\log(n) + \log(2/\delta))$. (Recall from Lemma~\ref{lem:d-bounded-by-kappa} that $d \le 1/\kappa^2$, so this holds under the hypothesis of the theorem.) 
We condition on this event and consider the remaining randomness over $\mathbbm{X}_i$.
Let $S^*$ be the set of true neighbors of node $i$ and let $S_0$ be any other subset of size at most $d$. Define $S := S^* \cup S_0$. Since the OLS estimators are defined by projection onto spans of the columns of $\mathbbm{X}$, we can apply the Pythagorean theorem to get
\[  \|\mathbbm{X}_i - \mathbbm{X} \hat{w}_{S^*}\|^2 = \|\mathbbm{X}_i - \mathbbm{X} w_{S}\|^2 + \|\mathbbm{X} \hat{w}_{S^*} - \mathbbm{X} \hat{w}_{S}\|^2 \]
and
\[ \|\mathbbm{X}_i - \mathbbm{X} \hat{w}_{S_0}\|^2 = \|\mathbbm{X}_i - \mathbbm{X} w_{S}\|^2 +\|\mathbbm{X} \hat{w}_{S_0} - \mathbbm{X} \hat{w}_{S}\|^2. \]
Subtracting, we get that
\[  \|\mathbbm{X}_i - \mathbbm{X} \hat{w}_{S_0}\|^2 - \|\mathbbm{X}_{i} - \mathbbm{X} \hat{w}_{S^*}\|^2 =  \|\mathbbm{X} \hat{w}_{S_0} - \mathbbm{X} \hat{w}_{S}\|^2  - \|\mathbbm{X} \hat{w}_{S^*} - \mathbbm{X} \hat{w}_{S}\|^2. \]
To prove the result, it suffices to show with high probability that for any $S_0$ which does not contain $S^*$ that the leftmost term is positive --- then no such $S_0$ can be the minimizer of the $\ell_0$-constrained regression, since $S^*$ corresponds to a feasible point with smaller objective value. We achieve this by showing the right hand side is positive. Observe
\[ \|\mathbbm{X} \hat{w}_{S_0} - \mathbbm{X} \hat{w}_{S}\|^2  - \|\mathbbm{X} \hat{w}_{S^*} - \mathbbm{X} \hat{w}_{S}\|^2 = \frac{d - q}{n - |S|} \|\mathbbm{Y} - \mathbbm{X} \hat{w}_S\|^2 (T(S_0,S) - T(S^*,S)). \]
where $q = |S_0| = |S^*|$
so it suffices to show that $T(S_0,S) - T(S^*,S) \ge 0$. In fact, canceling out denominators, dividing by $\sigma^2$ and rearranging it suffices to show
\[ \frac{1}{\sigma^2} \|\mathbbm{X} \hat{w}_S - \mathbbm{X} \hat{w}_{S_0}\|^2 \ge  \frac{1}{\sigma^2} \|\mathbbm{X} \hat{w}_S - \mathbbm{X} \hat{w}_{S^*}\|^2 \]
where by Theorem~\ref{thm:noncentral-f-test} the left hand side is according to $\chi^2_{d - q}(\gamma)$ with $\gamma := \frac{\min_{\supp(w_0) \subset S} \|\mathbbm{X} (w_0 - w^*)\|^2}{\sigma^2}$ and the right hand side is distributed according to $\chi^2_{d - q}$, where $\sigma^2 := 1/\Theta_{ii}$. Observe by \eqref{eqn:spectrally-close-small-subsets} that
\begin{equation}\label{eqn:l0-gamma-lb}
\gamma \ge \frac{m}{2} \frac{\min_{\supp(w_0) \subset S} \E[(X^T (w_0 - w^*))^2]}{\sigma^2} = \frac{m}{2} \frac{\min_{\supp(w_0) \subset S} \Var(X^T (w_0 - w^*))]}{\sigma^2} \ge \frac{m \kappa^2 (d - q)}{2} \end{equation}
where the last inequality is by Lemma~\ref{lem:kappa-variance-ferromagnetic}, since $w_0$ is supported on $S_0$ which is missing $d - q$ of the neighbors of node $i$. Applying Lemma~\ref{lem:noncentral-chi-squared-concentration}
\[ \Pr(\frac{1}{\sigma^2} \|\mathbbm{X} \hat{w}_S - \mathbbm{X} \hat{w}_{S_0}\|^2 \le (d - q + \gamma) - 2\sqrt{(d - q + 2\gamma)t}) \le e^{-t} \]
and applying Lemma~\ref{lem:chi-squared-concentration}
\[ \Pr(\frac{1}{\sigma^2} \|\mathbbm{X} \hat{w}_S - \mathbbm{X} \hat{w}_{S^*}\|^2  \ge (d - q) + 2\sqrt{(d - q)t} + 2t) \le e^{-t}. \]
Letting $t = \log(4dn^{d - q}/\delta)$, and taking the union bound over the at most $n^{d - q}$ possible values of $S_0$ and then over the at most $d$ possible values of $q$, we find that with probability at least $1 - \delta/2$ for all possible $S_0$ and $q$ that
\[ \frac{1}{\sigma^2} \|\mathbbm{X} \hat{w}_S - \mathbbm{X} \hat{w}_{S_0}\|^2 - \frac{1}{\sigma^2} \|\mathbbm{X} \hat{w}_S - \mathbbm{X} \hat{w}_{S^*}\|^2 \ge \gamma - 2\sqrt{(d - q + 2\gamma)t} - 2\sqrt{(d - q)t} \ge \gamma - 4\sqrt{(d - q + 2\gamma)t}. \]
Finally, we see this is nonnegative as long as $\gamma = \Omega(t) = \Omega((d - q) \log(n) + \log(2/\delta))$, which by \eqref{eqn:l0-gamma-lb} holds as long as $m = \Omega(\frac{\log(n) + \log(2/\delta)}{\kappa^2})$. Therefore the desired result holds with total probability at least $1 - \delta$, completing the proof.
\end{proof}
\begin{theorem}\label{thm:search-and-validate}
  Fix $\delta > 0$.
  In a $\kappa$-nondegenerate attractive GGM, as long as $m = \Omega((1/\kappa^2)\log(n) + \log(2/\delta)/\kappa^2)$ it holds with probability at least $1 - \delta$ that Algorithm~\textsc{SearchAndValidate} with $\nu = \kappa^2/2$ returns the true neighborhood of every node $i$.
\end{theorem}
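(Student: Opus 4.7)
My plan is to exploit the sample splitting that is already built into Algorithm~\textsc{SearchAndValidate}: the first half of the samples feeds the $\ell_0$ search in step~2, while the second half is used only for the validation step in step~3. After conditioning on sample set~1, the random supports $\supp(w_{d'})$ (and hence all the sets $S_{d',d''}$) become deterministic, reducing step~3 to a controlled union bound over $O(d^2)$ sets per node rather than over all $n^{O(d)}$ possible supports.

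\emph{Step 1: the search phase succeeds at $d' = d^*_i$.} Let $d^*_i := |N(i)|$ denote the true degree of node $i$. I apply Lemma~\ref{lem:ferromagnetic-l0-guarantee} to sample set~1 with failure probability $\delta/(2n)$ and union bound over the $n$ nodes, concluding that $\supp(w_{d^*_i}) = N(i)$ for every $i$ provided $m/2 = \Omega((\log n + \log(1/\delta))/\kappa^2)$. I then condition on this event for the rest of the argument.

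\emph{Step 2: accurate empirical $\gamma$ on sample set~2.} For each quadruple $(i,d',d'',j)$ with $j \in \supp(w_{d''}) \setminus \supp(w_{d'})$, let
\[ \gamma = \gamma(i,d',d'',j) := \frac{\Var(X_i \mid X_{S_{d',d''} \setminus \{j\}}) - \Var(X_i \mid X_{S_{d',d''}})}{\Var(X_i \mid X_{S_{d',d''}})}. \]
By Lemma~\ref{lem:t-equals-f}, the algorithm's empirical ratio differs from $|\hat w_j|^2/(\hat\sigma^2 (\hat\Sigma^{-1})_{jj})$ by only a $(1 - O(|S|/m))$ factor, so Lemma~\ref{lem:estimating-variance-decrement} applied on sample set~2 gives
\[ \left|\sqrt{\gamma_{\mathrm{emp}}} - \sqrt{\gamma}\right| \le \sqrt{\gamma/64} + O\!\left(\sqrt{\tfrac{\log(n) + \log(1/\delta)}{m}}\right). \]
Union-bounding over the $O(nd^3) = O(n/\kappa^6)$ relevant quadruples and requiring $m = \Omega((\log n + \log(1/\delta))/\kappa^2)$ gives this bound simultaneously for every quadruple with probability at least $1 - \delta/2$; in particular, the empirical ratio is at most $\kappa^2/16$ whenever $\gamma = 0$, and at least $(3/4)^2 \kappa^2 > \nu = \kappa^2/2$ whenever $\gamma \ge \kappa^2$.

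\emph{Step 3: case analysis on $d'$ in the outer loop.} If $d' = d^*_i$, then $\supp(w_{d'}) = N(i)$ by step~1, so every tested $j \in \supp(w_{d''}) \setminus N(i)$ is a non-neighbor. Both $S_{d',d''}$ and $S_{d',d''}\setminus\{j\}$ then contain $N(i)$, so by the Markov property both conditional variances equal $1/\Theta_{ii}$ and $\gamma = 0$; hence no iteration triggers ``continue'' and the loop returns $\supp(w_{d^*_i}) = N(i)$. If instead $d' < d^*_i$, pick $d'' = d^*_i$ so that $\supp(w_{d''}) = N(i)$; since $|\supp(w_{d'})| \le d' < d^*_i$, some $j \in N(i) \setminus \supp(w_{d'})$ exists. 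For this $j$, $S_{d',d''} \supseteq N(i)$ gives $\Var(X_i \mid X_{S_{d',d''}}) = 1/\Theta_{ii}$, while $S_{d',d''} \setminus \{j\}$ is missing the neighbor $j$, so Lemma~\ref{lem:kappa-variance} forces $\Var(X_i \mid X_{S_{d',d''} \setminus \{j\}}) \ge (1 + \kappa^2)/\Theta_{ii}$ and hence $\gamma \ge \kappa^2$. Step~2 then triggers ``continue'', so $d'$ is rejected. Therefore the first $d'$ whose validation passes is $d^*_i$, and the algorithm returns $N(i)$ for every node.

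\emph{Main obstacle.} The delicate part is step~2: the separation between ``accept'' ($\gamma = 0$) and ``reject'' ($\gamma \ge \kappa^2$) is only $\Theta(\kappa^2)$, so I need additive error $o(\kappa^2)$ on the empirical $\gamma$ using only $O(\log(n)/\kappa^2)$ samples. This is exactly the content of Lemma~\ref{lem:estimating-variance-decrement}, whose relative-error form ($\sqrt{\gamma/64}$) is harmless when $\gamma = 0$ or $\gamma = \Theta(\kappa^2)$ but is essential for avoiding an extra factor of $1/\kappa^2$. Sample splitting then lets me avoid the $n^{O(d)}$ union bound that would otherwise destroy the $d$-free sample complexity.
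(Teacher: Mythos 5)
Your proposal is correct and follows essentially the same route as the paper's proof: Lemma~\ref{lem:ferromagnetic-l0-guarantee} plus a union bound over nodes handles the search phase on sample set~1, and the validation phase is analyzed on sample set~2 via Lemma~\ref{lem:estimating-variance-decrement}, Lemma~\ref{lem:t-equals-f}, and Lemma~\ref{lem:kappa-variance}, with the same case split on $d'$ versus the true degree and the same use of $d \le 1/\kappa^2$ to absorb the degree dependence. Your write-up is somewhat more explicit about the quadruple-level union bound and the threshold arithmetic, but the argument is the paper's.
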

\begin{proof}
By applying Lemma~\ref{lem:ferromagnetic-l0-guarantee} and taking the union bound over nodes $i$, we know that as long as $m = \Omega((1/\kappa^2)\log(n) + \log(2/\delta)/\kappa^2)$ then with probability at least $1 - \delta/2$ for every node $i$, for $d'$ equal to the true degree of node $i$ that $w_{d'}$ returned in step 2 of Algorithm~\textsc{SearchAndValidate} is supported on exactly the true neighborhood of node $i$. 

Furthermore, conditioned on the previous event (which only involves sample set 1), it holds with probability at least $1 - \delta/2$ by taking the union bound over the possible values of $d',d''$ that (similar to the pruning argument used in analysis of Algorithm~\textsc{GreedyAndPrune}):
\begin{enumerate}
    \item in step 3(a).ii, for every $d'$ less than the true degree of node $i$
    and for $d''$ equal to the true degree of node $i$ that the outer loop continues to the next step by applying Lemma~\ref{lem:estimating-variance-decrement}, Lemma~\ref{lem:t-equals-f}, and Lemma~\ref{lem:kappa-variance} and considering any $j$ in the true neighborhood and missing from the support of $w_{d'}$.
    \item In step 3 when $d'$ equals the true degree of node $i$, step 3(b) is reached and the true support of node $i$ is returned by applying Lemma~\ref{lem:estimating-variance-decrement} and Lemma~\ref{lem:t-equals-f}.
\end{enumerate} 
as long as $m = \Omega((d + 1/\kappa^2)\log(n) + \log(2/\delta)/\kappa^2)$. Using that $d \le 1/\kappa^2$ by Lemma~\ref{lem:d-bounded-by-kappa}, we see the requirement on $m$ holds and as desired, the algorithm succeeds with total probability at least $1 - \delta$.
\end{proof}
A simplified argument in the general (non-attractive) case, using the weaker bound from Lemma~\ref{lem:kappa-variance} instead of Lemma~\ref{lem:kappa-variance-ferromagnetic}, yields the following result in the general case.
\begin{theorem} 
  Fix $\delta > 0$.
  In a $\kappa$-nondegenerate (not necessarily attractive) GGM with maximum degree $d$, as long as $m = \Omega((d/\kappa^2)\log(n) + \log(2/\delta)/\kappa^2)$ it holds with probability at least $1 - \delta$ that Algorithm~\textsc{SearchAndValidate} with $\nu = \kappa^2/2$ returns the true neighborhood of every node $i$.
\end{theorem}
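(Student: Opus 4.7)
The plan is to adapt the proof of Theorem~\ref{thm:search-and-validate} for the non-attractive case. The algorithm \textsc{SearchAndValidate} is the same; only the analysis of the $\ell_0$-constrained regression step (Lemma~\ref{lem:ferromagnetic-l0-guarantee}) needs to be revisited, since that lemma was the only place attractiveness was used in an essential way. The validation step already works for arbitrary $\kappa$-nondegenerate GGMs and succeeds with probability at least $1 - \delta/2$ under $m = \Omega((d+1/\kappa^2)\log n + \log(2/\delta)/\kappa^2)$ by the same argument as before, using Lemma~\ref{lem:estimating-variance-decrement}, Lemma~\ref{lem:t-equals-f}, and Lemma~\ref{lem:kappa-variance}.

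The key step therefore is a general-case analog of Lemma~\ref{lem:ferromagnetic-l0-guarantee}: for any node $i$ of degree at most $d$ in a $\kappa$-nondegenerate GGM, $\ell_0$-constrained regression of $X_i$ against subsets of size at most $d$ returns the true neighborhood $S^*$ with probability at least $1 - \delta/2$, provided $m = \Omega(d\log(n)/\kappa^2 + \log(2/\delta)/\kappa^2)$. I would follow the same outline as in the attractive case. First, condition on the event \eqref{eqn:spectrally-close-small-subsets} from Lemma~\ref{lem:wishart-concentration}, which holds uniformly over subsets of size at most $2d$ once $m = \Omega(d\log n + \log(2/\delta))$. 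Then, for any competing support $S_0 \ne S^*$ with $|S_0| = q \le d$, set $S := S_0 \cup S^*$ and use the Pythagorean decomposition exactly as before to reduce the comparison $\|\mathbbm{X}_i - \mathbbm{X}\hat w_{S_0}\|^2 > \|\mathbbm{X}_i - \mathbbm{X}\hat w_{S^*}\|^2$ to showing $\|\mathbbm{X}\hat w_S - \mathbbm{X}\hat w_{S_0}\|^2/\sigma^2 > \|\mathbbm{X}\hat w_S - \mathbbm{X}\hat w_{S^*}\|^2/\sigma^2$, where by Theorem~\ref{thm:noncentral-f-test} the left-hand side is $\chi^2_{d-q}(\gamma)$ and the right-hand side is $\chi^2_{d-q}$, with $\gamma = \min_{\supp(w_0)\subset S_0}\|\mathbbm{X}(w_0 - w^*)\|^2/\sigma^2$.

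The one substantive change is in lower-bounding $\gamma$. In the attractive case, Lemma~\ref{lem:kappa-variance-ferromagnetic} gives the strong bound $\gamma \ge m\kappa^2(d-q)/2$, which scales with the number of missing neighbors and exactly offsets the $n^{d-q}$ union bound over candidate $S_0$'s. In the general case we only have Lemma~\ref{lem:kappa-variance}, which tells us that missing any nonempty subset of neighbors produces a conditional variance gap of at least $\kappa^2/\Theta_{ii}$, yielding the weaker bound $\gamma \ge m\kappa^2/2$ (independent of $d-q$). Plugging this into the same noncentral/central chi-squared concentration step (Lemma~\ref{lem:noncentral-chi-squared-concentration} and Lemma~\ref{lem:chi-squared-concentration}) with $t = \log(4dn^{d-q}/\delta)$ and union-bounding over $S_0$ and $q$, the condition $\gamma \gtrsim (d-q)\log n + \log(d/\delta)$ now requires $m\kappa^2 \gtrsim d\log n + \log(d/\delta)$, explaining the extra factor of $d$ in the sample complexity. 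This will be the main obstacle; no new structural ideas are needed beyond observing that the attractive-case argument's savings came entirely from the $d-q$ factor in Lemma~\ref{lem:kappa-variance-ferromagnetic}.

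Combining the two stages by a union bound and invoking the validation analysis from the proof of Theorem~\ref{thm:search-and-validate} verbatim (since it never used attractiveness) gives the claimed bound $m = \Omega((d/\kappa^2)\log n + \log(2/\delta)/\kappa^2)$, which dominates both the $\ell_0$-regression requirement and the validation requirement. Finally, taking one more union bound over the $n$ choices of node $i$ and setting $\nu = \kappa^2/2$ yields the theorem.
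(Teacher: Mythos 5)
Your proposal is correct and follows essentially the same route as the paper: the paper derives this theorem by rerunning the proof of Lemma~\ref{lem:ferromagnetic-l0-guarantee} and Theorem~\ref{thm:search-and-validate} with the weaker bound of Lemma~\ref{lem:kappa-variance} in place of Lemma~\ref{lem:kappa-variance-ferromagnetic}, so that $\gamma \ge m\kappa^2/2$ no longer carries the $(d-q)$ factor and the union bound over the $n^{d-q}$ candidate supports forces the extra factor of $d$ in the sample complexity. Your identification of where attractiveness was used, and your observation that the validation stage already relies only on the general Lemma~\ref{lem:kappa-variance}, both match the paper's (terse) argument.
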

\section{Hybrid $\ell_1$ regression guarantees}
In the next section, we will discuss algorithms for regression and structure learning in general walk-summable models. Since (as we will see) the conditional variance is not supermodular in these models, we need some fundamentally new tools to analyze this setting. It turns out that we will need to analyze a variant of $\ell_1$-constrained least squares   regression, which we do in this section as preparation.
\begin{defn}
We define the \emph{hybrid $\ell_1$-regression model} to be given by
\[ Y = \langle w^*, X - \E[X | Z] \rangle + a^* Z + \xi \]
where $\|w\|_1 \le W$ and conditioned on $Z$, $X - \E[X | Z] \sim N(0,\Sigma)$ with $\Sigma : n \times n$, $\Sigma_{ii} \le R^2$ for all $i$, $\E Z^2 = 1$ (w.l.o.g.), and $\E \xi^2 = \sigma^2$ with the noise $\xi$ independent of $X,Z$.
\end{defn}
The corresponding function class is
\[ \mathcal{F} := \{(x,z) \mapsto \langle w, x - \E[X | Z = z] \rangle + a z : \|w\|_1 \le W\} = \{(x,z) \mapsto \langle w, x \rangle + a' z : \|w\|_1 \le W \}.  \]
and the \emph{Empirical Risk Minimizer} (ERM) is given by taking the minimizer of
\[ \min_{\|w\|_1 \le W, a'} \hat{\E}[(Y - \langle w, X \rangle - a' Z)^2]. \]
As mentioned in the introduction, it will be crucial in the analysis to look at the parameterization with $a$ instead of $a'$ even though algorithmically the ERM will be computed using the variable $a'$ (as the change of basis given by subtracting off the conditional expectations is unknown and could only be approximated from data).
\subsection{Guarantees for Empirical Risk Minimization (ERM)}
There is a vast literature on generalization bounds for empirical risk minimization (and natural variants) using tools such as (local) Rademacher complexity, stability, etc. (see e.g. \cite{bartlett2005local,srebro2010smoothness,shalev2014understanding} and many related references); however, many of these methods are not well-optimized for our setting because the noise and covariates are drawn from unbounded distributions and the squared loss is not uniformly Lipschitz (see the discussion in \cite{mendelson2014learning}). Fortunately, the framework developed in \cite{mendelson2014learning} avoids these issues and we are able to use it directly to give a good bound on the excess risk of the empirical risk minimizer. 
\subsubsection{Background: Learning without Concentration Framework}
We recall the main result of \cite{mendelson2014learning}. Let $\mathcal{F}$ be a class of (measurable) functions. Let $X,Y$ be arbitrary random variables, suppose that $f^*$ is a minimizer of $\E[(Y - f(X))^2]$ over $f \in \mathcal{F}$ (which we assume exists) and define $\xi := Y - f^*(X)$. Let $\|f\|_{L_2} = \sqrt{\E[f^2]}$ and let $D_2(f)$ be the $L_2$ ball of radius $1$ around $f$, i.e. $D_2(f) = \{g : \E[(g - f)^2] = 1 \}$.
The following two quantities, defined by fixed point equations, appear in the generalization bound: the intrinsic parameter (which does not depend on the noise model)
\[ \beta^*_m(\gamma) = \inf \left\{r > 0 : \E \sup_{f \in \mathcal{F} \cap r D_{f^*}}\left|\frac{1}{\sqrt{m}} \sum_{i = 1}^m \epsilon_i (f - f^*)(X_i)\right| \le \gamma r \sqrt{m}\right\} \]
and the noise-sensitive parameter
\[ \alpha^*_m(\gamma,\delta) = \inf \left\{s > 0 : \Pr\left(\sup_{f \in \mathcal{F} \cap s D_{f^*}} \left|\frac{1}{\sqrt{m}} \sum_{i = 1}^m \epsilon_i \xi_i (f - f^*)(X_i)\right| \le \gamma s^2 \sqrt{m}\right) \ge 1 - \delta \right\}.\]
\begin{theorem}[Theorem 3.1, \cite{mendelson2014learning}]\label{thm:learning-without-concentration}
  Suppose $\mathcal{F}$ is a closed, convex class of functions and $f^*,X,Y,\alpha^*,\beta^*$ are defined as above. Let $\tau > 0$, define
  \[ q := \inf_{f \in \mathcal{F} - \mathcal{F}} \Pr(|f| \ge 2 \tau \|f\|_{L_2}) \]
  and assume that $q > 0$ (this is called the \emph{small-ball condition}).
  Then for any $\gamma < \tau^2 q/16$ and for every $\delta > 0$ it holds that
  for any $\hat{f}$ which is an empirical risk minimizer for i.i.d. samples $\{(X^{(i)}, Y^{(i)})\}_{i = 1}^m$,
  \[ \|\hat{f} - f^*\|_{L_2} \le 2 \max\left\{ \alpha^*_m(\gamma,\delta/4), \beta^*_m(\tau q/16) \right\} \]
  with probability at least $1 - \delta - e^{-mq/2}$.
\end{theorem}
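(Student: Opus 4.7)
The plan is to use the ``small-ball method'' of Koltchinskii--Mendelson, whose whole point is to sidestep uniform concentration (and hence the usual boundedness/sub-Gaussian hypotheses) on $\mathcal{F}$. The starting point is a one-line consequence of convexity of $\mathcal{F}$ and first-order optimality of the ERM: for every $f \in \mathcal{F}$,
\[
\frac{1}{m}\sum_{i=1}^m (\hat f(X_i) - f(X_i))(\hat f(X_i) - Y_i) \le 0,
\]
and specializing to $f = f^*$ together with $Y_i = f^*(X_i) + \xi_i$ yields the fundamental excess-risk inequality
\[
\frac{1}{m}\sum_{i=1}^m (\hat f(X_i) - f^*(X_i))^2 \;\le\; \frac{2}{m}\sum_{i=1}^m \xi_i (\hat f(X_i) - f^*(X_i)).
\]
The strategy is to show that if $\|\hat f - f^*\|_{L_2}$ exceeded $2\max(\alpha^*_m, \beta^*_m)$, a lower bound on the left would dominate an upper bound on the right, a contradiction.

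First I would lower-bound the quadratic term using the small-ball trick. Writing $h := f - f^* \in \mathcal{F} - \mathcal{F}$ and $r := \|h\|_{L_2}$, the pointwise inequality
\[
h(X_i)^2 \;\ge\; (2\tau r)^2\, \mathbf{1}\!\left\{|h(X_i)| \ge 2\tau \|h\|_{L_2}\right\}
\]
reduces the job to lower-bounding the empirical mean of the indicator uniformly over the sphere $\{\|h\|_{L_2} = r\}$. The small-ball hypothesis supplies a population mean $\ge q$, and to transfer this to the empirical mean I would symmetrize and apply Talagrand contraction (which is clean because the indicator is $1$-bounded and $1/(2\tau r)$-Lipschitz after a standard regularization). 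The resulting Rademacher complexity of $\{h : \|h\|_{L_2} \le r\}$ is precisely what $\beta^*_m(\tau q/16)$ is engineered to control, so once $r \ge \beta^*_m(\tau q/16)$ the empirical indicator mean stays $\ge q/2$. This gives the uniform lower bound
\[
\frac{1}{m}\sum_i h(X_i)^2 \;\gtrsim\; \tau^2 q\, r^2
\]
on $\{h \in \mathcal{F} - \mathcal{F} : \|h\|_{L_2} \ge r\}$ with probability at least $1 - e^{-mq/2}$, the exponential coming from a bounded-difference (or Bernstein) concentration on the indicator sum.

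For the multiplier term, the fixed-point definition of $\alpha^*_m(\gamma, \delta/4)$ directly supplies, with probability $\ge 1 - \delta/4$,
\[
\Bigl|\frac{1}{m}\sum_i \xi_i h(X_i)\Bigr| \;\le\; \gamma \|h\|_{L_2}^2
\]
at scale $s := \alpha^*_m(\gamma, \delta/4)$; a standard peeling argument (doubling the radius at each level) extends this uniformly to all $h$ with $\|h\|_{L_2} \ge s$, where the convexity of $\mathcal{F}$ is used so that rescaling towards $f^*$ keeps us inside the class. Combining, if $\|\hat f - f^*\|_{L_2} > 2\max(\alpha^*_m(\gamma, \delta/4), \beta^*_m(\tau q/16))$, then the quadratic lower bound $\gtrsim \tau^2 q r^2$ strictly exceeds twice the multiplier upper bound $\le 2\gamma r^2$ provided $\gamma < \tau^2 q/16$, contradicting the ERM inequality. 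Hence $\|\hat f - f^*\|_{L_2}$ lies below the threshold with the stated probability $1 - \delta - e^{-mq/2}$.

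The main obstacle is the small-ball lower bound on the quadratic term. Unlike classical analyses where a uniform Bernstein-type inequality on $\{h^2 : h \in \mathcal{F} - \mathcal{F}\}$ would suffice, here the squared class is unbounded and direct symmetrization/contraction against $x \mapsto x^2$ fails; truncating to the indicator $\mathbf{1}\{|h| \ge 2\tau\|h\|_{L_2}\}$ is the key maneuver that brings symmetrization into play, and the entire fixed-point formalism of $\beta^*_m$ is tailored precisely to close the loop on this indicator-Rademacher process. The multiplier bound and the optimality-inequality step are comparatively routine once this machinery is in place.
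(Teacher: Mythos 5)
This statement is not proved in the paper at all: it is quoted verbatim as Theorem 3.1 of \cite{mendelson2014learning} and used as a black box in the analysis of Theorem~\ref{thm:hybrid-erm-bound}, so there is no internal proof to compare against. Your sketch is a faithful reconstruction of Mendelson's own small-ball argument: the variational/basic inequality for the ERM, the truncation of $h^2$ to a bounded indicator so that symmetrization and contraction apply, the identification of the resulting Rademacher process with the fixed point $\beta^*_m(\tau q/16)$, the control of the multiplier process by $\alpha^*_m(\gamma,\delta/4)$, and the final contradiction under $\gamma < \tau^2 q/16$. I see no conceptual gap.

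Two places where your constants and bookkeeping would need repair in a full write-up, neither of which is a real obstruction. First, the thresholds in the truncation step are mismatched: the small-ball condition gives $\Pr(|h| \ge 2\tau\|h\|_{L_2}) \ge q$, so the Lipschitz ramp must be sandwiched as $\bone\{|u| \ge \tau r\} \ge \psi(u) \ge \bone\{|u| \ge 2\tau r\}$, and the pointwise lower bound one actually uses is $h(X_i)^2 \ge (\tau r)^2\,\bone\{|h(X_i)| \ge \tau r\}$ (threshold $\tau r$, not $2\tau r$); as written, lower-bounding the empirical frequency of the event $\{|h| \ge 2\tau r\}$ by $q$ minus a contraction term would require a small-ball condition at a level strictly above $2\tau$. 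Second, $\alpha^*_m$ is defined via the \emph{symmetrized} multiplier process $\frac{1}{\sqrt m}\sum_i \epsilon_i \xi_i (f-f^*)(X_i)$, so passing to the unsymmetrized sum $\frac{1}{m}\sum_i \xi_i h(X_i)$ that appears in the basic inequality requires a desymmetrization step (this is where the $\delta/4$ comes from); your sketch treats the fixed-point definition as applying directly. Also, Mendelson extends from the sphere $\|h\|_{L_2}=s$ to $\|h\|_{L_2}\ge s$ by star-shapedness of $\mathcal{F}-f^*$ (a consequence of convexity) rather than by peeling, though peeling would also work.
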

\subsubsection{ERM Risk Bound}
We return to the specific setting of hybrid $\ell_1$-constrained regression and prove our desired bound. 
\begin{theorem}\label{thm:hybrid-erm-bound}
  As long as $m = \Omega(\log(n/\delta))$, if $\hat{w},\hat{a}'$ is the empirical risk minimizer for hybrid L1 regression from $m$ i.i.d. samples then
  \[ \E[(\E[Y | X,Z] - \langle \hat{w}, X \rangle - \hat{a}'Z)^2] = O\left(RW \sigma \sqrt{\frac{ \log(2n/\delta)}{m}} +  \frac{\sigma^2 \log(4/\delta)}{m} + \frac{R^2 W^2 \log(n)}{m} \right) \]
  with probability at least $1 - \delta$.
\end{theorem}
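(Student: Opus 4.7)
The plan is to apply Mendelson's learning-without-concentration framework (Theorem~\ref{thm:learning-without-concentration}). The essential structural observation, already present in the setup, is that the function class $\mathcal{F}$ admits two equivalent descriptions: the observable one in which ERM is computed (with $a'$), and the ``centered'' one in which $\tilde{X} := X - \E[X|Z]$ appears (with $a$). In the centered parameterization, the assumption that $X - \E[X|Z] \sim N(0,\Sigma)$ conditionally on $Z$ forces $\tilde{X}$ to be \emph{independent} of $Z$. Since ERM is invariant under the reparameterization, we may analyze it in the centered coordinates even though the algorithm cannot. With $f^*(x,z) = \langle w^*, x - \E[X|Z=z]\rangle + a^* z$, every $g = f - f^* \in \mathcal{F} - \mathcal{F}$ has the form $g(X,Z) = \langle w, \tilde{X}\rangle + aZ$ with $\|w\|_1 \le 2W$, and $\|g\|_{L_2}^2 = w^T\Sigma w + a^2$ by independence.

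\textbf{Small-ball and intrinsic complexity $\beta^*_m$.} Conditional on $Z$, $g$ is Gaussian with variance $w^T\Sigma w$, so (with $Z$ Gaussian, as in the walk-summable GGM application, to handle the $w=0$ case) the small-ball condition holds with $\tau, q = \Omega(1)$. For the intrinsic fixed point, the decoupling yields
\begin{align*}
\E \sup_{g \in (\mathcal{F}-f^*) \cap rD_{f^*}} \left|\frac{1}{\sqrt m}\sum_i \epsilon_i g(X^{(i)}, Z^{(i)})\right|
&\le 2W\,\E\left\|\frac{1}{\sqrt m}\sum_i \epsilon_i \tilde X^{(i)}\right\|_\infty + r\,\E\left|\frac{1}{\sqrt m}\sum_i \epsilon_i Z^{(i)}\right| \\
&= O\!\left(WR\sqrt{\log n}\right) + O(r),
\end{align*}
using that each coordinate of the first empirical process is sub-Gaussian with parameter $R$ and $\E Z^2 = 1$. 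Solving $\gamma r\sqrt m = O(WR\sqrt{\log n} + r)$ gives $(\beta^*_m)^2 = O(W^2R^2\log(n)/m)$, producing the last term of the target bound.

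\textbf{Noise-sensitive complexity $\alpha^*_m$.} Using independence of $\xi_i$ from $(\tilde X^{(i)}, Z^{(i)})$, the multiplier process splits as
\[
\sup_{g \in (\mathcal{F}-f^*)\cap sD_{f^*}} \left|\frac{1}{\sqrt m}\sum_i \epsilon_i \xi_i g(X^{(i)},Z^{(i)})\right| \le 2W\left\|\frac{1}{\sqrt m}\sum_i \epsilon_i \xi_i \tilde X^{(i)}\right\|_\infty + s\left|\frac{1}{\sqrt m}\sum_i \epsilon_i \xi_i Z^{(i)}\right|.
\]
Each coordinate $\xi_i \tilde X^{(i),j}$ is sub-exponential with parameter $O(\sigma R)$, so Bernstein's inequality plus a union bound over $n$ coordinates controls the first piece by $O(WR\sigma\sqrt{\log(n/\delta)})$ with probability at least $1-\delta/2$; the scalar second piece is $s\cdot O(\sigma\sqrt{\log(1/\delta)})$ with the same probability. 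Solving the quadratic $\gamma s^2 \sqrt m \ge WR\sigma\sqrt{\log(n/\delta)} + s\,\sigma\sqrt{\log(1/\delta)}$ in $s$ yields $(\alpha^*_m)^2 = O\!\left(WR\sigma\sqrt{\log(n/\delta)/m} + \sigma^2\log(1/\delta)/m\right)$, which accounts for the remaining two terms. Plugging these two fixed points into Theorem~\ref{thm:learning-without-concentration} with $\gamma$ a small absolute constant and observing that $\|\hat f - f^*\|_{L_2}^2 = \E[(\E[Y|X,Z] - \hat f(X,Z))^2]$ (by independence of $\xi$ from $(X,Z)$) concludes the proof.

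\textbf{Main obstacle.} The covariates, the noise, and their products are all unbounded, so standard Rademacher/contraction arguments cannot be applied directly and the squared loss is not uniformly Lipschitz. The reparameterization is precisely what exposes the clean independence between $\tilde X$ and $Z$ that lets us split each empirical process into an $\ell_\infty$-controlled Lasso piece and a one-dimensional unconstrained piece, so that coordinatewise sub-Gaussian / sub-exponential (Bernstein-type) concentration can be invoked with the correct scaling in $(R,W,\sigma)$. The learning-without-concentration framework then converts the two fixed points into a sharp excess-risk bound without demanding any uniform boundedness of the loss.
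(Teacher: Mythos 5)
Your proposal is correct and follows essentially the same route as the paper's proof: the reparameterization into centered coordinates $\tilde{X} = X - \E[X|Z]$, the application of Theorem~\ref{thm:learning-without-concentration} with the small-ball condition verified via Gaussianity of $f_1 - f_2$, the H\"older split of both empirical processes into a $2W$-times-$\ell_\infty$ Lasso piece and a scalar $|a-a^*|\le r$ piece, Gaussian maxima for $\beta^*_m$, and Bernstein for the sub-exponential products in $\alpha^*_m$. Your explicit remarks that $\tilde X \perp Z$ gives $\|g\|_{L_2}^2 = w^T\Sigma w + a^2$ and that the $w=0$ case of the small-ball condition uses Gaussianity of $Z$ are slightly more careful than the paper's wording, but the argument is the same.
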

\begin{proof}
We first deal with the small-ball condition. Let $\tau = 1/2$.
Observe that for any $f_1,f_2 \in \mathcal{F}$ that $f_1(X,Z) - f_2(X,Z)$ has a univariate Gaussian distribution, therefore
\[ q := \Pr(|f| \ge 2 \tau \|f\|_{L_2}) = 1 - \frac{1}{\sqrt{2\pi}} \int_{-2\tau}^{2\tau} e^{-x^2/2} dx \ge 1/4. \]
We take $\gamma = 1/300 < \tau^2 q/32$.  

We now bound $\beta^*$. We have
\begin{align*} 
\E &\sup_{f \in \mathcal{F} \cap r D_{f^*}}\left|\frac{1}{\sqrt{m}} \sum_{i = 1}^m \epsilon_i (f - f^*)(X_i)\right| \\
&= \E \sup_{f \in \mathcal{F} \cap r D_{f^*}}\left|\frac{1}{\sqrt{m}} \sum_{i = 1}^m \epsilon_i (\langle w - w^*, X_i - \E[X_i | Z_i] \rangle + (a - a^*)Z) \right| \\
&\le 2RW  \E\left\|\frac{1}{\sqrt{m}} \sum_{i = 1}^n \epsilon_i \frac{X_i - \E[X_i | Z_i]}{W} \right\|_{\infty}  +  \sup_{f \in \mathcal{F} \cap r D_{f^*}} |a - a^*| \E |Z| \\
&\le C (RW \sqrt{\log(n)} + \sup_{f \in \mathcal{F} \cap r D_{f^*}} |a - a^*|)
\end{align*}
where the first inequality is by Holder's inequality and the triangle inequality, and the second is by the standard Gaussian tail bound combined with the union bound.
To complete the bound observe that 
\[ \E[(\langle w - w^*, X - \E[X | Z] \rangle + (a - a^*) Z)^2] \ge (a - a^*)^2 \]
so $a - a^* \le r$ and 
\[ \E \sup_{f \in \mathcal{F} \cap r D_{f^*}}\left|\frac{1}{\sqrt{n}} \sum_{i = 1}^m \epsilon_i (f - f^*)(X_i)\right| \le C(RW \sqrt{\log(n)} + r). \]
This is smaller than $\gamma r \sqrt{m}$ as long as $r = \Omega(\frac{RW}{\gamma} \sqrt{\frac{\log n}{m}})$ so $\beta^*_m = O(\frac{RW}{\gamma} \sqrt{\frac{\log n}{m}})$. 

We proceed to bound $\alpha^*$ similarly.
\begin{align*}
\sup_{f \in \mathcal{F} \cap s D_{f^*}}&\left|\frac{1}{\sqrt{m}} \sum_{i = 1}^m \epsilon_i \xi_i (f - f^*)(X_i)\right|\\
&= \sup_{f \in \mathcal{F} \cap s D_{f^*}}\left|\frac{1}{\sqrt{m}} \sum_{i = 1}^m \epsilon_i \xi_i (\langle w - w^*, X_i - \E[X_i | Z_i] \rangle + (a - a^*)Z)\right| \\
&\le C (RW \sigma \sqrt{\log(2n/\delta)} + \sigma s\sqrt{\log(4/\delta)})
\end{align*}
with probability at least $1 - \delta$ as long as $m \ge m_1 = O(\log(n/\delta))$, where the inequality is by Holder's inequality and $|a - a^*| \le s$ (as before), Bernstein's inequality (Theorem 2.8.2 of \cite{vershynin2018high}) using that the product of sub-Gaussian r.v. ($\xi_i$ and $X_i - \E[X_i | Z_i]$) is sub-exponential (Lemma 2.7.7 of \cite{vershynin2018high}), and the union bound. The last quantity is upper bounded by $\gamma s^2 \sqrt{m}$ as long as $s^2 = \Omega(\frac{\sigma}{\gamma} \sqrt{\frac{\log(2n/\delta)}{m}})$ and $s = \Omega(\frac{\sigma}{\gamma} \sqrt{\frac{\log(4/\delta)}{m}})$. Therefore
\[ (\alpha^*)^2 = O\left(\frac{RW \sigma}{\gamma} \sqrt{\frac{\log(2n/\delta)}{m}} +  \frac{\sigma^2 \log(4/\delta)}{\gamma^2 m}\right). \]
Combining our estimates, it follows from Theorem~\ref{thm:learning-without-concentration} that
\[ \E[(\hat{f} - f^*)^2] = O((\alpha^*_m)^2 + (\beta^*_m)^2) = O\left(\frac{RW \sigma}{\gamma} \sqrt{\frac{ \log(2n/\delta)}{m}} +  \frac{\sigma^2 \log(4/\delta)}{\gamma^2 m} + \frac{R^2 W^2 \log(n)}{\gamma m} \right)\]
with probability at least $1 - \delta - e^{-m/8} \ge 1 - 2\delta$ as long as $m = \Omega(\log(1/\delta) + m_1) = \Omega(\log(d/\delta))$. Since $\gamma$ is just a constant, this gives the result.
\end{proof}
\subsection{Guarantees for Greedy Methods}\label{sec:greedy-l1}
In this section we show that a simple greedy method can also solve this high-dimensional regression
problem with the correct dependence on $n$, albeit with slightly worse dependence on the
other parameters. This is conceptually important as it shows that examples breaking greedy algorithms (in the sense of requiring $\omega(\log(n))$ sample complexity)
 also suffice to break analyses based on bounded $\ell_1$-norm.
\begin{lemma}\label{lem:hybrid-greedy-step}
In the hybrid $\ell_1$-regression model, there exists an input
coordinate $j$ such that
\[ \Var(\E[Y | X,Z] \mid Z, X_j) \le \Var(\E[Y | X,Z] \mid Z)\left(1 - \frac{\Var(\E[Y | X,Z] \mid Z)}{R^2W^2}\right). \]
\end{lemma}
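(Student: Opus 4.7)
The plan is to reduce the problem to picking a single coordinate that captures a large fraction of the target's conditional variance, via Hölder's inequality against the $\ell_1$-bound on $w^*$.

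First I would make the conditional distributions explicit. Since $Y = \langle w^*, X - \E[X|Z] \rangle + a^* Z + \xi$ with $\xi$ independent of $(X,Z)$, we have $\E[Y|X,Z] = \langle w^*, X - \E[X|Z]\rangle + a^* Z$ and $\E[Y|Z] = a^* Z$. Conditional on $Z$, the vector $U := X - \E[X|Z]$ is Gaussian with covariance $\Sigma$, so
\[ \Var(\E[Y|X,Z] \mid Z) = \Var(\langle w^*, U \rangle \mid Z) = (w^*)^T \Sigma w^*. \]
Conditioning further on $X_j$ is the same (given $Z$) as conditioning on $U_j$. Since $(\langle w^*, U\rangle, U_j)$ is jointly Gaussian conditional on $Z$, Lemma~\ref{lem:gaussian-conditional-expectation} gives
\[ \Var(\E[Y|X,Z] \mid Z, X_j) = (w^*)^T \Sigma w^* - \frac{(\Sigma w^*)_j^2}{\Sigma_{jj}}. \]

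Next I would lower-bound the best coordinate's contribution using duality between $\ell_1$ and $\ell_\infty$. By Hölder's inequality,
\[ (w^*)^T \Sigma w^* = \sum_j w^*_j (\Sigma w^*)_j \le \|w^*\|_1 \cdot \|\Sigma w^*\|_\infty \le W \cdot \max_j |(\Sigma w^*)_j|, \]
so some coordinate $j^*$ satisfies $|(\Sigma w^*)_{j^*}| \ge (w^*)^T \Sigma w^* / W$. Combined with the assumption $\Sigma_{jj} \le R^2$, this yields
\[ \frac{(\Sigma w^*)_{j^*}^2}{\Sigma_{j^* j^*}} \ge \frac{((w^*)^T \Sigma w^*)^2}{R^2 W^2}. \]

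Substituting this into the earlier identity and writing $V := \Var(\E[Y|X,Z] \mid Z) = (w^*)^T \Sigma w^*$ gives
\[ \Var(\E[Y|X,Z] \mid Z, X_{j^*}) \le V - \frac{V^2}{R^2 W^2} = V\left(1 - \frac{V}{R^2 W^2}\right), \]
which is exactly the claimed bound. There is no real obstacle here: once one unwinds the definitions and uses the Gaussian variance-reduction formula together with the $\ell_1/\ell_\infty$ duality, the bound is immediate. The only minor subtlety is remembering that the conditioning on $Z$ reduces the problem to a purely Gaussian regression with a deterministic (in $Z$) covariance $\Sigma$, so the Gaussian conditional variance formulas apply verbatim.
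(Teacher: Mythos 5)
Your proof is correct and follows essentially the same route as the paper's: the identity $\Var(\E[Y|X,Z]\mid Z) = \sum_j w^*_j \Cov(\E[Y|X,Z], X_j \mid Z)$, H\"older against the $\ell_1$ bound $W$, the bound $\Var(X_j\mid Z) \le R^2$, and the Gaussian variance-reduction formula (Lemma~\ref{lem:gaussian-conditional-expectation}) applied to the maximizing coordinate. Writing it in explicit matrix notation $(\Sigma w^*)_j$, $\Sigma_{jj}$ rather than in covariance notation is a purely cosmetic difference.
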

\begin{proof}
By expanding, applying Holder's inequality and using the assumption on $R$ we have
\begin{align*} 
\Var(\E[Y | X,Z] \mid Z) 
&= \sum_{j} w_j \Cov(\E[Y | X,Z], X_j \mid Z) \\
&\le W \max_j |\Cov(\E[Y|X,Z], X_j \mid Z)| \\
&\le RW \max_j \left|\Cov\left(\E[Y | X,Z], \frac{X_j}{\sqrt{\Var(X_j | Z)}} \ \Big|\  Z\right)\right|.
\end{align*}
Let $j$ be the maximizer. Then by Lemma~\ref{lem:gaussian-conditional-expectation},
\[ \Var(\E[Y | X,Z] \mid Z) - \Var(\E[Y | X,Z] \mid Z, X_j) = \frac{\Cov(\E[Y | X,Z], X_j \mid Z)^2}{\Var(X_j \mid Z)} \ge \frac{\Var(\E[Y | X,Z] \mid Z)^2}{R^2W^2}.  \]
Rearranging gives that
\[ \Var(\E[Y | X,Z] \mid Z, X_j) \le \Var(\E[Y | X,Z] \mid Z)\left(1 - \frac{\Var(\E[Y | X,Z] \mid Z)}{R^2W^2}\right). \]
\end{proof}
The above bound naturally leads to analyzing the recursion $x \mapsto x - cx^2$, which we do in the next Lemma. 
\begin{lemma}\label{lem:recurrence-bd}
Suppose that $x_1 \le 1/2c$ and $x_{t + 1} \le (1 - cx_t) x_t$ for some $c < 1$. Then
\[ x_t \le \frac{1}{c(t + 1)} \]
\end{lemma}
\begin{proof}
We prove this by induction. Observe that $x(1 - cx)$ is an increasing function in $x$ for $x \le \frac{1}{2c}$ since $1/2c$ corresponds to the vertex of the parabola, so using the assumption and the induction hypothesis,
\[ x_{t} \le x_{t - 1}(1 - cx_{t - 1}) \le 1/ct - 1/ct^2 = \frac{t - 1}{ct^2} \le \frac{t - 1}{c(t^2 - 1)} \le \frac{1}{c(t + 1)}.  \]
\end{proof}
\begin{lemma}\label{lem:hybrid-greedy-init}
In the hybrid $\ell_1$-regression model,
\[ \Var(\E[Y | X,Z] \mid Z) \le R^2 W^2.\]
\end{lemma}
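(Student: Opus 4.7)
The plan is to directly expand $\E[Y \mid X, Z]$ using the model definition and then bound the resulting quadratic form. First, from the definition of the hybrid $\ell_1$-regression model together with independence of $\xi$ from $(X,Z)$, we get
\[ \E[Y \mid X, Z] = \langle w^*, X - \E[X \mid Z] \rangle + a^* Z. \]
Since $a^* Z$ is a deterministic function of $Z$, it contributes nothing to the conditional variance, so
\[ \Var(\E[Y \mid X, Z] \mid Z) = \Var(\langle w^*, X - \E[X \mid Z]\rangle \mid Z) = (w^*)^T \Sigma w^*, \]
using that, conditioned on $Z$, the vector $X - \E[X \mid Z]$ is Gaussian with covariance $\Sigma$.

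The remaining task is to bound $(w^*)^T \Sigma w^*$ in terms of $W$ and $R$. I would apply $\ell_1/\ell_\infty$ duality (equivalently Holder) to write
\[ (w^*)^T \Sigma w^* = \sum_{i,j} w^*_i w^*_j \Sigma_{ij} \le \|w^*\|_1^2 \max_{i,j} |\Sigma_{ij}|, \]
and then observe by positive semidefiniteness and Cauchy--Schwarz that $|\Sigma_{ij}| \le \sqrt{\Sigma_{ii}\Sigma_{jj}} \le R^2$. Combining with $\|w^*\|_1 \le W$ yields the claimed bound. No step here is a real obstacle; the argument is essentially a one-line calculation once one recognizes that conditioning on $Z$ removes the $a^* Z$ term and reduces the problem to bounding a single Gaussian quadratic form with an $\ell_1$-constrained coefficient vector.
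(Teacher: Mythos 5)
Your proof is correct and follows essentially the same route as the paper's: both reduce the claim to a H\"older bound on an $\ell_1$-constrained vector paired with a Cauchy--Schwarz bound of $R^2$ on the relevant covariances. The only cosmetic difference is that you bound the explicit quadratic form $(w^*)^T\Sigma w^*$ entrywise, whereas the paper applies H\"older once to get $\Var(\E[Y\mid X,Z]\mid Z)\le RW\sqrt{\Var(\E[Y\mid X,Z]\mid Z)}$ and cancels a square root; both yield the identical bound.
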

\begin{proof}
By expanding, using Holder's inequality and Cauchy-Schwartz
\begin{align*}
\Var(\E[Y | X,Z] \mid Z) 
&= \sum_j w_J \Cov(\E[Y | X,Z], X_j \mid Z) \\
&\le W \max_j |\Cov(\E[Y | X,Z], X_j \mid Z) \\
&\le W \max_j \sqrt{\Var(\E[Y | X,Z] \mid Z) \Var(X_j \mid Z)} \le RW \sqrt{\Var(\E[Y | X,Z] \mid Z)}
\end{align*}
so $\Var(\E[Y | X,Z] \mid Z) \le R^2 W^2$.
\end{proof}
\begin{remark}[Connection to Approximate Caratheodory]\label{rmk:approximate-caratheodory}
From the previous two lemmas, we can give a ``matching pursuit'' proof of the approximate Caratheodory theorem, which says that vectors of bounded $\ell_1$-norm are well approximated in $\ell_2$ by sparse vectors \cite{vershynin2018high}. The standard proof of this result is probabilistic. Another proof, in a similar spirit, is given by using the guarantees of the Frank-Wolfe algorithm (see \cite{bubeck2015convex}).
\end{remark}
The remaining task is to analyze the behavior of the iteration under noise, which gives the main result:
\begin{theorem}\label{thm:omp-hybrid-regression}
  For any $\epsilon \in (0,1)$, iterate $t$ of OMP in the hybrid regression model satisfies
  \[ \Var(\E[Y | X,Z] | Z, X_{S_t}) \le \epsilon \sigma^2 \]
  as long as $t = \Omega(R^2W^2/\epsilon \sigma^2)$ and $m = \Omega(\frac{R^2W^2}{\epsilon^2 \sigma^2}(t\log(4n) + \log(4/\delta)))$. 
\end{theorem}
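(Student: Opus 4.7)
The plan is to combine the population-level convergence of idealized greedy (obtained by iterating Lemma~\ref{lem:hybrid-greedy-step} from the initial bound of Lemma~\ref{lem:hybrid-greedy-init} and applying Lemma~\ref{lem:recurrence-bd}) with a uniform concentration argument showing that, at each step, the empirically optimal coordinate achieves a decrement in conditional variance that is within a constant factor of the population-optimal decrement. The latter will use the regression $t$-test estimate from Lemma~\ref{lem:estimating-variance-decrement} together with the identity in Lemma~\ref{lem:t-equals-f} that identifies the OMP criterion with the ratio $\hat w_j^2/(\hat\Sigma^{-1})_{jj}$.

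First I would set up notation. Let $V_t := \Var(\E[Y \mid X,Z] \mid Z, X_{S_t})$ denote the population conditional variance after $t$ greedy steps, and for each candidate coordinate $j \notin S_t$ let $V_{t+1}(j)$ be the corresponding one-step decrement and $\gamma_t(j) := (V_t - V_{t+1}(j))/(V_{t+1}(j) + \sigma^2)$. Lemma~\ref{lem:hybrid-greedy-init} gives $V_0 \le R^2W^2$, and Lemma~\ref{lem:hybrid-greedy-step} guarantees the existence of $j^*$ with $V_{t+1}(j^*) \le V_t(1 - V_t/(R^2W^2))$. Since $x(1 - x/(R^2W^2))$ is maximized at $x = R^2W^2/2$ with value $R^2W^2/4$, the population iterate satisfies $V_1 \le R^2W^2/2$ after one step, and then Lemma~\ref{lem:recurrence-bd} (with $c = 1/(R^2W^2)$) yields $V_t^{\mathrm{pop}} \le R^2W^2/t$; in particular $t = \Omega(R^2W^2/(\epsilon\sigma^2))$ population-optimal steps drive $V_t$ below $\epsilon\sigma^2/2$.

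Next I would transfer this to empirical OMP. By Lemma~\ref{lem:t-equals-f}, the coordinate $\hat j_t$ chosen by OMP at step $t$ maximizes $\hat w_j^2/(\hat\Sigma^{-1})_{jj}$, which by Lemma~\ref{lem:estimating-variance-decrement}, applied with covariates $(Z, X_{S_t}, X_j)$ and union bounded over all $j \in [n]$ and all $t$ steps using $\delta' = \delta/(tn)$, is within
\[ \sqrt{4(t\log(4n) + \log(4/\delta))/m} + \sqrt{\gamma_t(j)/64} \]
of $\sqrt{\gamma_t(j)}$ with probability $\ge 1 - \delta$, provided $m = \Omega(t + \log(tn/\delta))$. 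At any step where $V_t \ge \epsilon\sigma^2$, one has $V_{t+1}(j^*) + \sigma^2 \le V_t + \sigma^2 \le V_t(1 + 1/\epsilon) \le 2V_t/\epsilon$, so Lemma~\ref{lem:hybrid-greedy-step} implies
\[ \gamma_t(j^*) \ge \frac{V_t^2/(R^2W^2)}{2V_t/\epsilon} = \frac{\epsilon V_t}{2R^2W^2} \ge \frac{\epsilon^2\sigma^2}{2R^2W^2}. \]
Therefore, provided $m = \Omega\!\left(\frac{R^2W^2}{\epsilon^2\sigma^2}(t\log(4n) + \log(4/\delta))\right)$, the additive noise floor $\sqrt{t\log(n)/m}$ is at most $\sqrt{\gamma_t(j^*)}/C$ for a large constant $C$, so the coordinate $\hat j_t$ selected by empirical OMP satisfies $\sqrt{\gamma_t(\hat j_t)} \ge \sqrt{\gamma_t(j^*)}/2$, hence $\gamma_t(\hat j_t) \ge \gamma_t(j^*)/4$.

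Finally I would chain the per-step guarantee into a weakened recurrence. Translating $\gamma_t(\hat j_t) \ge \gamma_t(j^*)/4$ back into an absolute decrement (using $V_{t+1}(\hat j_t) + \sigma^2 \le V_t + \sigma^2$ again) yields $V_{t+1} \le V_t(1 - c V_t/(R^2W^2))$ for an absolute constant $c > 0$ at every step where $V_t > \epsilon\sigma^2$. Lemma~\ref{lem:recurrence-bd} (applied to $cV_t$ in place of $V_t$, after using the same one-step argument as in the population analysis to reach $V_1 \le R^2W^2/(2c)$) then gives $V_t \le R^2W^2/(c(t+1))$, which is at most $\epsilon\sigma^2$ once $t = \Omega(R^2W^2/(\epsilon\sigma^2))$. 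The main obstacle is the bookkeeping around the normalization of Lemma~\ref{lem:estimating-variance-decrement}: it estimates the \emph{ratio} $\gamma_t$ rather than the absolute decrement, so one must carefully lower-bound $\gamma_t(j^*)$ by $\Omega(\epsilon^2\sigma^2/R^2W^2)$ uniformly over the stopping region $\{V_t > \epsilon\sigma^2\}$ to absorb the additive error, which is precisely what drives the $R^2W^2/(\epsilon^2\sigma^2)$ prefactor in the stated sample complexity.
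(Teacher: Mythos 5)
Your architecture is the paper's: Lemma~\ref{lem:hybrid-greedy-init} for the initialization, Lemma~\ref{lem:hybrid-greedy-step} for the existence of a good coordinate $j^*$, a per-step concentration bound to show OMP selects a comparably good coordinate, and Lemma~\ref{lem:recurrence-bd} to chain the recurrence. The gap is in the concentration step, and it is exactly the gap that Lemma~\ref{lem:estimating-variance-decrement-v} was introduced to close. Lemma~\ref{lem:estimating-variance-decrement} controls $|\hat w_j|/(\hat\sigma\sqrt{(\hat\Sigma^{-1})_{jj}})$, where $\hat\sigma$ is the residual standard deviation of the regression \emph{including} $X_j$; its population value is $\sqrt{V_{t+1}(j)+\sigma^2}$, which depends on $j$. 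By Lemma~\ref{lem:t-equals-f}, OMP maximizes the \emph{unnormalized} statistic $\hat w_j^2/(\hat\Sigma^{-1})_{jj}$, so your claim that this statistic ``is within [the stated error] of $\sqrt{\gamma_t(j)}$'' is not what the lemma gives, and the argmax of the OMP criterion cannot be identified with the (near-)argmax of $\gamma_t(j)$ without tracking the $j$-dependent normalizers. What the OMP statistic actually estimates is the absolute decrement $V_t - V_{t+1}(j)$; the paper's Lemma~\ref{lem:estimating-variance-decrement-v} with $V=\sigma^2$ renormalizes by $\Var(Y\mid Z,X_{S_t})-\sigma^2=V_t$, which is the same for every candidate $j$, so the selected coordinate provably achieves a constant fraction of the best absolute decrement and the recurrence $V_{t+1}\le V_t(1-V_t/2R^2W^2)$ follows directly. (A secondary point: $S_t$ is data-dependent, so the union bound must range over all $n^t$ possible conditioning sets rather than $tn$ events; your displayed $t\log(4n)$ term is consistent with the former but your stated $\delta'=\delta/(tn)$ is not.)

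The final chaining step is also not justified as written. From $\gamma_t(\hat j_t)\ge\gamma_t(j^*)/4$ you must recover the decrement via $V_t-V_{t+1}=\gamma_t(\hat j_t)\bigl(V_{t+1}(\hat j_t)+\sigma^2\bigr)$, which requires a \emph{lower} bound on $V_{t+1}(\hat j_t)+\sigma^2$; the inequality you invoke, $V_{t+1}(\hat j_t)+\sigma^2\le V_t+\sigma^2$, points the wrong way, and the naive lower bound $\sigma^2$ loses a factor of $\sigma^2/V_t$ in the regime $V_t\gg\sigma^2$, degrading the contraction to geometric decay at rate $\sigma^2/R^2W^2$ and costing an extra $\log(R^2W^2/\sigma^2)$ factor in the iteration count relative to the theorem. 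One can repair this by writing $V_t-V_{t+1}=\tfrac{\gamma_t}{1+\gamma_t}(V_t+\sigma^2)$ and using monotonicity of $x\mapsto x/(1+x)$ to transfer the ratio comparison to a decrement comparison with only constant loss, but that argument is not in your proposal. Both issues disappear if you work with the absolute decrement throughout, which is what the paper's proof does.
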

\begin{proof}
The argument is structured similarly to the proof of Lemma~\ref{lem:ferromagnetic-subset-selection}.
Fix $\epsilon \in (0,1)$ to be optimized later: we bound the number of steps of OMP during which $\Var(\E[Y | X,Z] | Z, X_{S_t}) \ge \epsilon \sigma^2$. Note that once this bounds holds for some $t$, it holds for all larger $t$ by the law of total variance. Fix an integer $T > 0$ to be optimized later.

First observe from Lemma~\ref{lem:hybrid-greedy-step} (applied after conditioning out $X_{S_t}$) that there exists a node $j^*$ such that
\[ \Var(\E[Y | X,Z] | Z, X_{j^*}, X_{S_t}) \le \Var(\E[Y | X,Z] | Z, X_{S_t})(1 - \frac{\Var(\E[Y | X,Z] | Z, X_{S_t})}{R^2 W^2}). \]
From Lemma~\ref{lem:estimating-variance-decrement-v} and taking the union bound over all sets $S$ of size $|S| \le T$ we have
\[ \left|\sqrt{\frac{1}{\Var(Y | X_{S \setminus j}) - \sigma^2}} \frac{|\hat{w}_j|}{\sqrt{(\hat \Sigma^{-1})_{jj}}} - \sqrt{\gamma'}\right| \le \sqrt{\frac{\Var(Y | X_S)}{\Var(Y | X_{S \setminus j}) - \sigma^2} \cdot \frac{2 \log(n^T/\delta)}{m}} + \sqrt{\frac{\gamma'}{64}} \le  \sqrt{\frac{1 + \epsilon}{\epsilon} \cdot \frac{2 \log(n^T/\delta)}{m}} + \sqrt{\frac{\gamma'}{64}}\]
using that $(1 + x)/x = 1/x + 1$ is monotone decreasing, where
\[ \gamma' = \gamma'(S,j) := \frac{\Var(X_i | Z,X_{S \setminus \{j\}}) - \Var(X_i | Z,X_S)}{\Var(X_i | Z,X_{S \setminus \{j\}}) - \sigma^2} . \]
Note that $\gamma'(S,j^*) \ge \epsilon \sigma^2/R^2 W^2$. 
Therefore as long as $m = \Omega(\frac{R^2 W^2}{\epsilon^2 \sigma^2}(T\log(4n) + \log(4/\delta)))$ then OMP chooses a node $j$ s.t. 
\[ \Var(\E[Y | X,Z] | Z, X_{j}, X_{S_t}) \le \Var(\E[Y | X,Z] | Z, X_{S_t})(1 - \frac{\Var(\E[Y | X,Z] | Z, X_{S_t})}{2 R^2 W^2})\]
as long as $|S_t| \le T$. Applying Lemma~\ref{lem:hybrid-greedy-init} and Lemma~\ref{lem:recurrence-bd} we find that
\[ \Var(\E[Y | X,Z] | Z, X_{S_t}) \le \frac{2R^2W^2}{t + 1} \]
for $t \le T$. Therefore if $T \ge t \ge 2R^2W^2/\epsilon \sigma^2$ we are guaranteed that
$\Var(\E[Y | X,Z] | Z, X_{S_t}) \le \epsilon \sigma^2$. Taking $\epsilon = 2 R^2W^2/T\sigma^2$ gives the result.
\end{proof}
\section{Regression and Structure Learning in Walk-Summable Models}

\subsection{Failure of (weak) supermodularity in SDD models}\label{sec:sdd-examples}
The following example shows that the conditional variance is not supermodular in the SDD case, unlike in the attractive/ferromagnetic case. 
\begin{example}\label{example:no-submodularity}
Consider the GGM given by SDD precision matrix
\[ \Theta = \begin{bmatrix} 
1 & -1/2 & -1/2 \\
-1/2 & 1 & 1/2 \\
-1/2 & 1/2 & 1
\end{bmatrix} \]
and label the nodes (in order) by $i,j,k$. One can see (e.g. by computing effective resistances in the lifted graph) that $2\Var(X_i) = 3$, that $2\Var(X_i | X_j) = 2\Var(X_i | X_k) = 8/3$, and $2\Var(X_i | X_j,X_k) = 2$. Since $3 - 8/3 = 1/3 < 2/3 = 8/3 - 2$ this violates supermodularity.
\end{example}
The above example alone does not rule out the possibility that (negative) conditional variances in SDD models always have \emph{submodularity ratio} introduced by \cite{das2011submodular} lower bounded by a constant. We recall the definition next:
\begin{defn}[\cite{das2011submodular}]
The \emph{submodularity ratio}  $\gamma(k)$ of a function on subsets of a universe $U$, $f : 2^U \to \mathbb{R}_{\ge 0}$ is defined to be
\[ \gamma(k) := \min_{L \subset U, |S| \le k, L \cap S = \emptyset} \frac{\sum_{x \in S} f(L \cup \{x\}) - f(L)}{f(L \cup S) - f(L)}\]
Note that $\gamma(k) \ge 1$ for a  submodular function.
\end{defn}
 The significance of this ratio for a function $f$ is that if the ratio is lower bounded by a constant then similar guarantees for submodular maximization follow (\cite{das2011submodular}); for this reason such an $f$ is sometimes called \emph{weakly submodular} (as in e.g. \cite{elenberg2016restricted}). Now, we give a counterexample showing that for general SDD matrices, this ratio can be arbitrarily small.
\begin{example}\label{example:no-apx-submodularity}
Fix $M > 0$ large.
Let $\epsilon > 0$ be a parameter to be taken small, and consider the following precision matrix, which is SDD as long as $\epsilon < 1/2 < M$:
\[ \Theta = 
\begin{bmatrix}
1 & -\epsilon & \epsilon \\
-\epsilon & M & \epsilon - M \\
\epsilon & \epsilon - M & M
\end{bmatrix}. \]
This has inverse
\[ \Theta^{-1} =
\begin{bmatrix}
(\epsilon-2 M)/(\epsilon+2 \epsilon^2-2 M) & -(\epsilon/(\epsilon+2 \epsilon^2-2 M)) & \epsilon/(\epsilon+2 \epsilon^2-2 M) \\
-(\epsilon/(\epsilon+2 \epsilon^2-2 M)) & (\epsilon^2-M)/(\epsilon^2+2 \epsilon^3-2 \epsilon M) & (\epsilon+\epsilon^2-M)/(\epsilon^2+2 \epsilon^3-2 \epsilon M) \\
\epsilon/(\epsilon+2 \epsilon^2-2 M) & (\epsilon+\epsilon^2-M)/(\epsilon^2+2 \epsilon^3-2 \epsilon M) & (\epsilon^2-M)/(\epsilon^2+2 \epsilon^3-2 \epsilon M) \\
\end{bmatrix} \]
so
\[ \Var(X_1) - \frac{1}{\Theta_{11}} = \frac{-2\epsilon^2}{\epsilon + 2\epsilon^2 - 2M} \]
and (by computing the inverse of the top-left 2x2 submatrix of $\Theta$) we find
\[ \Var(X_1 | X_3) - \frac{1}{\Theta_{11}} = \frac{M}{M - \epsilon^2} - 1 = \frac{\epsilon^2}{M - \epsilon^2} \]
and the difference is
\[ \Var(X_1) - \Var(X_3) = \frac{\epsilon^3}{(M - \epsilon^2)(2M - 2\epsilon^2 - \epsilon)} \]
Therefore the \emph{submodularity ratio} $\gamma = \gamma(2)$ for $f(S) = \Var(X_1) - \Var(X_1 | X_S)$ is upper bounded by (taking $L = \emptyset$)
\[ \gamma \le \frac{f(\{2\}) + f(\{3\})}{f(\{2,3\})} = \Theta\left(\frac{\epsilon^3/M^2}{\epsilon^2/M}\right) = \Theta(\epsilon/M) \]
which is clearly arbitrarily small. 
\end{example}
\begin{remark}[Submodularity ratio and $\kappa$]\label{rmk:submodularity-ratio}
It's possible to show, based on Lemma~\ref{lem:hybrid-greedy-step} and the bounds in the proof of Theorem~\ref{thm:ws-regression}
to derive a partial lower bound for the submodularity ratio when
we consider $S \subset T$ and restrict to $j$ which are neighbors of $i$, by showing:
\[ f(S \cup \{j\}) - f(S) \ge \frac{\kappa^2}{4d}(f(U) - f(S)) \ge \frac{\kappa^2}{4d}(f(T \cup \{j\}) - f(T)) \]
using the monotonicity of $f$ (which follows from the law of total variance) in the last step, and under the assumption that the model is $\kappa$-nondegenerate and $d$-sparse. The above example shows that this dependence on $\kappa$ is tight: by taking a fixed small $\epsilon$ and sending $M \to \infty$, the submodularity ratio can be as small as $O(\kappa^2)$ since $\kappa = \epsilon/\sqrt{M}$ in this model. It remains unclear if the submodularity ratio can be lower bounded in general in $\kappa$-nondegenerate models; even if such a bound did hold it could not be used to prove Theorem~\ref{thm:ws-regression} since that result holds without a $\kappa$-nondegeneracy assumption.
\end{remark}
\subsection{Sparse regression}
In this section we describe an algorithm to find a good predictor of node $X_i$ with bounded degree $d$ in a walk-summable GGM. To simplify the analysis, we assume the data has been split into 3 equally sized sample sets, each of size $m$; when there is no explicit mention, averages are taken over sample set 1.

The algorithm is conceptually straightforward: it does a single greedy step and then sets up an $\ell_1$-constrained regression. The only complication is that we do not know $1/\Theta_{ii}$ a priori, but this appears in the $\ell_1$-norm of the obvious regression we want to setup. Since we have multiplicative estimates for $1/\Theta_{ii}$, we can deal with this by searching over the possible values on a log scale.

\begin{figure}
\fbox{\begin{minipage}{\textwidth}
\vspace{.1cm}
Algorithm \textsc{WS-Regression($\gamma,d$)}: 
\begin{enumerate}
    \item Choose $j$ to minimize $\widehat{\Var}(X_i | X_j)$.
    \item Let $s_0^2 := \exp(\lfloor \log(\widehat{\Var}(X_i | X_j)/8d) \rfloor - 1)$.
    \item For $\ell$ in $0$ to $\lceil\log(8d) + 3\rceil$:
    \begin{enumerate}
        \item Let $s_{\ell}^2 := s_0 e^{\ell}$
        \item Solve for $w, a$ in 
    \[ \min_{w, a : \|w\|_1 \le \lambda} \hat{\E}_2 \left[\left(X_i - \sum_{k \notin \{i,j\}} w_k \frac{X_{k}}{\sqrt{\widehat{\Var}(X_k | X_j)}} - a X_j\right)^2\right]\]
    where $\lambda = \sqrt{2d} s_{\ell}$ and  $\hat{\E}_2$ is empirical expectation over sample set 2.
        \item Let $\hat{\sigma}^2 :=  \hat{\E}_3\left[\left(X_i - \sum_{k \notin \{i,j\}} w_k \frac{X_{k}}{\sqrt{\widehat{\Var}(X_k | X_j)}} - a X_j\right)^2\right]$ where $\hat{\E}_3$ is empirical expectation over sample set 3.
        If  $\lambda^2 \ge 2d\gamma^2 \hat{\sigma}^2$ (equivalently, $s_{\ell}^2 \ge \gamma^2  \hat{\sigma}^2$), 
        then exit the loop.
    \end{enumerate}
    \item Return $w,a,j,\hat{\sigma}^2$.
\end{enumerate}
\end{minipage}}
\vspace{.1cm}
\end{figure}
We show this algorithm gives a result for sparse linear regression under the walk-summability assumption which (1) depends on sparsity only, not on norms (unlike the slow rate bound for LASSO) and (2) is computationally efficient (unlike brute force $\ell_0$-constrained regression).
\begin{theorem}\label{thm:ws-regression}
  Let $i$ be a node of degree $d$ in an SDD GGM and $\sigma^2 := 1/\Theta_{ii}$. Then WS-Regression($\gamma$) with $\gamma^2 = 2$ returns $w,a$ such that 
  \[ \E\left[\left(\E[X_i | X_{\sim i}] - \sum_{k \notin \{i,j\}} w_k \frac{X_k}{\sqrt{\widehat{\Var}(X_k | X_j)}}  - a X_j\right)^2\right] =  O\left(\sigma^2 \sqrt{\frac{d\log(2n/\delta)}{m}}\right)\]
  and $\hat{\sigma}^2$ s.t. $1/2 \le \Theta_{ii} \hat{\sigma}^2 \le 2$
  with probability at least $1 - \delta$, as long as $m = \Omega(\log(n/\delta))$.
\end{theorem}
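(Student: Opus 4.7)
The plan is to reduce the analysis to the hybrid $\ell_1$-regression framework developed in Section~6. Concretely, after Step~1 fixes a greedy variable $j$, we interpret Step~3(b) as empirical risk minimization in the class $\{(x,z)\mapsto \langle w,x\rangle+a'z:\|w\|_1\le\lambda\}$ with covariates $\tilde X_k = X_k/\sqrt{\widehat\Var(X_k\mid X_j)}$ and $z=X_j$. The two things we have to establish are (i) a structural bound showing that the \emph{true} regressor $\E[X_i\mid X_{\sim i}]$ lies in this class at scale $\lambda = O(\sqrt{d}\,\sigma)$, and (ii) that the geometric grid $\{s_\ell\}$ together with the stopping criterion $s_\ell^2\ge \gamma^2\hat\sigma^2$ automatically terminates at $s_{\ell^\star}=\Theta(\sigma)$, without the algorithm knowing $\sigma$.

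For (i), the greedy step is controlled by Lemma~\ref{lem:bound-after-conditioning}, which furnishes some neighbor $j_0$ with $\Var(X_i\mid X_{j_0})\le 4d\sigma^2$; combining this with OLS variance concentration (Lemma~\ref{lem:ols-sigmahat}) and a union bound over the $n$ candidates, the $j$ actually chosen satisfies $\Var(X_i\mid X_j)\le 8d\sigma^2$ and $\widehat\Var(X_i\mid X_j)=\Theta(\Var(X_i\mid X_j))$ whenever $m=\Omega(\log(n/\delta))$. Next, writing $\E[X_i\mid X_{\sim i}]=\sum_{k\ne i}(-\Theta_{ik}/\Theta_{ii})X_k$ in the $(\tilde X,X_j)$-parameterization gives $a^\star=-\Theta_{ij}/\Theta_{ii}$ and $w^\star_k=-(\Theta_{ik}/\Theta_{ii})\sqrt{\widehat\Var(X_k\mid X_j)}$. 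Applying Lemma~\ref{lem:sdd-smooth-variance} to the (still SDD) precision matrix obtained after conditioning on $X_j$ yields $\Var(X_k\mid X_j)\le 1/|\Theta_{ik}|+\Var(X_i\mid X_j)$ for every neighbor $k$ of $i$. Using $\sqrt{a+b}\le\sqrt a+\sqrt b$, Cauchy--Schwarz on $\sum\sqrt{|\Theta_{ik}|}\le\sqrt d\sqrt{\sum|\Theta_{ik}|}$, and the SDD row-sum bound $\sum_k|\Theta_{ik}|\le\Theta_{ii}$, one obtains
\[
\|w^\star\|_1 \;\le\; C\sum_{k\sim i,\,k\ne j}\frac{|\Theta_{ik}|}{\Theta_{ii}}\!\left(\frac{1}{\sqrt{|\Theta_{ik}|}}+\sqrt d\,\sigma\right) \;\le\; C'\sqrt d\,\sigma .
\]
Moreover $\Var(\tilde X_k\mid X_j)=\Theta(1)$ by the same empirical-to-true variance comparison, so the parameter $R$ of Theorem~\ref{thm:hybrid-erm-bound} is $O(1)$.

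For (ii), fix any $s_\ell$ with $\sqrt{2d}\,s_\ell\ge\|w^\star\|_1$. Then $w^\star,a^\star$ is feasible for Step~3(b), and Theorem~\ref{thm:hybrid-erm-bound} applied with $W=\sqrt{2d}\,s_\ell$, $R=O(1)$, and noise level $\sigma$ gives a population excess-risk bound
\[
O\!\left(\sqrt d\,\sigma\,s_\ell\sqrt{\tfrac{\log(2n/\delta)}{m}}+\tfrac{\sigma^2\log(4/\delta)}{m}+\tfrac{d\,s_\ell^2\log n}{m}\right),
\]
which is $O(\sigma^2\sqrt{d\log(2n/\delta)/m})$ exactly when $s_\ell=\Theta(\sigma)$. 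To see that the search lands there, note that for $s_\ell\ll\sigma$ the feasibility constraint forces any feasible predictor to be far from $w^\star$, so the empirical risk $\hat\sigma^2$ evaluated on the independent sample set~3 remains $\Omega(\Var(X_i\mid X_j))=\Omega(\sigma^2)$ and the exit condition $s_\ell^2\ge\gamma^2\hat\sigma^2$ is not triggered; once $s_\ell$ crosses a constant multiple of $\sigma$ the ERM tracks $w^\star$, so by Theorem~\ref{thm:hybrid-erm-bound} together with chi-square concentration (Lemma~\ref{lem:chi-squared-concentration}) on the fresh sample, $\hat\sigma^2\in[\tfrac12\sigma^2,\,2\sigma^2]$. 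Since the grid multiplies $s_\ell^2$ by $e$ per step and $s_0^2=\Theta(\sigma^2)/(e\cdot d)$ up to absolute constants (because $\widehat\Var(X_i\mid X_j)=\Theta(d\sigma^2)$ from Step~1), the exit value obeys $s_{\ell^\star}=\Theta(\sigma)$, yielding both the claimed error bound and the sandwich $\tfrac12\le\Theta_{ii}\hat\sigma^2\le 2$.

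The main obstacle is the $\|w^\star\|_1=O(\sqrt d\,\sigma)$ estimate: this is the step where the SDD/walk-summable hypothesis enters crucially, via Lemma~\ref{lem:sdd-smooth-variance} (which fails outside the walk-summable class, as Example~\ref{example:big-cancellation} shows) combined with the row-sum inequality for SDD matrices. A secondary technical point is that the algorithm must choose $\lambda$ without knowledge of $\sigma$; this is finessed by the geometric search and the data-driven stopping rule, whose correctness rests on two-sided concentration of $\hat\sigma^2$ around the population risk of the hybrid ERM on the independent split~3.
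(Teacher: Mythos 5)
Your proposal follows essentially the same route as the paper's proof: the greedy step is controlled via Lemma~\ref{lem:bound-after-conditioning} and Lemma~\ref{lem:ols-sigmahat} to get $\Var(X_i\mid X_j)\le 8d/\Theta_{ii}$, the $\ell_1$-norm of the true coefficient vector is bounded by $O(\sqrt{d/\Theta_{ii}})$ using Lemma~\ref{lem:sdd-smooth-variance} together with Cauchy--Schwarz and the SDD row-sum inequality, the excess risk is then read off from Theorem~\ref{thm:hybrid-erm-bound}, and the geometric grid with the data-driven exit test on the fresh split is validated by concentration of $\hat\sigma^2$ around a population risk that can never drop below the Bayes risk $1/\Theta_{ii}$. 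The only differences are cosmetic (you bound $\Var(X_k\mid X_j)$ by $1/|\Theta_{ik}|+8d\sigma^2$ rather than by $3/|\Theta_{ik}|$, and you phrase the no-premature-exit argument via distance of feasible predictors from $w^\star$ rather than directly via the Bayes-risk lower bound), and neither affects correctness.
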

\begin{proof}
By Lemma~\ref{lem:bound-after-conditioning-ij}, for any $k \sim i$ we have
$\Var(X_i | X_j) \le 1/|\Theta_{ik}|$
therefore if we take $j^*$ which minimizes $\Var(X_i | X_{j^*})$ then
\[ \Var(X_i | X_{j^*}) \le 1/|\Theta_{ij}| \]
for all $j$. Similarly, applying Lemma~\ref{lem:bound-after-conditioning} we know that
\[ \Var(X_i | X_{j^*}) \le \frac{4d}{\Theta_{ii}} \]
By using Lemma~\ref{lem:ols-sigmahat} and taking the union bound over the randomness of sample set 1, we may assume that for every $j,k$, $\Var(X_k | X_j)/\sqrt{2} \le \widehat{\Var}(X_k | X_{j}) \le \sqrt{2}\Var(X_k | X_j)$,
with probability at least $1 - \delta/3$ as long as $m = \Omega(\log(n/\delta))$. We condition on this event.
Then for the $j$ chosen in step 1 of the algorithm, we have that
\[ \Var(X_i | X_j) \le \sqrt{2} \widehat{\Var}(X_i | X_j) \le \sqrt{2} \widehat{\Var}(X_i | X_{j^*}) \le 2 \Var(X_i | X_{j^*}) \le 2/|\Theta_{ik}| \]
for all $i \sim k$, and similarly
\begin{equation} \label{eqn:theta_ii-mult-estimate}
\Var(X_i | X_j) \le \frac{8d}{\Theta_{ii}}.
\end{equation}
Furthermore,
\[ \Var\left(\frac{X_k}{\sqrt{\widehat{\Var}(X_k | X_j)}} \middle| X_j\right) \le \sqrt{2} \]
and
\begin{align*}
 \sum_k \frac{|\Theta_{ik}|}{\Theta_{ii}} \sqrt{\widehat{\Var}(X_k | X_j)} 
 &\le  \sum_k \frac{|\Theta_{ik}|}{\Theta_{ii}} \sqrt{2\Var(X_k | X_j)} \\
 &\le \sum_k \frac{|\Theta_{ik}|}{\Theta_{ii}} \sqrt{2(1/|\Theta_{ik}| + \Var(X_i | X_j))} \\
 &\le \sum_k \frac{|\Theta_{ik}|}{\Theta_{ii}} \sqrt{2(3/|\Theta_{ik}|)} 
 = \frac{\sqrt{6}}{\Theta_{ii}} \sum_k \sqrt{|\Theta_{ik}|}
 \le \sqrt{6d/\Theta_{ii}}
\end{align*}   
using Lemma~\ref{lem:sdd-smooth-variance} in the second inequality and Cauchy-Schwartz
and the SDD property in the final inequality. Given \eqref{eqn:theta_ii-mult-estimate} we know that for one of the values of $\ell$ satisfies $e/\Theta_{ii} \le s_{\ell}^2 \le e^2/\Theta_{ii}$; call this $\ell^*$. By Theorem~\ref{thm:hybrid-erm-bound} we have that with probability at least $1- \delta/3$ that for all of the loop iterations where $1/\Theta_{ii} \le s_{\ell}^2$ (so the global optimal $w^*,a$ is in the constraint set) and $\ell \le \ell^*$
\begin{equation}\label{eqn:ell-star}
\E\left[\left(X_i - \sum_{k \notin \{i,j\}} w_k \frac{X_k}{\sqrt{\Var(X_k | X_j)}} - a X_j\right)^2\right] = O\left(\sqrt{1/\Theta_{ii}} \sqrt{24d/\Theta_{ii}} \sqrt{2} \sqrt{\frac{\log(n^2/\delta)}{m}}\right) 
\end{equation}
as long as $m = \Omega(\log(n/\delta))$, using that $d \le n$ in the union bound.
Condition on this and consider only the randomness over sample set 3. By Bernstein's inequality and the union bound over the loop iterations, with probability at least $1 - \delta/3$ as long as $m = \Omega(\log(n/\delta))$, for the above value of $\ell  = \ell^*$ we have that the test in 3(c) succeeds and the loop exits, and that if the loop exited in a previous iteration then $\frac{1}{\Theta_{ii}} = \Var(X_i | X_{\sim i}) \le s_{\ell}^2$ so we can apply the above guarantee \eqref{eqn:ell-star}, giving the result.
\end{proof}
\subsection{Structure learning}
\begin{figure}
\fbox{\begin{minipage}{\textwidth}
\vspace{.1cm}
Algorithm \textsc{HybridMB($\tau,\gamma,d$)}:
\begin{enumerate}
    \item We suppose the samples are split into $3$ equally sized sets as in the description of $\textsc{WS-Regression}$.
    \item For every node $i$, apply $\textsc{WS-Regression}$ which returns $w(i),a(i),j(i),\hat{\sigma}^2(i)$.
    \item Define $u(i)_{j(i)} = a(i)$ and $u(i)_{k} = \frac{w(i)_k}{\sqrt{\widehat{\Var}(X_k | X_j)}}$.
    \item Let $E = \{\}$.
    \item For every pair of nodes $a,b$:
    \begin{enumerate}
        \item If $u(a)_b^2 \hat{\sigma}^2(b) \ge \tau \hat{\sigma}^2(a)$ and $u(b)_a^2 \hat{\sigma}^2(a) \ge \tau \hat{\sigma}^2(b)$: add $(i,j)$ to $E$.
    \end{enumerate}
    \item Return edge set $E$.
\end{enumerate}
\end{minipage}}
\vspace{.1cm}
\end{figure}
\begin{theorem}\label{thm:structure-learning-via-hybrid}
Fix an SDD, $\kappa$-nondegenerate GGM.
Algorithm \textsc{HybridMB} with $\tau = \kappa^2/8,\gamma=2$ returns the true neighborhood of every node $i$ with probability at least $1 - \delta$ as long as $m \ge m'_1$, where $m'_1 = O((d/\kappa^4) \log(n/\delta))$ where $d$ is the max degree in the graph.
\end{theorem}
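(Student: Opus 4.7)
\textbf{Proof proposal for Theorem~\ref{thm:structure-learning-via-hybrid}.}
The plan is to fix a node $i$, apply Theorem~\ref{thm:ws-regression} to the output of \textsc{WS-Regression}, and then convert the $L_2$ excess-risk guarantee into coordinate-wise estimates on the vector $u(i)$ that are accurate enough to make the thresholding step in \textsc{HybridMB} succeed. Writing $u^*(i)_k := -\Theta_{ik}/\Theta_{ii}$, the true conditional expectation is $\E[X_i\mid X_{\sim i}]=\sum_{k\ne i} u^*(i)_k X_k$, and the algorithm's predictor is $\sum_k u(i)_k X_k$ (with the coordinate $j(i)$ expressed via $a(i)$). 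By Theorem~\ref{thm:ws-regression},
\[ \E\Big[\Big(\sum_{k\ne i} (u(i)_k-u^*(i)_k)X_k\Big)^2\Big] \;=\; O\!\left(\tfrac{1}{\Theta_{ii}}\sqrt{\tfrac{d\log(2n/\delta)}{m}}\right), \]
and simultaneously $\hat\sigma^2(i)\Theta_{ii}\in[1/2,2]$.

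Next I would turn this mean-squared-error bound into a per-coordinate bound. For a fixed $k\ne i$, the law of total variance applied to $Y=\sum_{\ell\ne i}(u(i)_\ell-u^*(i)_\ell)X_\ell$ gives
\[ \E[Y^2]\;\ge\;\Var(Y\mid X_{\sim i,\sim k})\;=\;(u(i)_k-u^*(i)_k)^2\,\Var(X_k\mid X_{\sim i,\sim k})\;\ge\;\frac{(u(i)_k-u^*(i)_k)^2}{\Theta_{kk}}, \]
since conditioning on more variables can only decrease variance and $\Var(X_k\mid X_{\sim k})=1/\Theta_{kk}$. Consequently
\[ (u(i)_k-u^*(i)_k)^2 \;\le\; \frac{\Theta_{kk}}{\Theta_{ii}}\cdot O\!\left(\sqrt{\tfrac{d\log(2n/\delta)}{m}}\right). \]
Multiplying by $\hat\sigma^2(k)\asymp 1/\Theta_{kk}$ and dividing by $\hat\sigma^2(i)\asymp 1/\Theta_{ii}$, the quantity $u(i)_k^2\,\hat\sigma^2(k)/\hat\sigma^2(i)$ that appears in the test of \textsc{HybridMB} has additive error $O(\sqrt{d\log(2n/\delta)/m})$ relative to $(u^*(i)_k)^2\,\Theta_{ii}/\Theta_{kk}=\Theta_{ik}^2/(\Theta_{ii}\Theta_{kk})$.

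Now the case analysis is clean. If $(i,k)$ is a true edge, $\kappa$-nondegeneracy gives $\Theta_{ik}^2/(\Theta_{ii}\Theta_{kk})\ge\kappa^2$, so provided the additive error is at most $\kappa^2/4$, we obtain $u(i)_k^2\hat\sigma^2(k)\ge (3\kappa^2/4)\hat\sigma^2(i)\ge \tau\,\hat\sigma^2(i)$ with $\tau=\kappa^2/8$. Conversely, if $\Theta_{ik}=0$ then $u^*(i)_k=0$ and the same additive bound yields $u(i)_k^2\hat\sigma^2(k)\le(\kappa^2/4)\hat\sigma^2(i)<\tau\,\hat\sigma^2(i)$. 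The required additive accuracy $\kappa^2/4$ is achieved as soon as $m=\Omega((d/\kappa^4)\log(n/\delta))$, matching the claimed sample complexity. A union bound over the $n$ applications of \textsc{WS-Regression} (absorbing $\log n$ into $\log(n/\delta)$) gives correctness of every test simultaneously; since the final edge decision uses the symmetric AND of the tests from $i$'s side and from $k$'s side, both true edges are kept and true non-edges are rejected, so the output equals the true edge set.

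The only mildly delicate point is that the excess-risk bound of Theorem~\ref{thm:ws-regression} is expressed in the reparameterized basis $X_k/\sqrt{\widehat\Var(X_k\mid X_{j(i)})}$, and one must check that when translated back to the $(u(i)_k)$ coordinates, the risk inequality $\E[Y^2]\ge (u(i)_k-u^*(i)_k)^2/\Theta_{kk}$ I used above still applies---this holds because the risk is defined intrinsically in terms of the predictor $\sum_k u(i)_k X_k$, which is invariant under reparameterization. The rest is routine bookkeeping: the multiplicative control $\hat\sigma^2(i)\Theta_{ii}\in[1/2,2]$ is part of the conclusion of Theorem~\ref{thm:ws-regression}, and the $\log(n/\delta)$ factor from the union bound is absorbed by the hypothesis.
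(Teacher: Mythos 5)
Your proposal follows essentially the same route as the paper's proof: invoke Theorem~\ref{thm:ws-regression} with a union bound over nodes, convert the excess prediction risk into a per-coordinate bound on $u(i)_k - u^*(i)_k$ via the law of total variance (the paper conditions on $X_{\sim k}$ and gets exactly $\Var(X_k\mid X_{\sim k})=1/\Theta_{kk}$; you condition on $X_{\sim i,\sim k}$ and lower bound by $1/\Theta_{kk}$ --- both work), and then threshold using $\kappa$-nondegeneracy together with the multiplicative control on $\hat\sigma^2$.

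One intermediate step is stated incorrectly, although your conclusion survives. From $(u(i)_k-u^*(i)_k)^2\,\Theta_{ii}/\Theta_{kk}\le\epsilon^2:=O\bigl(\sqrt{d\log(n/\delta)/m}\bigr)$ you assert that $u(i)_k^2\,\Theta_{ii}/\Theta_{kk}$ differs from $(u^*(i)_k)^2\,\Theta_{ii}/\Theta_{kk}$ by $O(\epsilon^2)$. Writing $A=u(i)_k\sqrt{\Theta_{ii}/\Theta_{kk}}$ and $B=u^*(i)_k\sqrt{\Theta_{ii}/\Theta_{kk}}$, what you actually know is $|A-B|\le\epsilon$, hence $|A^2-B^2|\le 2|B|\epsilon+\epsilon^2$; since $|B|=|\Theta_{ik}|/\sqrt{\Theta_{ii}\Theta_{kk}}$ can be as large as $1$, demanding $|A^2-B^2|\le\kappa^2/4$ as written would force $\epsilon\lesssim\kappa^2$, i.e.\ $m=\Omega(d\log(n/\delta)/\kappa^8)$, which is worse than claimed. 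The correct fix --- and what the paper does via the reverse triangle inequality --- is to stay on the square-root scale: in the edge case $|A|\ge|B|-\epsilon\ge\kappa-\epsilon$, so $A^2\ge 3\kappa^2/4$ once $\epsilon\le c\kappa$; in the non-edge case $B=0$, so $A^2\le\epsilon^2\le c'\kappa^2$. Both cases only require $\epsilon=O(\kappa)$, equivalently $\sqrt{d\log(n/\delta)/m}=O(\kappa^2)$, which recovers $m=\Omega((d/\kappa^4)\log(n/\delta))$. With that repair, and the routine factor-of-two bookkeeping for $\hat\sigma^2(i)$ and $\hat\sigma^2(k)$ against the threshold $\tau=\kappa^2/8$, your argument coincides with the paper's.
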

\begin{proof}
By Theorem~\ref{thm:ws-regression} and the union bound, we may assume with probability at least $1 - \delta$, as long as $m = \Omega((d/\kappa^4)\log(n/\delta))$ that for every node $i$ we have $u(i)$ such that 
\[ \E\left[\left(\E[X_i | X_{\sim i}] - \sum_{k \ne i} u(k) X_k \right)^2\right] \le \frac{\kappa^2}{16 \Theta_{ii}} \]
and $\hat{\sigma}^2(i)$ which is within a factor of $2$ of $1/\Theta_{ii}$. Applying the law of total variance and \eqref{eqn:ggm-conditional} we find that
\[ \left(\frac{u(k)}{\sqrt{\Theta_{kk}}} + \frac{\Theta_{ik}}{\Theta_{ii} \sqrt{\Theta_{kk}}}\right)^2 = \left(u(k) + \frac{\Theta_{ik}}{\Theta_{ii}}\right)^2 \Var(X_k | X_{\sim k}) \le \frac{\kappa^2}{64\Theta_{ii}} \]
so if $i$ and $k$ are not neighbors, then $\Theta_{ik} = 0$ so
\[ u(k)^2\hat{\sigma}^2(k) \le 2 u(k)^2/\Theta_{kk} \le \frac{\kappa^2 \hat{\sigma}_i^2}{16} \]
and if they are then $|\Theta_{ik}| \ge \kappa\sqrt{\Theta_{ii}\Theta_{kk}}$ so using the reverse triangle inequality 
\[ u(k)^2\hat{\sigma}^2(k) \ge (1/2)u(k)^2/\Theta_{kk} \ge (1/2)(\kappa^2/\sqrt{\Theta_{ii}} - \kappa/8\sqrt{\Theta_{ii}}) \ge (7/16) \kappa^2/\sqrt{\Theta_{ii}} \ge (7/32) \kappa^2 \hat{\sigma}^2(i). \]
From these inequalities we see that in step 5 (a) exactly the correct edges are chosen.
\end{proof}
\begin{theorem}\label{thm:greedy-and-prune-sdd}
Fix an SDD, $\kappa$-nondegenerate GGM.
Algorithm \textsc{GreedyAndPrune} with $\tau = \kappa^2/8$ and $T = \Theta(d/\kappa^2)$ returns the true neighborhood of every node $i$ with probability at least $1 - \delta$ as long as $m = \Omega((d^2/\kappa^6) \log(n/\delta))$ where $d$ is the max degree in the graph.
\end{theorem}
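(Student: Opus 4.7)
The strategy is to leverage the structural results for SDD models (Lemma~\ref{lem:bound-after-conditioning}, Lemma~\ref{lem:sdd-smooth-variance}) to show that after one step of OMP, the remaining regression problem is an instance of hybrid $\ell_1$ regression with favorable parameters, and then apply the OMP guarantee of Theorem~\ref{thm:omp-hybrid-regression}. The plan is as follows. First, I would analyze the first step of OMP separately. By Lemma~\ref{lem:bound-after-conditioning} some neighbor $j^*$ satisfies $\Var(X_i\mid X_{j^*}) \le 4d/\Theta_{ii}$, and by combining Lemma~\ref{lem:ols-sigmahat} with a union bound over single conditioning nodes, OMP step 1 returns some $j$ with $\Var(X_i\mid X_j) = O(d/\Theta_{ii})$. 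Moreover, since $j$ minimizes this quantity (up to constants) and Lemma~\ref{lem:bound-after-conditioning-ij} bounds $\Var(X_i\mid X_{k}) \le 1/|\Theta_{ik}|$ for any neighbor $k$, we also get $\Var(X_i\mid X_j) \le 2/|\Theta_{ik}|$ for every neighbor $k$ of $i$.

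Next, I would exhibit the hybrid $\ell_1$ structure after conditioning on $X_j$. Writing $\E[X_i\mid X_{\sim i}] = (w_j+c)X_j + \sum_{k\ne i,j} w_k (X_k - \E[X_k\mid X_j])$ with $w_k = -\Theta_{ik}/\Theta_{ii}$ and rescaling $\tilde X_k = (X_k - \E[X_k\mid X_j])/\sqrt{\Var(X_k\mid X_j)}$, we have $R=1$. The $\ell_1$ bound uses Lemma~\ref{lem:sdd-smooth-variance}:
\[
\Var(X_k\mid X_j) \le 1/|\Theta_{ik}| + \Var(X_i\mid X_j) \le 2/|\Theta_{ik}|
\]
for each neighbor $k$, and by Cauchy--Schwarz and the SDD property,
\[
\|\tilde w\|_1 = \sum_k |w_k|\sqrt{\Var(X_k\mid X_j)} \;\le\; \frac{\sqrt{2}}{\Theta_{ii}} \sum_{k\sim i}\sqrt{|\Theta_{ik}|} \;\le\; \sqrt{2d/\Theta_{ii}}.
\]
So in the hybrid model language we have $\sigma^2 = 1/\Theta_{ii}$ and $R^2 W^2 = O(d/\Theta_{ii})$, giving $R^2W^2/\sigma^2 = O(d)$.

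With this in hand I would invoke Theorem~\ref{thm:omp-hybrid-regression} (applied to the remaining $T-1$ iterations, with $Z = X_j$). Taking the target precision $\epsilon = \kappa^2$, the theorem gives $\Var(\E[X_i\mid X_{\sim i}]\mid X_{S_T}) \le \kappa^2/\Theta_{ii}$ after $T-1 = \Omega(R^2W^2/(\epsilon\sigma^2)) = \Omega(d/\kappa^2)$ iterations provided $m = \Omega(\tfrac{R^2W^2}{\epsilon^2\sigma^2}(T\log n + \log(1/\delta))) = \Omega((d/\kappa^4)(T\log n + \log(1/\delta))) = \Omega((d^2/\kappa^6)\log(n/\delta))$, matching the claimed bound. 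By Lemma~\ref{lem:kappa-variance} this forces $S_T$ to contain the entire neighborhood of $i$. The pruning step is then analyzed exactly as in Lemma~\ref{lem:node-recovery-ferromagnetic}: for each $j' \in S_T$ of size at most $T$, Lemma~\ref{lem:estimating-variance-decrement} and Lemma~\ref{lem:kappa-variance} imply that the $\nu = \kappa^2/8$ threshold is crossed exactly when $j'$ is a true neighbor, with a union bound over the $\le n^T$ candidate sets giving the required sample complexity. A final union bound over the $n$ nodes completes the argument.

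The main technical obstacle is bridging between the idealized OMP analysis of Theorem~\ref{thm:omp-hybrid-regression}, which treats $Z$ specially, and the behavior of \textsc{GreedyAndPrune}, which runs pure OMP on all variables from the beginning. The first step of OMP may not produce the same $X_j$ in every realization, but it produces \emph{some} $j$ witnessing $\Var(X_i\mid X_j) = O(d/\Theta_{ii})$, and then for each subsequent step the empirical improvement of OMP's chosen coordinate is at least that of the coordinate guaranteed by Lemma~\ref{lem:hybrid-greedy-step}; verifying this up to concentration (via Lemma~\ref{lem:estimating-variance-decrement-v}, with the $R^2W^2$ bound above) is what drives the sample complexity and is the place where the $d^2/\kappa^6$ scaling arises through the square of the $R^2W^2/(\epsilon^2\sigma^2)$ factor.
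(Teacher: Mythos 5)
Your proposal is correct and follows essentially the same route as the paper: the paper proves this theorem by combining the first-greedy-step and $\ell_1$-norm bound from the proof of Theorem~\ref{thm:ws-regression} (via Lemmas~\ref{lem:bound-after-conditioning}, \ref{lem:bound-after-conditioning-ij}, and \ref{lem:sdd-smooth-variance}, Cauchy--Schwarz, and the SDD property) with Theorem~\ref{thm:omp-hybrid-regression} in place of the ERM bound, and then the pruning analysis from Theorem~\ref{thm:greedy-ferromagnetic}, which is exactly your decomposition and yields the same $R^2W^2 = O(d/\Theta_{ii})$, $T = \Theta(d/\kappa^2)$, and $m = \Omega((d^2/\kappa^6)\log(n/\delta))$ accounting.
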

\begin{proof}
The proof is the same as for Theorem~\ref{thm:structure-learning-via-hybrid} except that we use Theorem~\ref{thm:omp-hybrid-regression} instead of Theorem~\ref{thm:hybrid-erm-bound}, and use the 
slightly different pruning analysis from the proof of Theorem~\ref{thm:greedy-ferromagnetic}.
\end{proof}
\begin{remark}[Implementation]
In experiments, to reduce the number of free parameters in \textsc{HybridMB} we define $\gamma' = 2d \gamma^2$ and note that using $\gamma'$ instead of $\gamma$ actually allows $d$ to be eliminated as a parameter. We also use a single sample set instead of sample splitting; we expect that the algorithm can still be proved correct without the splitting, at the cost of a more lengthy analysis.
\end{remark}
\begin{remark}[Guarantees under $\ell_1$-bounded assumption]\label{rmk:l1-bounded}
For completeness, we state results for our algorithms under the $\ell_1$-bounded assumption used in previous work like \cite{cai2011constrained,cai2016estimating}. This is straightforward, as we can ignore
the analysis of the first step and simply use the a priori estimate for the $\ell_1$ norm, which only shrinks under conditioning. Following the proofs of Theorem~\ref{thm:structure-learning-via-hybrid} and Theorem~\ref{thm:greedy-and-prune-sdd} give that \textsc{HybridMB} achieves a sample complexity of
$O(\frac{M^2 \log(n/\delta)}{\kappa^4})$ for structure recovery under the assumption that the rows of $\Theta$ are bounded in $\ell_1$ norm by $M$, and \textsc{GreedyAndPrune} achieves a sample complexity of $O(\frac{M^4 \log(n/\delta)}{\kappa^6})$. We note that the former guarantee is as good as \cite{cai2016estimating}, which itself improves on the guarantee in \cite{cai2011constrained}.
\end{remark}
\section{Simulations and Experiments}\label{sec:simulations}
In this section, we will compare our proposed method (\textsc{GreedyAndPrune}) with popular methods previously introduced in the literature: the Graphical Lasso \cite{friedman2008sparse}, the Meinhausen-B\"uhlmann estimator (based on the LASSO) \cite{meinshausen2006high}, \textsc{CLIME} \cite{cai2011constrained}, and \textsc{ACLIME} \cite{cai2016estimating} (an adaptive version of \textsc{CLIME}). In the first subsection, we consider simple attractive GGMS and show that our method always performs well compared to previous methods and sometimes outperforms them considerably. In the second subsection, we compare the performance on a real dataset (from \cite{buhlmann2014high}) and show that our methods \textsc{HybridMB} and \textsc{GreedyAndPrune} again compare favorably. Our experiment also gives evidence that walk-summability is a reasonable assumption in practice.
\subsection{Simple attractive GGMs where previous methods perform poorly}
Three of the most popular methods for recovering a sparse precision matrix in practice are the Graphical Lasso (glasso) \cite{friedman2008sparse}, the Meinhausen-B\"ulhmann estimator (MB) based on the Lasso \cite{meinshausen2006high}, and the CLIME estimator \cite{cai2011constrained}. The graphical lasso is the $\ell_1$-penalized variant of the MLE (Maximum Likelihood Estimator) for the covariance matrix; CLIME minimizes the $\ell_1$-norm of the recovered precision matrix $\hat{\Theta}$, given an $\ell_{\infty}$ constraint $|\Sigma \Omega - Id|_{\infty} \le \lambda$ (where $|M|_{\infty} = \|M\|_{1 \to \infty}$ is the entrywise max-norm). For Meinhausen-B\"uhlmann, we let the estimated $\hat{\Theta}$ have its rows be given by the appropriate lasso estimate, scaled appropriately by the corresponding estimate for the conditional variance. The current theoretical guarantees of these methods have very high sample complexity for general GFFs and we find simple examples in which the scaling of their sample complexity with $n$ is poor. One example (which breaks the Meinhausen-B\"uhlmann estimator) is simply based off of a simple random walk observed at large times; the other examples we use are simple combinations of a path and cliques:
\begin{example}[Path and cliques]\label{example:path-clique}
Fix $d$ and suppose $n/2$ is a multiple of $d$. Let $B$ be a standard Brownian motion in 1 dimension, and
let $X_1,\ldots,X_{n/2}$ be the values of the $B$ at equally spaced points in the interval $[1/2,3/2]$, i.e. $X_1 = B(1/2),X_2 = B((1/2) + 1/(n - 1)), \ldots$ Equivalently, let the covariance matrix of this block be $\Cov(X_i,X_j) = 1/2 + \min(i,j)/n$, or take the Laplacian of the path and add the appropriate constant to the top-left entry.

Let the variables $X_{n/2 + 1,\ldots,X_n}$ be independent of the Brownian motion, and let their precision matrix be block-diagonal with $d \times d$ blocks of the form $\Theta_1$ where $\Theta_1$ is a rescaling of $\Theta_0$ so that the coordinates have unit variance, and $\Theta_0 = I - (\rho/d) \vec{1} \vec{1}^T$ where $\rho \in (0,1)$. In all experiments, we finally standardize the variables to have unit variance, following the usual recommendation (although the variances in this example are already bounded between $0.5$ and $1.5$).
\end{example}

The results of running all methods\footnote{For the Graphical Lasso we used the standard R packages recommended in the original papers. For \textsc{CLIME}, we originally tested the standard R package but it was unable to reconstruct a path, presumably due to numerical issues. To fix this, we reimplemented \textsc{CLIME} using Gurobi and used a similar implementation for \textsc{ACLIME}.} on samples from this model are shown in Figure~\ref{fig:path-clique} for the Frobenius error with a fixed number of samples ($m = 150$) where the clique degree is $d = 4$ and the edge strength is $\rho=0.95$. In Figure~\ref{fig:path-clique-samples} we show the number of samples needed to recover the true edge structure for the same example with $d = 4$ in two cases, $\rho=0.7$ and $\rho=0.95$. We note that our definition of structure recovery is fairly generous --- we apply a thresholding operation to the returned $\Theta$ matrix using the true value of $\kappa/2$, so the algorithms are not penalized for returning matrices with many small nonzero entries (which happens in practice at the optimal tuning of parameters, even though in the theory of e.g. \cite{meinshausen2006high} neighborhood estimates are made just from the support of the lasso estimate).

Note in particular that from Figure~\ref{fig:path-clique-samples}, we see the sample complexity of \textsc{GreedyPrune} scales like $O(\log(n))$, the information-theoretic optimal scaling which is in agreement with Theorem~\ref{thm:greedy-ferromagnetic}, while in the first example ($\rho = 0.7$) the sample complexity of the Graphical Lasso scales roughly like $\Theta(n)$ and in the second example ($\rho = 0.95$) the same is true for CLIME.


Recall that these examples are well-outside of the regime where the theoretical guarantees for methods like \textsc{CLIME} and Graphical Lasso can guarantee accurate reconstruction from $O(polylog(n))$ sammples, which is one reason we might expect them to be hard in practice. For example, the analysis of \textsc{CLIME} requires a bound on the entries of the inverse covariance (after rescaling the coordinates to have variance $\Theta(1)$), but for the path Laplacian the entries of the precision matrix are of order $\Theta(n)$. 

We describe one additional intuition as to why the Graphical Lasso should fails on this example: for the penalty $\lambda \|\hat{\Theta}\|_1$ to respect the structure of the path (where conditional variances are small) $\lambda$ should be chosen small, but then the nodes in the cliques may gain spurious edges to the path and other cliques. With \textsc{CLIME} there is a similar concern that the $\ell_1$ penalty for the two types of nodes does not scale properly. Different regularization parameters for the different types of edges could help in this particular example --- however, it is typically difficult know beforehand which nodes have small and big conditional variances without effectively learning the GGM, as the way to show a node has low conditional variance almost always involves finding a good predictor of it from the other nodes. Concretely, in the case of \textsc{ACLIME}, it performed significantly worse than  \textsc{CLIME} in most of our tests. On the other hand, the rescaling performed by our proposed algorithm \textsc{HybridMB} does resolve this issue in a principled way.  

In the above two examples we tried, the (thresholded) Meinhausen-B\"uhlmann estimator successfully achieved similar sample complexity to our proposed methods, despite the fact that this example is again well outside of the regime where its theoretical guarantees are good. However, as we see in Figure~\ref{fig:free-path} the sample complexity of this estimator is poor in another very simple example: a simple random walk with Gaussian steps run from times $n$ to $2n$. (As before, this is the description of the model before standardizing coordinates to variance $1$.) This is again not so surprising, as we know the Lasso (which the MB method is based upon) can only be guaranteed to obtain its ``slow rate'' guarantee when the coordinates of the input are highly dependent, and the slow rate guarantee for Lasso depends on norm parameters that are not sufficiently small in our example for good recovery guarantee. 
\begin{figure}
    \centering
    \hspace*{1.7cm}\includegraphics[scale=0.66]{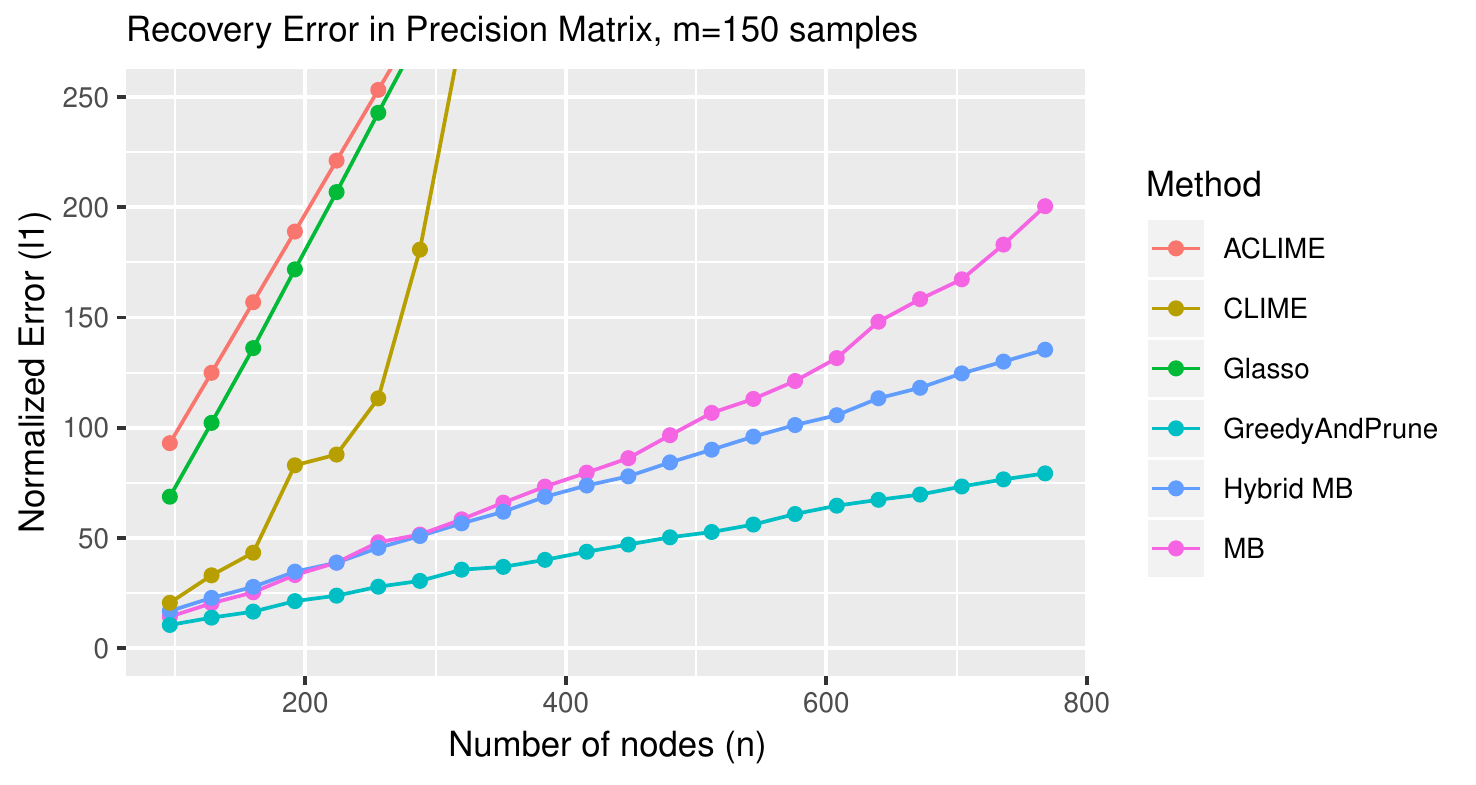}
    \caption{Normalized error (measured by $\|\hat{\Theta} - \Theta\|_1/n$ where $\|\cdot\|_1$ denotes the $\ell_1$ norm viewing the matrix as a vector) in the precision matrix returned in Example~\ref{example:path-clique} with $\rho = 0.95$. We note that this quantity should be expected to scale at least linearly, because some entries of $\Theta$ grow with $n$.
    Errors were averaged over 8 trials for each $n$
    and hyperparameters 
    were chosen by grid search minimizing the recovery error in a separate trial, for each value of $n$. The tested parameters for $\lambda$ in glasso were chosen from a log grid with 15 points from $0.0005$ to $0.4$, similarly for $\lambda$ in MB, from 8 points from $1$ to $32$ for $\gamma'$ in \textsc{HybridMB} (we set $\tau = 0$ for a more direct comparison to MB), for CLIME from a log grid with 15 points from $0.01$ to $0.8$, and for \textsc{GreedyAndPrune} $k$ from a rounded log grid with 7 points from $3$ to $24$ and $\nu$ from a log grid with 8 points from $0.001$ to $0.1$.}
    \label{fig:path-clique}
\end{figure}
\begin{figure}
    \centering
    \begin{subfigure}{.49\linewidth}
    \centering
    \hspace*{0.03\linewidth}
    \includegraphics[scale=0.57]{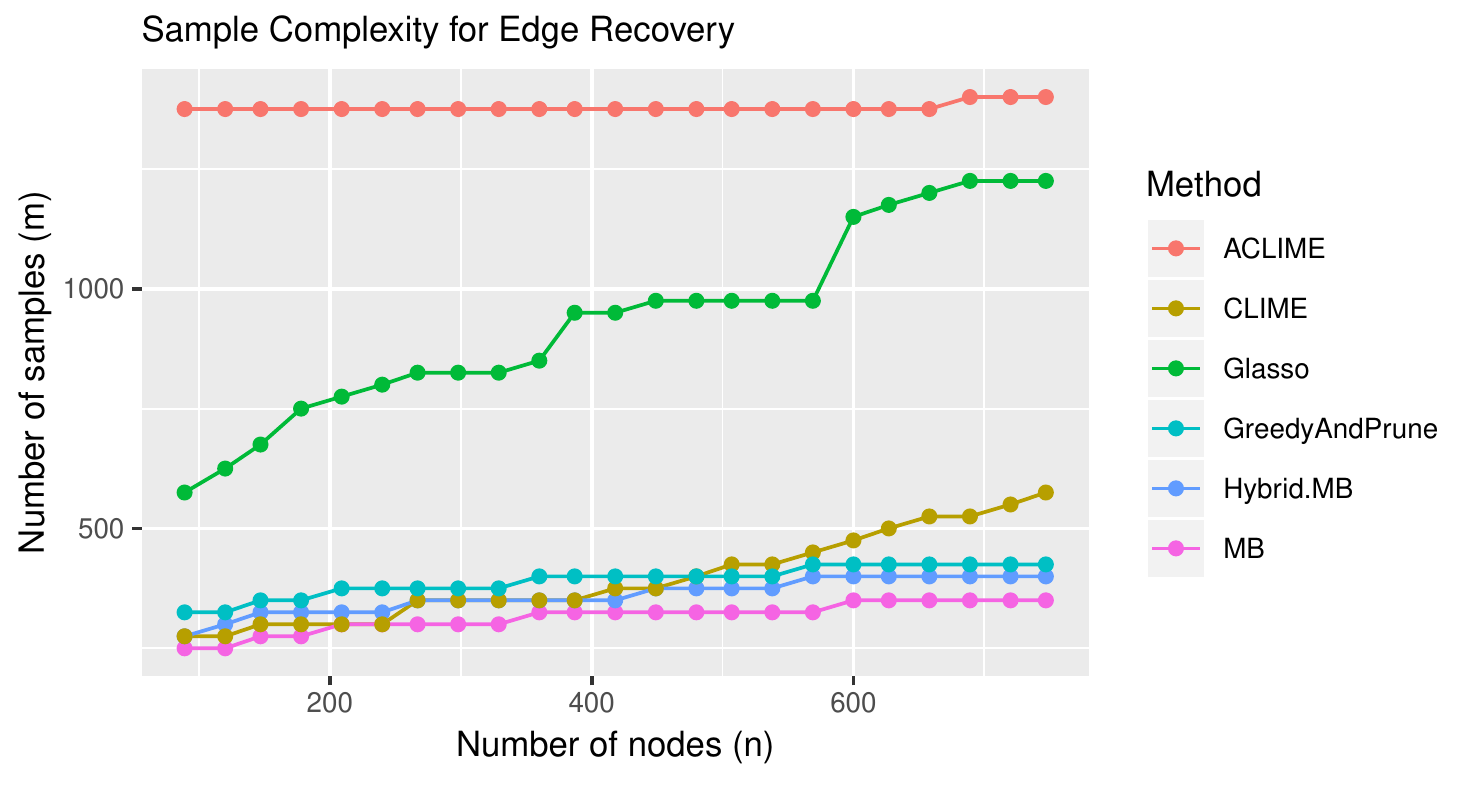}
    \caption{$d = 4$ and $\rho = 0.7$}
    \end{subfigure}
   \begin{subfigure}{.49\linewidth}
   \centering
   \hspace*{.05\linewidth}
   \includegraphics[scale=0.57]{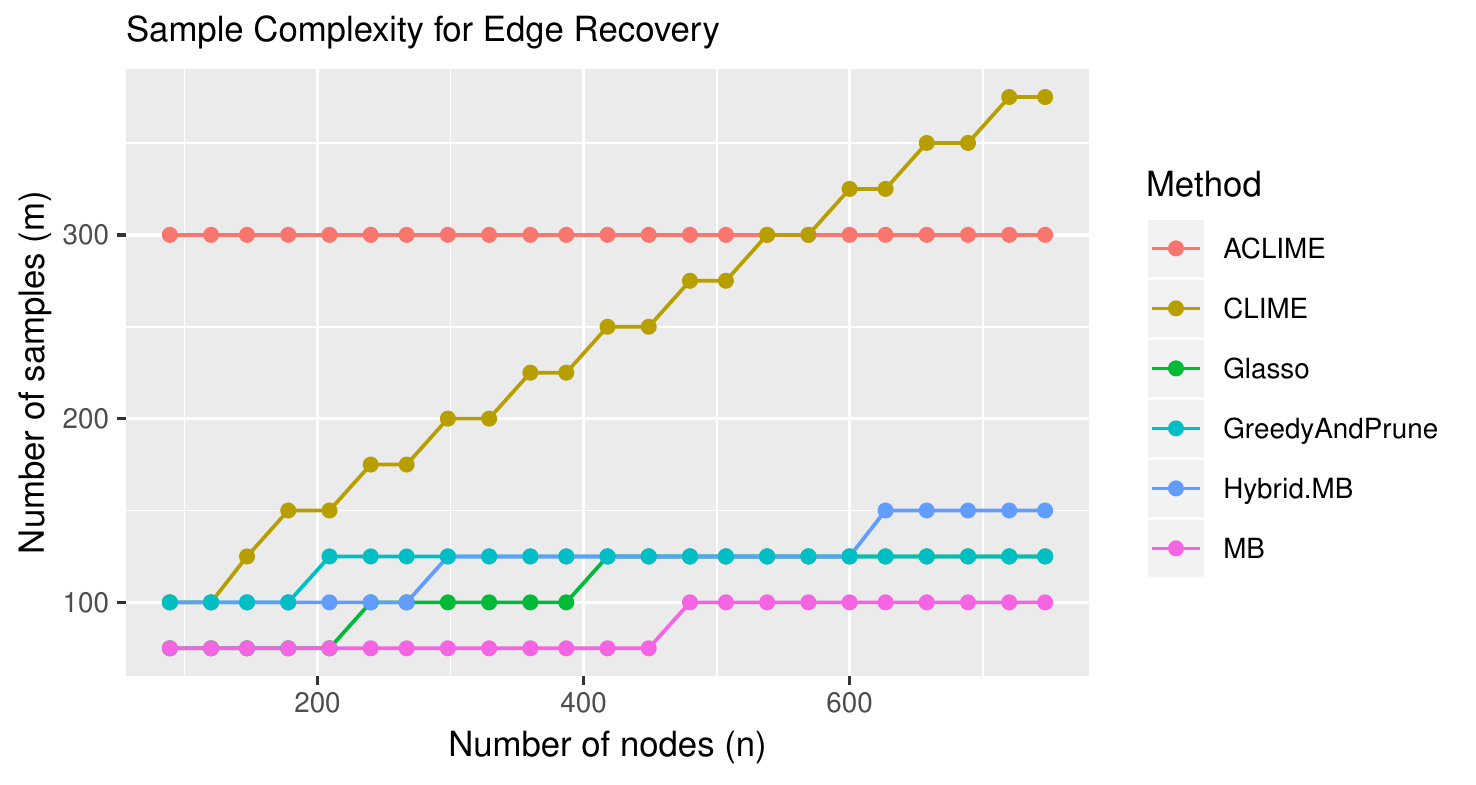}
   \caption{$d = 4$ and $\rho = 0.95$}
   \end{subfigure}
    \caption{Number of samples needed to approximately recover true edge structure after thresholding using the test $\frac{|\hat{\Theta}_{ij}|}{\sqrt{\hat{\Theta}_{ii}\hat{\Theta}_{jj}}} > \kappa/2$, where $\kappa$ is the $\kappa$ for the true precision matrix from the information-theoretic assumption \eqref{eqn:kappa-assumption}. Samples are drawn from the model in Example~\ref{example:path-clique} with two different values for the edge strength $\rho$.
    Note that the sample complexity of \textsc{GreedyPrune} is consistent with the $O(\log(n))$ bound established in Theorem~\ref{thm:greedy-ferromagnetic}, whereas the graphical lasso and CLIME have sample complexity that appears to be roughly $\Theta(n)$ in the left and right examples respectively.
    The $m$ shown is the minimal number of samples needed for the average number of incorrect edges per node (counting both insertions and deletions) to be at most 1. 
    Trials and parameter selection was performed the same way as in the experiment for Figure~\ref{fig:path-clique}, except that the parameters were chosen to minimize the number of incorrect edges, instead of error in the $\ell_1$ norm.}
    \label{fig:path-clique-samples}
\end{figure}
\begin{figure}
    \centering
    \includegraphics[scale=0.57]{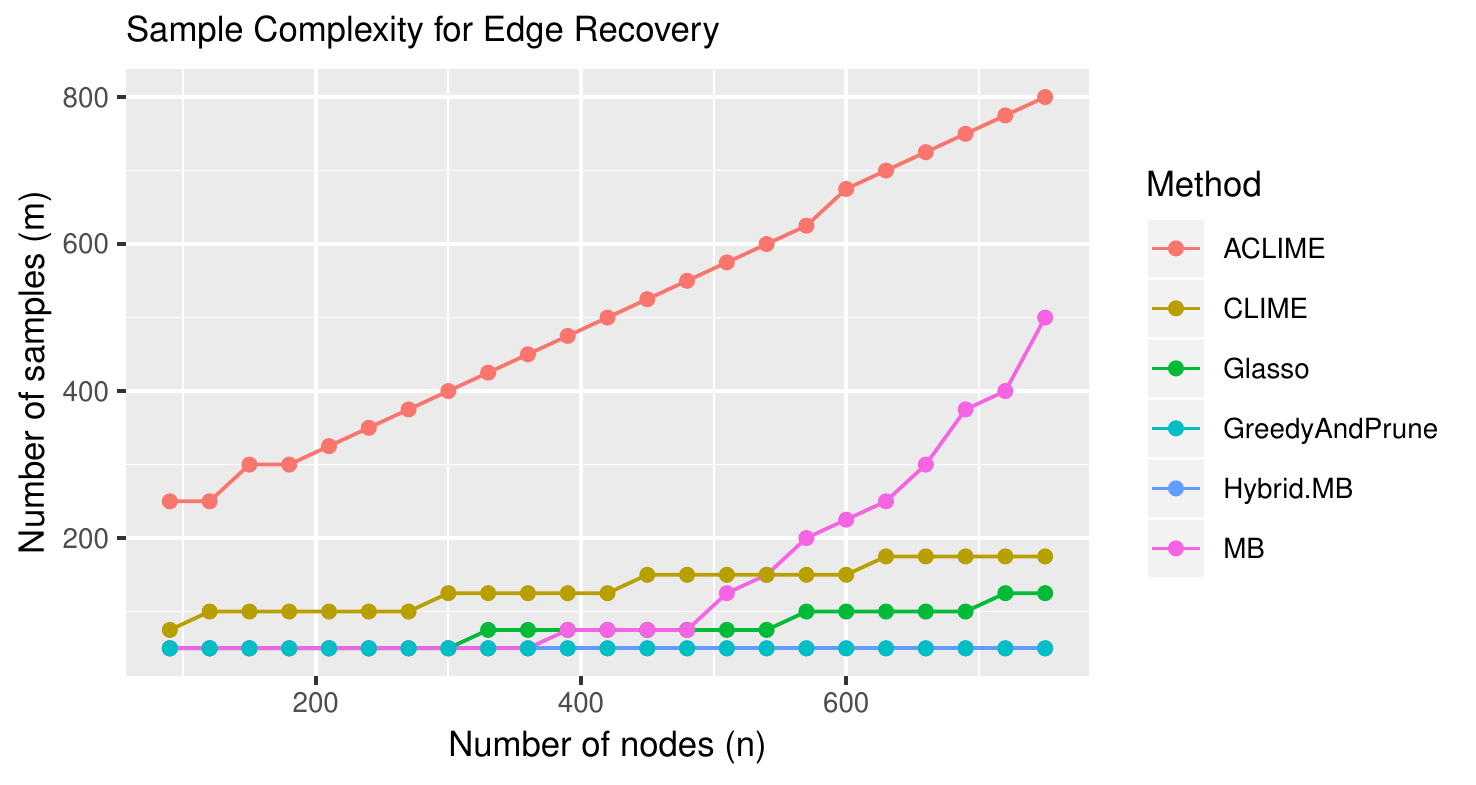}
    \caption{Large initial time simple random walk example: the setup is the same as in Figure~\ref{fig:path-clique-samples}, except that the ground truth model is a Gaussian simple random walk observed from times $n$ to $2n$. We observe in this example that the sample complexity of ACLIME and the Lasso-based Meinhausen-B\"uhlmann estimator appear to scale roughly linearly in $n$, whereas the sample complexity of \textsc{GreedyAndPrune} and \textsc{HybridMB} is in fact constant over the observed values of $n$. }
    \label{fig:free-path}
\end{figure}
\subsection{Results for Riboflavin dataset}
In this section we analyze the behavior of recovery algorithms
on a popular dataset provided in \cite{buhlmann2014high}. This
dataset has $m = 71$ samples and describes (log) expression levels for $n = 100$ genes in \emph{B. subtilis}. We compared all of the methods listed above; our tables do not list the \textsc{ACLIME} results because it did not achieve nontrivial reconstruction (it's CV error as defined below was $0.98$, which is essentially the same as the score for returning the identity matrix).
We selected parameters using a 5-fold crossvalidation with the following least-squares style crossvalidation objective\footnote{An alternative which is sometimes used is the likelihood objective $\Tr(\hat{\Sigma}\hat{\Theta}) - \log\det(\hat{\Theta})$, but this objective is not very smooth due to the $\log \det$ term and may equal $\infty$ even for entry-wise ``good'' reconstructions.}, after standardizing the coordinates to each have empirical variance $1$ and mean 0:
\[ E(\hat{\Theta}) := \frac{1}{nm_{holdout}} \sum_{i = 1}^n \sum_{k = 1}^{m_{holdout}} (X_i^{(k)} + \sum_{j \ne i} \frac{\hat{\Theta}_{ij} + \hat{\Theta}_{ji}}{2\hat{\Theta}_{ii}} X^{(k)}_i)^2. \]
Note that the true $\Theta$ minimizes this objective as $m_{holdout} \to \infty$, making it equal to the sum of conditional variances; when the initial variances are set to 1, this objective simply measures the average amount of variance reduction achieved over the coordinates. 

\begin{table}
    \centering
    \begin{tabular}{l|c|c|c|c|c|c}
         Method & CV Error & CV Parameters & \# Non-zeros & Cond. No. & $M$ & $\Delta_{WS}$ \\ 
         \hline
         Graphical Lasso & 0.13 & $\lambda = 0.01$ & 4378 & 968.6 &
         54.8 &  8.7 \%\\
         \textsc{CLIME} & 0.41 & $\lambda = 0.21$ & 806 & 193.8 & 232.2 & 0.0 \% \\
         \textsc{GreedyAndPrune} & 0.27 & $k = 13, \nu = 0.01$ & 476 & 389.4 & 224 & 1.1 \% \\
         \textsc{MB} & 0.17 & $\lambda = 0.05$ & 1854 & 21439 & 156 & 1.1 \% \\
         \textsc{HybridMB} & 0.19 & $\gamma' = 21$ & 2758 & 1080843 & 324 & 2.2 \% \\
    \end{tabular}

    \caption{Results for precision matrix selected via 5-fold CV on Riboflavin dataset. The last 4 columns give summary statistics for the final recovered $\hat{\Theta}$ using the CV parameters on the entire dataset: $M$ is the maximum $\ell_1$ row norm for any row of $\Theta$, the same as in the guarantee for CLIME cited earlier. The walk-summable relative error is $\Delta_{WS} := \frac{\|\tilde{\Theta} - \hat{\Theta}\|_F}{\|\hat{\Theta}\|_F}$ where $\tilde{\Theta}$ is the closest walk-summable matrix to $\hat{\Theta}$ in Frobenius norm. This shows that all of the estimated precision matrices are either walk-summable or close to walk-summable.}
    \label{table:riboflavin}
\end{table}
\begin{table}
    \centering
    \begin{tabular}{c|c}
         Method & Runtime (seconds) \\
         \hline
         Graphical Lasso & 0.74 \\
         \textsc{CLIME} & 2.12 \\
         \textsc{GreedyAndPrune} & 0.19 \\
         \textsc{MB} & 0.48 \\
         \textsc{HybridMB} & 1.84
    \end{tabular}
    \caption{Sequential runtime of methods on Riboflavin dataset with CV parameters, averaged over 10 runs. In all experiments, the graphical lasso implementation was from the glasso R package, CLIME was implemented by calling Gurobi from R (due to numerical limitations of the standard package), \textsc{MB} and \textsc{HybridMB} were implemented using the glmnet package, and for \textsc{GreedyAndPrune} we used a naive R implementation.}
    \label{tab:runtime}
\end{table}

The results of the cross-validation process\footnote{Essentially the same as before, parameters for Graphical Lasso were chosen from a log-scale grid from $0.001$ to $0.5$ with $15$ points, for CLIME similarly from $0.01$ to $0.8$ with $20$ points, and for \textsc{GreedyAndPrune} from a rounded log-scale grid from $3$ to $26$ with $7$ points and from $0.001$ to $0.1$ with $8$ points.} are shown in Table~\ref{table:riboflavin}. 
As we see from the first 2 columns of the table, Graphical Lasso achieved the greatest amount of variance reduction but returned the densest estimate for $\Theta$, \textsc{MB} and \textsc{HybridMB} had slightly less variance reduction, \textsc{GreedyAndPrune} had the sparsest estimate and achieved significantly more variance reduction that \textsc{CLIME}. We see that the chosen precision matrices have large condition number and row $\ell_1$-norm $M$, comparable to the number of nodes $n$, which is significant in that known guarantees for Graphical Lasso, MB, \textsc{CLIME} and \textsc{ACLIME} are only interesting when these quantities are small (e.g. constant or $O(\log n)$). (Equivalently, the gap between variance and conditional variance is large; we note that the true gap may be even larger if we had access to more data, since we might be able to find even better estimators for each $X_i$ given the other coordinates.) On the other hand, the recovered matrices are not far from walk-summable in Frobenius norm, suggesting that this is indeed a reasonable assumption.

In Table~\ref{tab:runtime} we record the sequential runtimes of all of the methods on this dataset using the CV parameters. \textsc{GreedyAndPrune} was the fastest method. For larger datasets it is important to use parallelism, and we note we note that CLIME, \textsc{MB}, \textsc{Hybrid.MB} and \textsc{GreedyAndPrune} are ``embarassingly parallelizable'', as each node can be solved independently, but this is not the case for the Graphical Lasso. In practice, on our synthetic datasets and using 24 cores, \textsc{CLIME} becomes faster than the Graphical Lasso and \textsc{GreedyAndPrune} stays the fastest. In our experiment, we did not test our proposed method \textsc{SearchAndValidate} or the methods of \cite{misra18}, although they have good sample complexity guarantees, due to computational limitations; in \cite{misra18}, they report their methods requires on the order of days to run on this example.

We also performed a ``semi-synthetic'' experiment on this dataset, by taking the recovered (dense) $\Theta$ from Graphical Lasso, thresholding it to have $\kappa = 0.15$ and computing the sample complexity to recover the edges of the graphical model from sampled data (as in the synthetic experiments, with error of at most $0.25$ incorrect edges per node, after thresholding at $\kappa/2$). All methods performed similarly on this test: the results are shown in Table~\ref{table:semisynthetic}. 
\begin{remark}
Several papers have been written on faster implementations of the graphical lasso, e.g. the Big \& Quic estimator of \cite{bignquic}. However, these methods have mostly been developed/tested in the regime where $\lambda$ is quite large: e.g. the documentation for the R package BigQuic implementing Big \& Quic suggests using $\lambda \ge 0.4$ and that $\lambda = 0.1$ is too small to run in a reasonable time on large datasets. In practice, these methods may even fail to return the true optimum when given small $\lambda$; however, the above experiment suggests this is an important regime in practice. 
\end{remark}
\begin{table}
    \centering
    \begin{tabular}{c|c|c}
         Method & Number of Samples Needed & Optimal Parameters \\
         \hline
         Graphical Lasso & 500 & $\lambda = 0.005$ \\
         \textsc{CLIME} & 550 & $\lambda = 0.04$ \\
         \textsc{GreedyAndPrune} & 550 & $k = 6, \nu = 0.01$ \\
         \textsc{MB} & 550 & $\lambda = 0.01$ \\
         \textsc{HybridMB} & 525 & $\gamma' = 21$
    \end{tabular}
    \caption{Number of samples needed to achieve error of at most $0.25$ incorrect edges per node after thresholding in the semi-synthetic experiment: samples were drawn from a $\Theta$ given by thresholding the graphical lasso estimate from the Riboflavin dataset. The details of the thresholding, etc. are the same as in the synthetic experiment of Figure~\ref{fig:path-clique-samples}. }
    \label{table:semisynthetic}
\end{table}

\begin{figure}
    \centering
    \begin{subfigure}{0.49\textwidth}
    \centering
    \includegraphics[scale=0.3]{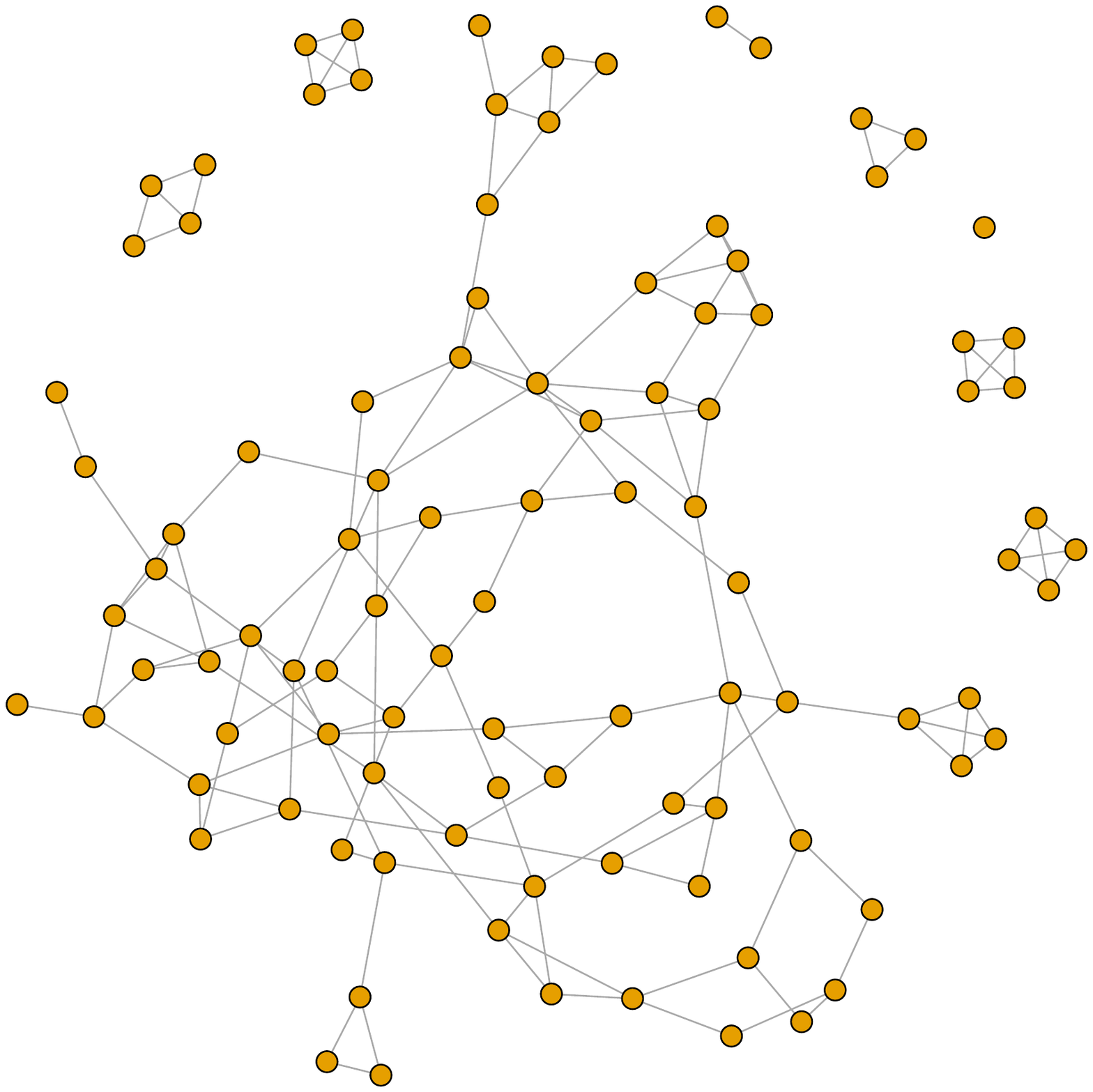}
    \end{subfigure}
    \begin{subfigure}{0.49\textwidth}
    \centering
    \includegraphics[scale=0.3]{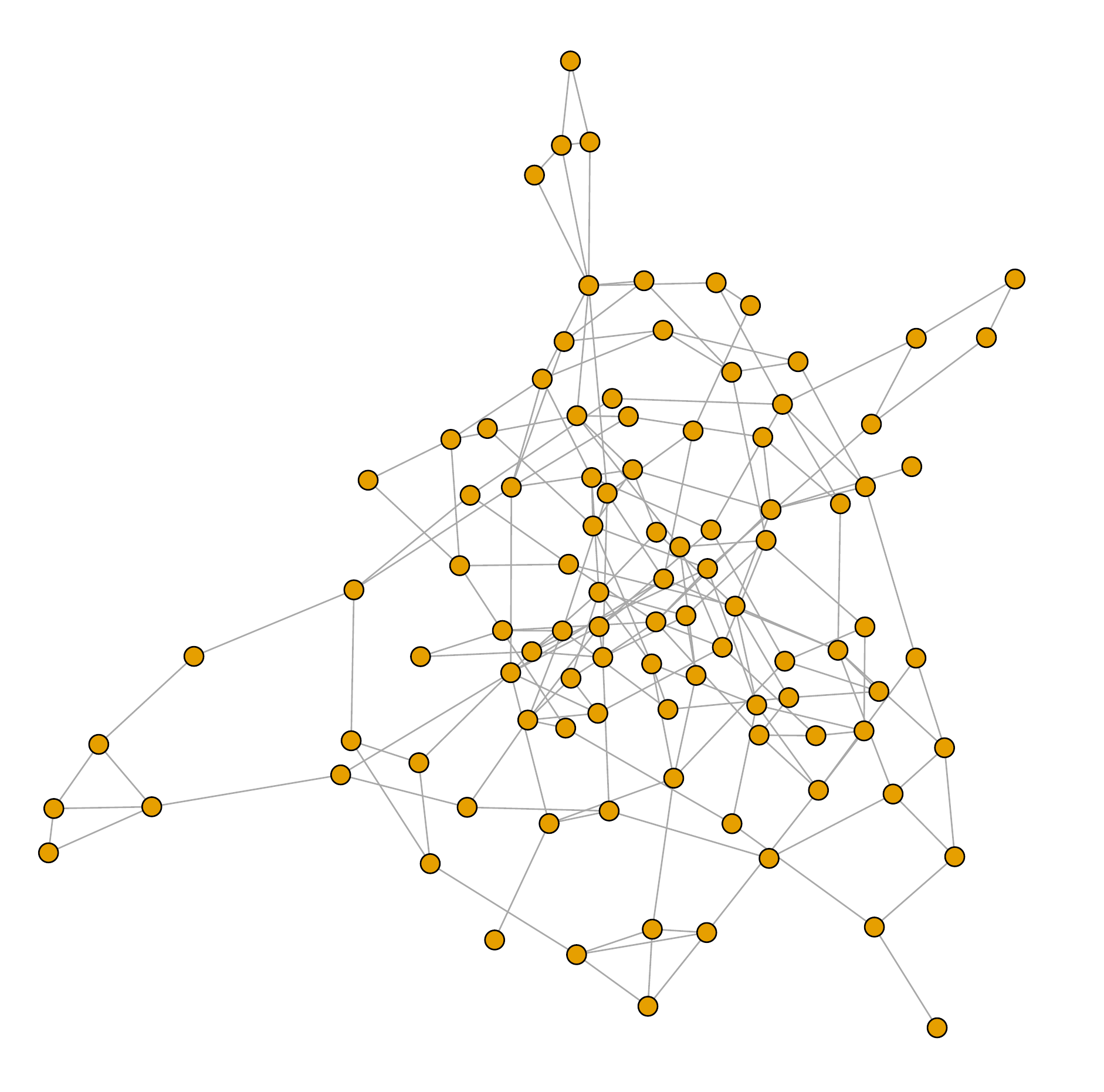}
    \end{subfigure}
    \caption{Left: thresholded graph from graphical lasso output on riboflavin data, used in semisynthetic experiment (see Table~\ref{table:semisynthetic}). Right: unthresholded graph output by \textsc{GreedyAndPrune} on Riboflavin data.}
    \label{fig:glasso-graph}
\end{figure}

\bibliographystyle{plain}
\bibliography{bib}
\appendix
\section{Some difficult examples}\label{apdx:hard-examples}
A natural question, given our previous results, is whether the simple greedy and prune algorithm could possibly learn all $\kappa$-nondegenerate GGMs with $O(\log n)$ sample complexity. Note by the analysis from Section~\ref{sec:greedy-l1} that if our greedy algorithm fails in an example, then any analysis based on bounded $\ell_1$-norm must also fail since greedy succeeds under a bounded $\ell_1$-norm assumption.

It is not too hard to find examples which break these algorithms when we view them as being run once from a single node, with the goal of recovering just that node's neighborhood. For example, if we take $n$ pairs of near-duplicate variables $(X_i,X'_i)$ with $\Var(X_i) = \Theta(n)$ and $\Var(X_i - X'_i) = \Theta(1)$ and define $Y = X_i - X'_i$ for some $i$, then using OMP to find a predictor of $Y$ will fail to find the edge from $X_i$ to $Y$ with $O(\log n)$ samples. However, if we run a greedy method to find a predictor of $X_i$, then we actually will discover this edge. In the following example, we see there are edges which are not discovered from either direction:
\begin{example}[Example breaking \textsc{GreedyAndPrune}]\label{example:break-greedy}
Fix $d > 2$ and let $Z_1,\ldots,Z_d$ be the result of taking $d$ i.i.d. Gaussians and conditioning on $\sum_i Z_i = 0$. Define $X_i = Z_i + \delta W_i$ and $Y_i = Z_i + \delta W'_i$ where $W_i,W'_i \sim N(0,1)$ independently. Let $\Sigma_0$ be the covariance matrix of $X_1,\ldots,X_d,Y_1,\ldots,Y_d$ (so the $Z$ are treated as latent variables).

It can be checked that the GGM with covariance matrix $\Sigma_0$ remains $\kappa$ nondegenerate for a fixed $\kappa$ even as $\delta$ is taken arbitrarily small.
Now consider the GGM which is block diagonal with first block $\Sigma_0$ and the second block the identity matrix, and suppose $n$ is large. If we try to learn the neighbors of $X_i$, greedy will with high probability fail to find a superset of the correct neighborhood of node $X_i$, because after conditioning on $Y_i$, the angles between the residual of $X_i$ and all of the other random variables are all near 90 degrees (going to 90 as $\delta \to 0$).
\end{example}
\begin{remark}
Part of the motivation for the use of nearly-duplicated random variables is that one can prove (using essentially a modified version of Lemma~\ref{lem:hybrid-greedy-step})) that in a general sparse GGM there always exists at least one node $i$ with at least one neighbor $j$ such that $\Var(X_i | X_j)$ is noticeably smaller than $\Var(X_i)$. In this example, this is trivially true but is not useful for discovering connections between unpaired variables.
\end{remark}
\begin{example}[Harder Example]\label{example:possibly-hard}
The previous example, while it breaks \textsc{GreedyAndPrune}, cannot be a hard example in general because the edge structure is easy to determine from the covariance matrix. (The covariance matrix is roughly block diagonal and each block corresponds to a clique). The following variant seems significantly harder: start with $\Sigma_0$ from the previous example, and then Schur complement (i.e. condition) out $d/4$ many of the nodes to yield $\Sigma'_0$. Then the covariance matrix of the whole model is block diagonal with $\Sigma'_0$ repeated $n/(d/4)$ times. Finally, we randomly permute the rows/columns.
\end{example}
Experimentally, it seems that Example~\ref{example:possibly-hard} breaks the methods considered in our experiments in the high-dimensional regime where the number of samples is much less than the dimension $n$. However, this example itself cannot be computationally hard to learn: a simple algorithm to learn it thresholds the covariance matrix to find the sub-blocks made up of the paired nodes from a block, then picks a sub-block, conditions it out, and finds the remaining nodes from this block as the nodes whose conditional variance went down significantly. 

\end{document}